\newcommand{\ignore}[1]{}
\newcommand{\notinproc}[1]{#1}
\newcommand{\onlyinproc}[1]{}
\newtheorem{thm}{Theorem}[section]
\newtheorem{lemma}[thm]{Lemma}
\def\vecf{\boldsymbol{f}}
\def\vecc{\boldsymbol{c}}
\def\vecv{\boldsymbol{v}}
\def\vecu{\boldsymbol{u}}
\newcommand{\E}{\mathbb{E}}
\def\ind{{\sc ind}}
\def\coo{{\sc coo}}
\def\coolsh{{\sc coo$+$LSH}}
\def\opt{{\sc coo$+$OptLSH}}
\def\mix{{\sc mix}}
\def\movielens{{\sc movielens1M}}
\def\amazon{{\sc Amazon}}
\DeclareMathOperator{\Prod}{\Pi}
\DeclareMathOperator{\Exp}{Exp}
\DeclareMathOperator{\cossim}{\cos_{\text{sim}}}
\icmltitlerunning{Self-Similar Epochs}
\begin{document}

\twocolumn[
\icmltitle{Self-Similar Epochs: Value in Arrangement}

\icmlsetsymbol{equal}{*}

\begin{icmlauthorlist}
\icmlauthor{Eliav Buchnik}{equal,tau,gooAI}
\icmlauthor{Edith Cohen}{equal,gooAI,tau}
\icmlauthor{Avinatan Hassidim}{gooAI}
\icmlauthor{Yossi Matias}{gooAI}
\end{icmlauthorlist}

\icmlaffiliation{gooAI}{Google Research}
\icmlaffiliation{tau}{Tel Aviv University, Israel}

\icmlcorrespondingauthor{Eliav Buchnik}{eliavbuh@gmail.com}
\icmlcorrespondingauthor{Edith Cohen}{edith@cohenwang.com}

\icmlkeywords{self-similar epochs, microbatches, Jaccard similarity,
  Stochastic Gradient Descent, Metric Embeddings}

\vskip 0.3in
] 

\printAffiliationsAndNotice{\icmlEqualContribution}

\ignore{
\author{
  Eliav Buchnik\thanks{Contact authors}\\
  Tel Aviv University\\ Google Research, Israel\\
  \texttt{eliavbuh@gmail.com}
  \And
  Edith Cohen$^*$\\
  Google Research USA\\
  Tel Aviv University\\
  \texttt{edith@cohenwang.com}\\
  \And
  Avinatan Hassidim\\
  Google Research, Israel\\
  \And
  Yossi Matias\\
  Google Research, Israel
  }
}
\ignore{
\author{
  Eliav Buchnik\thanks{Contact authors}\\
  Tel Aviv University\\ Google Research, Israel\\
  \texttt{eliavbuh@gmail.com}
  \and
  Edith Cohen$^*$\\
  Google Research USA\\
  Tel Aviv University\\
  \texttt{edith@cohenwang.com}\\
  \and
  Avinatan Hassidim\\
  Google Research, Israel\\
  \and
  Yossi Matias\\
  Google Research, Israel
  }}


\newcommand{\fix}{\marginpar{FIX}}
\newcommand{\new}{\marginpar{NEW}}

\begin{abstract}
  Optimization of machine learning models is commonly performed through stochastic gradient updates on 
  randomly ordered training examples. This practice means that
  sub-epochs comprise of independent
  random samples of the training data that may not preserve informative
  structure present in the full data.  We hypothesize that the
  training can be more effective with {\em self-similar}  arrangements that potentially allow each epoch to
  provide benefits of multiple ones.
We study this for  ``matrix factorization'' -- the common task of learning metric
embeddings of entities such as queries, videos,  or words from 
example pairwise associations.
We construct arrangements that 
preserve the weighted Jaccard similarities of 
rows and columns and experimentally observe
training acceleration of 3\%-37\% on synthetic and
recommendation datasets.  Principled arrangements of training examples emerge
as a novel and potentially powerful enhancement to SGD that merits further exploration.


\end{abstract}
 
 \section{Introduction}
 
 Large scale machine learning models are commonly trained on data of the
 form of associations between entities.
The goals are to obtain a model that generalizes (supports inference of
associations not present in the input data) or obtain metric representations
of entities that capture their associations and can be used 
in downstream tasks. 
Examples of such data are images and their labels \cite{imagenet:CVPR2009}, 
 similar image pairs \cite{facenet:cvpr2015},
 text documents and occurring terms
 \cite{BERRY95,DUMAIS95,DEERWESTER90},  users and watched or rated videos
\cite{Koren:IEEE2009},  pairs of co-occurring words
\cite{Mikolov:NIPS13}, and pairs of nodes in a
graph that co-occur in short random walks
\cite{deepwalk:KDD2014}. 
This setting is fairly broad: entities can be of one or multiple
types and example associations used for training can be raw  or
preprocessed by
reweighing raw frequencies
\cite{SaltonBuckley1988,DEERWESTER90,Mikolov:NIPS13,PenningtonSM14:EMNLP2014}
or adding negative examples when the raw data includes only positive
ones \cite{Koren:IEEE2009,Mikolov:NIPS13}.

The optimization objective of the model parameters has the general form  of 
a sum over example associations.  In modern 
 applications the number of terms can be huge and
the de facto method is stochastic gradient descent (SGD)
\cite{SGDbook:1971,Koren:kdd2008,Salakhutdinov:ICML2007,Gemulla:KDD2011, 
  Mikolov:NIPS13}. With SGD, gradient updates computed over
stochastically-selected minibatches of training examples are 
performed over multiple epochs.
The extensive practice and theory of SGD optimization introduced numerous tunable hyperparameters and 
extensions aimed to improve quality and efficiency.  These include
tuning the learning rate also per-parameter 
\cite{Duchi_ADAGRAD:JMLR2011} and altering the 
distribution of training examples by
gradient magnitudes
\cite{AlainBengioSGD:2015,ZhaoZhang:ICML2015}, cluster structure
\cite{clusterSGD:AISTATS2017}, and diversity criteria
\cite{DPPSGD:UAI2017}.  Another popular method, Curriculum Learning
\cite{curriculumlearning:ICML2009}, alters the distribution of examples (from easy to hard) in
the course of training.  
In this work we motivate and explore the potential benefits of tuning the
{\em arrangement} of training examples -- a  novel optimization method that
is combinable with those mentioned above.

The baseline practice, which we refer to as {\em independent}
arrangements, forms the training order by drawing
examples (or 
minibatches of examples) independently at random according to prescribed probabilities
(that may correspond to the frequency of the association in the training set). 
This i.i.d practice is supported by optimization theory as it bounds the
variance of stochastic gradient updates. 
Independent arrangements, however, have a potential drawback:
Informative structure that is present in the
full data is not recoverable from the independent random samples that comprise
sub-epochs. 
We demonstrate this point for a common ``matrix factorization'' task
where the data is pairwise
associations such as users watches of videos for
recommendation tasks \cite{Koren:IEEE2009} and
term co-occurrences
for word embeddings \cite{Mikolov:NIPS13}.  The rows and columns
represent entities
and entries are example associations (viewer,video)  or (word,word).
 \begin{figure}[h]
   \begin{center}
     \begin{tabular}{cc}
 \includegraphics[width=0.2\textwidth]{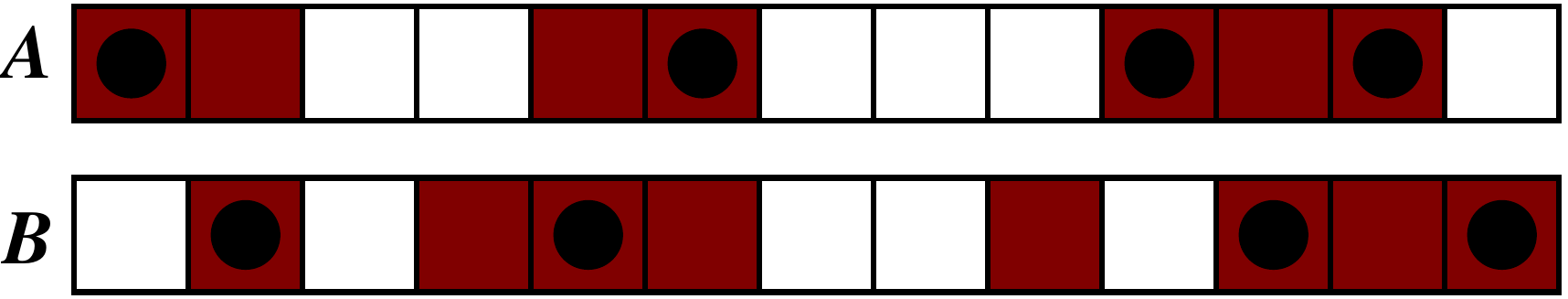} &
                                                      \includegraphics[width=0.2\textwidth]{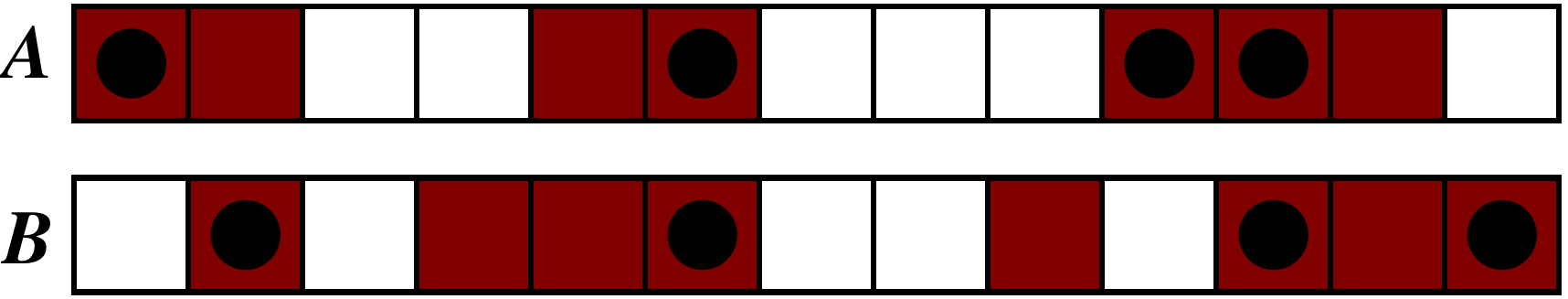}\\
       Independent arrangement & Coordinated arrangement
   \end{tabular}                             
\end{center}  
\caption{Two rows ($A$ and $B$), where entries in red are positive and
  equal.  The Jaccard similarity is $J(A,B)=1/2$ (5 common
 positive columns out of 10 that are positive for at least one).  A sub-epoch includes a random sample from each row. With
  independent arrangements the 4 samples from each row are unlikely to align on the
  common columns resulting in empirical Jaccard similarity of $0$.  With
our coordinated arrangement the samples align and
the empirical Jaccard similarity is $1/2$.}
\label{indcoo:fig}
\end{figure}
In such data  (see Figure~\ref{indcoo:fig}), the
similarity of two rows (or columns)
is indicative of the similarity of the corresponding entities that our
model is out to capture. For example, two
videos with overlapping sets of viewers are likely to be similar.
While the target similarity we seek is typically more complex and in
particular reflects higher order relations (sets of similar but not
overlapping viewers), this ``first order'' similarity is nonetheless
indicative.
In a random sample of matrix entries, however, two similar
rows will have dissimilar samples: The expected
empirical weighted
Jaccard similarity on the sample is much lower than the respective
similarity in the data, and this happens even at the extreme where the
sample is a large fraction of the full data (say half an epoch) and we are considering two
identical rows (!).
For our training this means that sub-epochs rapidly lose this important information that is present
in the full dataset.  
We hypothesize that this may impact the effectiveness of
training:  An arrangement that is more ``self-similar'' in the
sense that information is preserved to a higher extent in sub-epochs 
may allow a single epoch to provide benefits of multiple ones and for the training to converge
faster. 

We approach this by designing
{\em coordinated} arrangements that preserve in
expectation in sub-epochs the weighted Jaccard similarities of
rows and columns.
Our design is inspired by the theory of coordinated weighted
sampling~\cite{KishScott1971,BrEaJo:1972,multiw:VLDB2009,sdiff:KDD2014} which are
related to MinHash sketches~\cite{ECohen6f,Broder:CPM00}. In
coordinated sampling the 
goal is to select samples of entries of vectors that can provide more
accurate estimates of relations between the vectors than independent samples.  In our
application here we will construct arrangements of examples where
subsequences look like coordinated samples.
\ignore{

In this work we introduce principled schemes that control the
{\em arrangement} of examples within a training epoch.
Note that 

We make a novel case here for an antithesis of independent
arrangements which we term {\em coordinated}
arrangements.  Coordinated arrangements are much more likely to place corresponding associations
in the same minibatch.
We show that coordination offers different upsides:  At the micro
level, updates are more effective in pulling vectors of similar
entities closer.  At a macro level, the examples in small sub-epochs encode (in expectation) the similarity structure in the full
set of example associations
whereas independent arrangement disperse that information.  This
``self similarity'' of the training sequence effectively allows a single epoch to act as multiple passes.
}

We specify our coordinated arrangements by a distribution on
randomized subsets  of example associations which we refer to as {\em
  microbatches}.   Our training sequence consists of 
independent microbatches and thus retains the traditional
advantages of i.i.d training  at the coarser microbatch level.
Note that our microbatches are designed so that the probability that each example is placed in a microbatch is equal 
to its prespecified baseline marginal probability.  Therefore, the only difference
between coordinated and independent arrangements is in the ordering.


In some applications or training regimes smaller 
microbatches, which allow for more independence,  can be more effective.
Our coordinated microbatches are optimized in size to preserve expected
similarities.  Microbatch sizes can be naively decreased by random
partitions  --  but this break down the similarity approximation and
more so for similar pairs, which are exactly the ones for which 
the benefits of preserving similarities are larger.
We show how Locality Sensitive Hashing (LSH) maps can be used to
decrease microbatch sizes
in a targeted way that  compromises more the  less
similar pairs. 
We explore LSH maps that leverage coarse
available proxies of entity similarity:  The weighted Jaccard similarity of
the row and column vectors or angular similarity of an embedding obtained by a weaker model.

We design efficient generators of coordinated and LSH-refined
microbatches and study the effectiveness of different arrangements through experiments on
synthetic stochastic block matrices and on recommendation data sets.
We use the popular
Skip Gram with Negative Sampling (SGNS) loss objective \cite{Mikolov:NIPS13}.
We observe consistent
training gain of 12-37\% on blocks and of 3\%-12\% on our real data sets
when using coordinated arrangements.



The paper is organized as follows.
Section~\ref{prelim:sec} presents necessary background on the loss
objective we use in our experiments 
and working with minibatches with one-sided gradient
updates and selection of negative examples. 
In Section~\ref{arrange:sec} we present our
coordinated microbatches and in Section~\ref{cooprop:sec} we establish
their properties. Our LSH refinements are presented in
Section~\ref{LSH:sec} and our experimental results are reported in
Sections~\ref{experiments:sec} and~\ref{selection:sec}.
 We conclude in Section~\ref{conclu:sec}.
 



\ignore{
Latent semantic analysis of text documents 
based on Singular value decomposition (SVD) that essentially reduce 
sparse large word occurrence vectors to dense embedding vectors 
\cite{BERRY95,DUMAIS95,DEERWESTER90}.  
 Netflix prize for developing recommendation engine of movies 
 based on watch history.    The winning method ~\cite{Koren:IEEE2009}

 More to mention:

Alternating minimization \cite{AltMin:1984}

 Bayesian matrix factorization using MCMC \cite{AhnWelling:KDD2015}.
}


\section{Preliminaries} \label{prelim:sec}

Our data has the form of associations
between a {\em focus} entity from a set $F$ and a {\em context} entity
from a set $C$.  The focus and context entities
can be of different types (users and videos) or two roles of the
same type or even of the same set (as in word embeddings).
We use $\kappa_{ij}$ as the association strength between focus $i$ and
context $j$.  In practice, the association strength can be derived from
frequencies in the raw data or from an associated value (for example,
numeric rating or watch time).

An embedding is a set of vectors 
$\vecf_i,\vecc_j \in \Re^d$ that is trained to minimize a loss
objective that encourages $\vecf_i$ and $\vecc_j$ to be ``closer''
when $\kappa_{ij}$ is larger.
Examples of positive associations $(i,j)$ are drawn with probability proportional to $\kappa_{ij}$.
Random associations are then used as negative examples
\cite{HuKorenV:2008} that provide an ``antigravity'' 
effect that prevents all embeddings from collapsing into the same vector. 
The weight
\begin{equation} \label{negweights:eq}
n_{ij} := \lambda \|\kappa_{\cdot j} \|_1   \|\kappa_{i
  \cdot}\|_1 / \|\kappa\|_1
\end{equation}
of a negative example $(i,j)$ is proportional to
the product of its column sum $\|\kappa_{\cdot j} \|_1$  by its row sum
$\|\kappa_{i  \cdot}\|_1$.  The hyperparameter $\lambda$ specifies a
ratio of negative to positive examples.


Our design applies to objectives of the general form
\begin{equation}\label{genform:eq}
 L :=\sum_{ij} \kappa_{ij} L_+(\vecf_i,\vecc_j) +  \sum_{ij }
 n_{ij} L_{-}(\vecf_i,\vecc_j)
 \end{equation}
  and can also accomodate hidden parameters as in
  \cite{Bromley:NIPS1994,ChopraHL:CVPR2005}.
For concreteness, we focus here on Skip Gram with Negative Sampling (SGNS) 
 \cite{Mikolov:NIPS13}. 
The SGNS objective is designed to maximize the log likelihood of these examples.
The probability of positive and negative examples are respectively
modeled using
\begin{align*}
 p_{ij} &=&\sigma(\vecf_i\cdot \vecc_j)= \frac{1}{1+\exp(-\vecf_i
  \cdot \vecc_j)}\ \\
   1-p_{ij} &=& \sigma(-\vecf_i\cdot \vecc_j)= \frac{1}{1+\exp(\vecf_i
  \cdot \vecc_j)}\ .
\end{align*}
The likelihood function, which we seek to maximize,  can then be expressed as
$\Prod_{ij} p_{ij}^{\kappa_{ij}} \Prod_{ij}
  (1-p_{ij})^{n_{ij}}\ .$
We equivalently can minimize the negated log likelihood that turns the
objective into a sum of the form \eqref{genform:eq}:
  \[ L:= -\sum_{ij} \kappa_{ij} \log p_{ij}-  \sum_{ij } n_{ij} \log(1-p_{ij})\ .\]
(using $L_+(\vecf_i,\vecc_j)  := -\log \sigma(\vecf_i\cdot\vecc_j)$ and
$L_-(\vecf_i,\vecc_j) := -\log \sigma(-\vecf_i\cdot \vecc_{j})$.)

The optimization is performed by
random initialization of the embedding vectors followed by stochastic gradient updates.
The stochastic gradients are computed for minibatches of examples that
include $b$ positive examples, where $(i,j)$ appears with frequency 
$\kappa_{ij}/\|\kappa\|_1$ and a set of $b\lambda$ negative examples.

\subsection{One-sided updates}
We work with {\em one-sided} updates, where each minibatch
updates only its focus or only its context embedding vectors, and  accordingly say that minibatches are {\em designated} for focus or context updates.
One-sided updates are used with alternating minimization
\cite{AltMin:1984} and decomposition-coordination
approaches \cite{Cohen:JOPT1980}.  For our purposes, one-sided updates facilitate our
coordinated arrangements (intuitively, because we need to separately
preserve column and row similarities)
and also  allow for precise minibatch-level matching of each
positive update of a parameter with a corresponding set of negative
updates as a means to control variance.

Our minibatches are constructed from a set $P$ of $b$ positive
examples and matched negatives.
Our marginal probabilities of positive and negative examples (see
Eq.~\ref{negweights:eq}) are equivalent to pairing
each positive example $(i,j)$ (with marginal probability $\kappa_{ij}/ \|\kappa\|_1$) with (i) $\lambda$ negative 
examples of the form $(i,j')$ where $j'$ is a random context entities 
(selected proportionally to the column sum $\|\kappa_{\cdot j'}\|/\|\kappa\|_1$ and (ii) $\lambda$
negative examples of the form $(i',j)$ where $i'$ are 
random focus entities $i'$ (selected proportionally to their row sums 
$\|\kappa_{i' \cdot }\|_1/\|\kappa\|_1$).
With one-sided updates, we pair each positive example $(i,j)\in P$ with $\lambda$
negative examples selected according to the respective designation.
To form a focus-updating minibatch, we generate a random set of
$\lambda$ context vectors $C'$. For each positive example $(i,j)\in P$ we generate $\lambda$ negative examples
$(i,j')$ for $j'\in C'$.  The focus embedding $\vecf_i$ is updated to
be closer to $\vecc_j$ but at the same time repealed (in expectation)
from $C'$ context vectors. With learning rate $\eta$, the combined
update to $\vecf_i$ due to positive example $(i,j)$ and matched
negatives is
\begin{equation*}
 \Delta \vecf_i = - \eta
\nabla_{\vecf_i}\left( L_+(\vecf_i , \vecc_j) + \sum_{j'\in
    C'} L_{-}(\vecf_i, \vecc_{j'}) \right)\ .
\end{equation*}
Symmetrically,  to form a context-updating minibatch we draw a random set
of focus vectors $F'$ and generate respective negative examples. Each positive
example $(i,j)\in P$ yields an
update of context vector $\vecc_j$ by
{\small $ \Delta \vecc_j = - \eta \nabla_{\vecc_j}\left(L_+(\vecf_i,\vecc_j) +
    \sum_{i'\in F'} L_-(\vecf_{i'},\vecc_{j}) \right).$}
All  updates are combined and applied at the end of the minibatch.

\section{Arrangement Schemes} \label{arrange:sec}

Arrangement schemes determine
how examples are organized.
At the core of each scheme is a distribution
$\mathcal{B}$ over subsets of positive
examples which we call {\em microbatches}.  Our microbatch distributions have the property
that the marginal probability of each example $(i,j)$ is always equal
to $\kappa_{ij}/\|\kappa\|_1$ but subset probabilities vary across
schemes. Moreover, 
within a scheme we may have
different distributions
$\mathcal{B}_f$ for focus and $\mathcal{B}_c$ for context
designations.

Minibatches formation for focus updates is specified in
Algorithm~\ref{makeMB:alg}  (the construction for context updates is symmetric).  The input is a microbatch distribution $\mathcal{B}_f$, minibatch size parameter $b$, and a parameter $\lambda$ that determines the ratio of negative to positive training examples.
We draw independent microbatches until we have
a total of $b$ or more positive examples and then select negative
examples as described above.
When training,  we alternate between focus and context updating
minibatches to maintain balance between the total number of examples
processed with each designation.
\begin{algorithm2e}[h]
  {\small
    \caption{Minibatch construction (Focus updates) \label{makeMB:alg}}
\DontPrintSemicolon  
\KwIn{$\mathcal{B}_f$, $b$, $\lambda$
\tcp*[h]{Microbatch distribution, size, negative sampling}}
$P,N \gets \emptyset$\;
\lRepeat{$|P|\geq b$}{$X\sim \mathcal{B}_f$; $P\gets P\cup X$\;}
$C' \gets$ $\lambda$ contexts selected iid by column weights\;
\ForEach{example pair $(i,j)\in P$}{
  \ForEach{$j'\in C'$}{$N\gets N\cup \{(i,j')\}$}
  }}
\Return{$P\cup N$}  
\end{algorithm2e}

The baseline independent arrangement method (\ind) can be placed in
this framework using microbatches that consist
of a single positive example $(i,j)$ selected with probability
$\kappa_{ij}/\|\kappa\|_1$ (see Algorithm~\ref{MBIndGlobal:alg}).
Our coordinated microbatches (\coo) have different distributions
for focus and context updates.
Algorithm~\ref{MBCooGlobal:alg} generates focus microbatches
(the generator for  context designation is symmetric).
These microbatches have the form of a set of positive examples with a
shared context.  In the instructive special case of $\kappa$ with all-equal positive 
entries focus microbatches include all positive entries in some column
and context microbatches include all positive entries in a raw.

We preprocess $\kappa$ so that we can efficiently draw $j$ with
probability $\|\kappa_{
  \cdot j}\|_\infty/\sum_h \|\kappa_{\cdot h}\|_\infty$ and construct an 
index that 
for context $j$ and value $T$ efficiently  returns  all entries $i$
with $\kappa_{ij}\geq T$.   The preprocessing is linear in the 
sparsity of $\kappa$ and with it the microbatch 
generator amounts to drawing a context $j$ 
(an$O(1)$ operation), $u\sim U[0,1]$ and then query the index with
$j$ and $T=u \|\kappa_{
  \cdot j}\|_\infty$.
The preprocessing cost for microbatch generation is often dominated
by the preprocessing done to generate $\kappa$ from raw data.

\begin{algorithm2e}[h]
  {\small 
\caption{\ind\ microbatches\label{MBIndGlobal:alg}}
\KwIn{$\kappa$}
Choose $(i,j)$ with probability 
$\kappa_{ij}/\|\kappa\|_1$\; 
\Return{$\{(i,j)\}$}
}
\end{algorithm2e}  
\begin{algorithm2e}[h]
{\small 
\DontPrintSemicolon 
\caption{\coo\  microbatches (Focus updates) \label{MBCooGlobal:alg}}
\KwIn{$\kappa$}
\tcp{Preprocessing:}
\ForEach{context $j$}{
$M_j \gets \max_i   \kappa_{ij}$ \tcp*[h]{Maximum entry for context 
  $j$}\; 
  Index column $j$ so that we can return for each $t\in (0,1]$, 
$P(j,t) := \{i \mid \kappa_{ij}\geq t M_j\}$.\;}
\tcp{Microbatch draw:}
Choose a context $j$ with probability $\frac{M_j}{\sum_h M_h}$\; 
Draw $u \sim U[0,1]$\; 
\Return{$\{(i,j) \mid i\in P(j, u)\}$}
}
\end{algorithm2e}
\section{Properties of \coo\ Arrangements} \label{cooprop:sec}

We establish that \coo\ arrangements produce the same marginal
distribution on training examples as the baseline \ind\ arrangements. 
We then highlight two properties of coordinated arrangements that are beneficial to accelerating convergence: 
A micro-level property that
makes gradient updates more effective by moving embedding vectors of similar entities 
closer and a macro-level property of
preserving expected similarity in sub-epochs.

\paragraph{Marginal distribution}
We show that the occurrence frequencies of examples $(i,j)$  in \coo\
microbatches (of either designation) is
$\propto \kappa_{ij}$.  
\begin{lemma}
The inclusion probability of a positive example $(i,j)$ in
a coordinated microbatch with focus designation (Algorithm~\ref{MBCooGlobal:alg}) is
$\kappa_{ij}/ \sum_h \|\kappa_{\cdot h}\|_{\infty}$, where
the notation $\|\kappa_{\cdot 
  h}\|_\infty$ is the maximum entry in column $h$.
Respectively, the inclusion probability of $(i,j)$ in
a microbatch with context designation is $\kappa_{ij}/\sum_h 
\|\kappa_{h\cdot}\|_\infty$, where $\|\kappa_{h\cdot}\|_\infty$ is the
maximum entry at row $h$. 
\end{lemma}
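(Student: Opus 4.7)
The plan is to proceed by a direct conditioning argument on the random choices made inside Algorithm~\ref{MBCooGlobal:alg}. The microbatch is determined by two independent draws: (i) a context $j^\star$ chosen with probability $M_{j^\star}/\sum_h M_h$, and (ii) a threshold $u \sim U[0,1]$. Once both are fixed, the microbatch is the deterministic set $\{(i, j^\star) \mid \kappa_{i j^\star} \geq u M_{j^\star}\}$. So I would compute the inclusion probability of a fixed pair $(i,j)$ by summing over the values of the two random variables for which $(i,j)$ ends up in the output.

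First I would observe that $(i,j)$ can only be included when the sampled context equals $j$, which contributes the factor $M_j / \sum_h M_h$. Then, conditional on $j^\star = j$, the pair $(i,j)$ is included iff $\kappa_{ij} \geq u M_j$, i.e.\ iff $u \leq \kappa_{ij}/M_j$. Since $\kappa_{ij} \leq M_j$ by definition of $M_j = \|\kappa_{\cdot j}\|_\infty$, the ratio $\kappa_{ij}/M_j$ lies in $[0,1]$, so this conditional probability is exactly $\kappa_{ij}/M_j$. Multiplying the two factors and using independence of the context draw and of $u$ gives
\[
\Pr[(i,j)\in X] \;=\; \frac{M_j}{\sum_h M_h}\cdot \frac{\kappa_{ij}}{M_j} \;=\; \frac{\kappa_{ij}}{\sum_h \|\kappa_{\cdot h}\|_\infty},
\]
which is the claimed expression.

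For the context-designation case, I would simply note that the symmetric algorithm operates on rows instead of columns: it selects row $i^\star$ with probability proportional to $\|\kappa_{i^\star\cdot}\|_\infty$ and thresholds by $u\,\|\kappa_{i^\star\cdot}\|_\infty$. The same two-line calculation then yields $\kappa_{ij}/\sum_h \|\kappa_{h\cdot}\|_\infty$.

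There is no real obstacle here; the only subtlety to flag is the use of $\kappa_{ij}\leq M_j$ to ensure the conditional probability on $u$ is $\kappa_{ij}/M_j$ rather than $\min(1,\kappa_{ij}/M_j)$, and the independence of the two randomizations inside the microbatch draw so that the probabilities multiply cleanly.
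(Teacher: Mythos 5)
Your proposal is correct and follows essentially the same two-step conditioning argument as the paper: multiply the probability $M_j/\sum_h M_h$ of drawing context $j$ by the conditional probability $\kappa_{ij}/M_j$ that $u \leq \kappa_{ij}/M_j$, then invoke symmetry for the context designation. Your explicit remark that $\kappa_{ij}\leq M_j$ guarantees this ratio lies in $[0,1]$ is a small but welcome addition the paper leaves implicit.
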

\begin{proof}
Consider focus updates (apply a symmetric argument for context updates). 
The example $(i,j)$ is selected when first 
context $j$ is selected, which happens with probability 
$\|\kappa_{\cdot j}\|_{\infty} / \sum_h \|\kappa_{\cdot h}\|_{\infty} $ and then we have 
$u \leq \kappa_{ij} / \|\kappa_{\cdot j}\|_{\infty} $ for independent $u\sim U[0,1]$, which happens with 
probability $\kappa_{ij} / \|\kappa_{\cdot j}\|_{\infty} $.  Combining,  the probability that 
$(i,j)$ is selected is the product of the 
probabilities of these two events which is 
$\kappa_{ij}/\sum_h \|\kappa_{\cdot h}\|_{\infty} $. 
\end{proof}
Our arrangements consist of both focus and context microbatches
that balance the total number of examples in each designation.
Therefore, each example appears in the same frequency with each designation.

\paragraph{Alignment of corresponding examples}
Our \coo\ microbatches maximize the co-placement probability of 
{\em corresponding} pairs of examples (examples with shared context or
focus) (for arrangements that respect the marginal distribution):
\begin{lemma} \label{sync:lemma}
  If a focus-designation  \coo\ microbatch includes an example $(i,j)$ and 
$\kappa_{ij} \leq \kappa_{i'j}$ then it also includes the example
$(i',j)$.  Symmetrically with context-designation, $(i,j)$ being
included and
$\kappa_{ij} \leq \kappa_{ij'}$ implies that $(i,j')$ is also included.
\end{lemma}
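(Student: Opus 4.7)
The plan is to unwind the construction in Algorithm~\ref{MBCooGlobal:alg} and observe that every focus-designation microbatch is determined by a single column $j$ and a single threshold value, so the membership condition for $(i,j)$ is monotone in $\kappa_{ij}$. Concretely, I would first fix notation: condition on the event that the sampled context in step~1 is some particular $j$ and that the uniform variable drawn in step~2 is some particular $u\in[0,1]$. Under this conditioning the microbatch equals $\{(i,j)\mid i\in P(j,u)\}=\{(i,j)\mid \kappa_{ij}\geq u\,M_j\}$, by the definition of $P(j,t)$.

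From here the argument is immediate. If $(i,j)$ lies in the microbatch then $\kappa_{ij}\geq u\,M_j$. Given the hypothesis $\kappa_{ij}\leq \kappa_{i'j}$ we conclude $\kappa_{i'j}\geq u\,M_j$, so $i'\in P(j,u)$, so $(i',j)$ is in the microbatch as well. Note that this uses only that the two examples share the context $j$, which is why the sampling is performed per-column in the focus designation. The symmetric statement for context microbatches follows by the symmetric construction (sample a row $i$ with probability $\|\kappa_{i\cdot}\|_\infty/\sum_h\|\kappa_{h\cdot}\|_\infty$, draw $u$, include all $(i,j')$ with $\kappa_{ij'}\geq u\|\kappa_{i\cdot}\|_\infty$), and the same monotonicity argument applies with $j$ and $j'$ swapped into the second coordinate.

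There is essentially no technical obstacle: the result is a direct consequence of the fact that the coordinated sampler uses a \emph{single} threshold relative to the per-column (resp.\ per-row) maximum, as opposed to independent thresholds per entry, which is precisely the design choice that distinguishes \coo\ from \ind. The only thing to be careful about in writing the proof is to keep the conditioning on $(j,u)$ explicit, so that the implication ``$(i,j)$ included $\Rightarrow$ $(i',j)$ included'' is read pointwise in the sample space rather than as a statement about marginal probabilities.
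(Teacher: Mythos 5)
Your proof is correct and matches the paper's (implicit) reasoning: the paper states Lemma~\ref{sync:lemma} without a written proof, treating it as immediate from the definition of $P(j,t)=\{i \mid \kappa_{ij}\geq t M_j\}$ in Algorithm~\ref{MBCooGlobal:alg}, and your single-threshold monotonicity argument is exactly that intended justification. Nothing is missing.
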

We argue that this property provides some
implicit regularization that encourages embeddings of entities 
with corresponding examples to be closer.
In particular, an aligned pair of updates on corresponding examples
tends to pull the embedding vectors closer.
 A pair of entities with higher Jaccard 
similarity has a larger fraction of corresponding examples and benefit more from
alignment.
Interestingly, the benefit is there even when embedding vectors are
random, as is the case early in training.
In particular, the SGNS loss term for a positive example is 
$L_+(\vecf,\vecc) = -\log \sigma(\vecf,\vecc) = -\log\left(
  \frac{1}{1+\exp(-\vecf\cdot\vecc)}\right).$
The gradient with respect to $\vecf$  is 
$\nabla_{\vecf}(L_+(\vecf,\vecc))=-\vecc 
\frac{1}{1+\exp(\vecf\cdot\vecc)}$ and the respective update of
$\vecf' \gets \vecf + \eta \frac{1}{1+\exp(\vecf\cdot\vecc)} \vecc$
clearly increases 
$\cossim(\vecf,\vecc)$.
\begin{wrapfigure}{l}{0.2\textwidth}
  \begin{center}
    \includegraphics[width=0.2\textwidth]{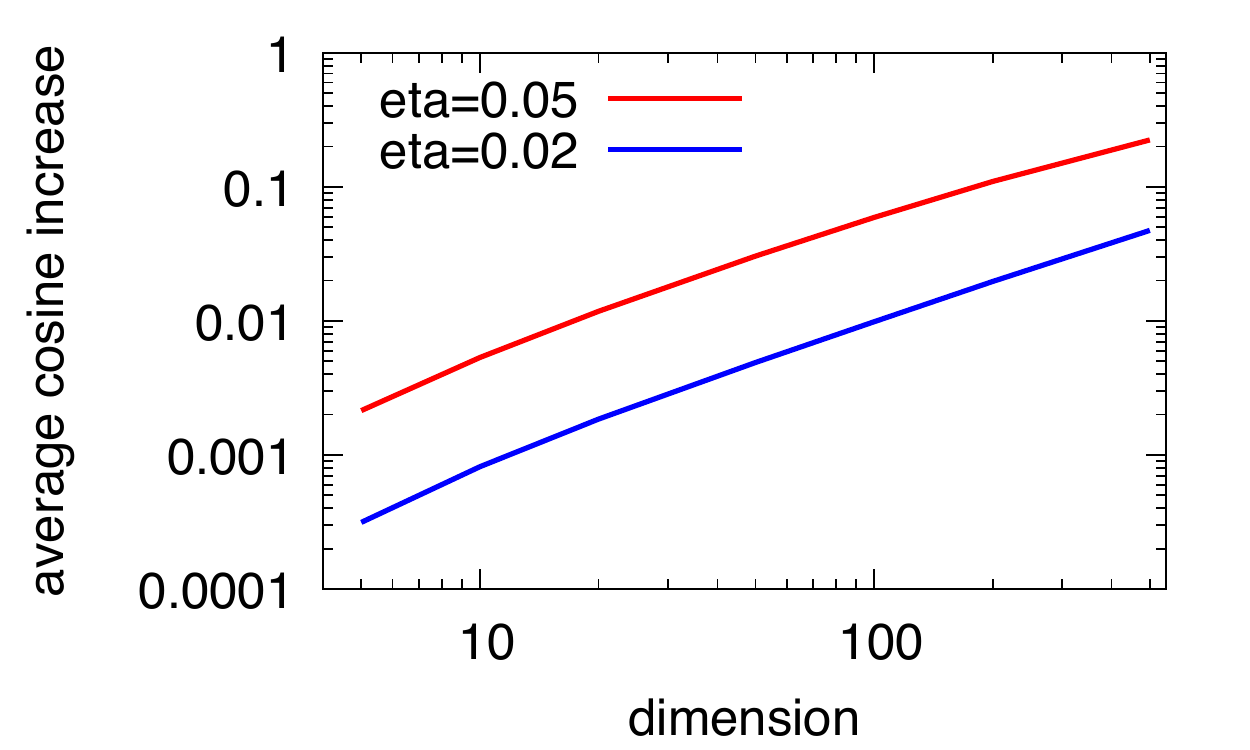}
\end{center}  
\caption{{\small Expected increase in $\cossim(\vecf_1, \vecf_2)$ as a
    function of dimension for $\vecf_i \sim  
    \mathcal{N}^d$  after gradient update to same random context $\vecc \sim \mathcal{N}^d$}}
\label{gradmove:fig}
\end{wrapfigure}
Consider two focus entities $1,2$ and corresponding examples
$(1,j)$ and $(2,j)$. When the two examples are in the same
focus-updating minibatch (where $\vecc_j$ is fixed)
both $\cossim(\vecf_1,\vecc)$ and 
$\cossim(\vecf_2,\vecc)$ increase but
a desirable side effect is that in expectation 
 $\cossim(\vecf_1,\vecf_2)$ increases as well. 
 The updates are aligned also with full gradients but not with
 \ind\  arrangements that on average place corresponding examples half an epoch
 apart. 
 Figure~\ref{gradmove:fig} shows the
expected increase in cosine similarity 
$\E\left[\cossim(\vecf'_1,\vecf'_2) 
    - \cossim(\vecf_1,\vecf_2) \right]\ $ as a function of the dimension
for example learning rates 
$\eta=0.02,0.05$ when the vectors $\vecf_1$, $\vecf_2$, and $\vecc$ are 
independently drawn from a product distribution $\mathcal{N}(0,1)^d$
of independent Gaussians.

\paragraph{Preservation of Jaccard similarities}
 We establish that \coo\ arrangements preserve in
 expectation  Jaccard similarities of pairs of rows and columns.
 The weighted Jaccard similarity of two vectors $\vecv$ and $\vecu$
 is defined as
 \begin{equation} \label{Jaccard:eq}
   J(\vecv,\vecu) = \frac{\sum_i
     \min\{v_i,u_i\}}{\sum_i
     \max\{v_i,u_i\}}\ .
\end{equation}

 \begin{lemma}
   Consider a set of focus updating microbatches and let
   $X_{ij}$ be the random variable that is the multiplicity of example $(i,j)$.
Then for any two rows $i,i'$,  the expectation of the empirical weighted  Jaccard
   similarity on $X$ (when defined)  is equal to the weighted Jaccard similarity on
   $\kappa$:
     \begin{align*} 
\E\left[ J(X_{i\cdot}, X_{i'\cdot})  \mid  \sum_j
    \max\{X_{i',j},X_{i,j}\}  > 0 \right] 
=  J(\kappa_{i,\cdot}, \kappa_{i',\cdot})
  \end{align*}
\ignore{
  
  \begin{align*} 
&\E\left[ \frac{\sum_j \min\{X_{i',j},X_{i,j}\}}{ \sum_j \max\{X_{i',j},X_{i,j}\}} \mid  \sum_j
    \max\{X_{i',j},X_{i,j}\}  > 0 \right]\\
=& \frac{\sum_j
     \min\{\kappa_{i',j},\kappa_{i,j}\}}{\sum_j
     \max\{\kappa_{i',j},\kappa_{i,j}\} }
  \end{align*}}
  A symmetric claim holds for context updating microbatches.
\end{lemma}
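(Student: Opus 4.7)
The plan is to exploit the coupling induced by the shared threshold $u$ in Algorithm~\ref{MBCooGlobal:alg} to rewrite $S_{\min} := \sum_j \min(X_{ij}, X_{i'j})$ and $S_{\max} := \sum_j \max(X_{ij}, X_{i'j})$ as iid sums over microbatches, after which the expected ratio collapses to a conditional Bernoulli parameter equal to $J(\kappa_{i\cdot}, \kappa_{i'\cdot})$. Parameterize each microbatch $t$ by the pair $(J_t, U_t)$ with $J_t = j$ chosen with probability $M_j/\sum_h M_h$ and $U_t \sim U[0,1]$ independent. Then $(i,j)$ is in microbatch $t$ iff $J_t = j$ and $U_t \leq \kappa_{ij}/M_j$. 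Because $(i,j)$ and $(i',j)$ are governed by the \emph{same} $U_t$, their inclusion events are nested: writing $a_j := \min(\kappa_{ij},\kappa_{i'j})$ and $b_j := \max(\kappa_{ij},\kappa_{i'j})$, one has $\min(X_{ij},X_{i'j}) = |\{t : J_t = j,\ U_t \leq a_j/M_j\}|$ and $\max(X_{ij},X_{i'j}) = |\{t : J_t = j,\ U_t \leq b_j/M_j\}|$, so that $S_{\min}$ and $S_{\max}$ are each sums of iid indicators $I_t^{\min}$ and $I_t^{\max}$ across microbatches.

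The per-microbatch probabilities follow by conditioning on $J_t$: the factor $M_j$ cancels between the column-selection weight and the threshold probability, yielding $\Pr[I_t^{\min} = 1] = \sum_j a_j/\sum_h M_h$ and $\Pr[I_t^{\max} = 1] = \sum_j b_j/\sum_h M_h$. This cancellation is the same design principle used in the preceding marginals lemma. Since $a_j \leq b_j$ for every $j$, we have $I_t^{\min} \leq I_t^{\max}$ microbatch-by-microbatch, hence $\Pr[I_t^{\min} = 1 \mid I_t^{\max} = 1] = \sum_j a_j / \sum_j b_j = J(\kappa_{i\cdot}, \kappa_{i'\cdot})$, and these conditional outcomes are independent across $t$ by iid-ness of the microbatches.

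It follows that $S_{\min} \mid S_{\max} = k \sim \mathrm{Binomial}\bigl(k,\ J(\kappa_{i\cdot}, \kappa_{i'\cdot})\bigr)$ for every $k \geq 1$, so $\E[S_{\min}/S_{\max} \mid S_{\max} = k] = J(\kappa_{i\cdot}, \kappa_{i'\cdot})$, and a tower expectation conditioned on $S_{\max} > 0$ finishes the proof; the context-designation case is entirely symmetric. The only step that requires real care is the nesting reduction in the first paragraph: turning $\min$ and $\max$ of integer-valued multiplicity vectors into pure threshold-crossing counts over independent microbatches is what lets us sidestep the usual difficulty of computing the expectation of a ratio of correlated sums, after which the conditional binomial identity makes the result immediate.
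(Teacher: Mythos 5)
Your proposal is correct and follows essentially the same route as the paper's proof: both reduce to a per-microbatch analysis in which the shared threshold forces the numerator and denominator contributions to be nested ($(0,0)$, $(0,1)$, or $(1,1)$ per microbatch), compute the per-microbatch contribution probabilities with the $M_j$ cancellation, and conclude that the expected ratio equals the conditional probability $\sum_j \min\{\kappa_{ij},\kappa_{i'j}\}/\sum_j \max\{\kappa_{ij},\kappa_{i'j}\}$. Your write-up merely makes explicit two steps the paper leaves terse — deriving the nesting directly from the common $U_t$ rather than citing the alignment lemma, and spelling out the conditional $\mathrm{Binomial}(k, J)$ identity behind the claim that the expected ratio equals the conditional per-microbatch probability.
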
   
\begin{proof}
We consider a single microbatch and its contributions to the numerator and
  denominator of the empirical similarity  $J(X_{i\cdot}, X_{i'\cdot})$. 
From Lemma~\ref{sync:lemma}, the possible contributions are $(0,0)$,
$(0,1)$ or $(1,1)$.
Therefore, $J(X_{i\cdot}, X_{i'\cdot})$ (if defined) is simply the 
average of the contributions to the numerator
over microbatches that contributed to the denominator.  The
expectation of $J(X_{i\cdot}, X_{i'\cdot})$  (when defined)
is therefore equal to the probability of a contribution to the numerator
in a single microbatch given that there was a contribution to the
denominator.
If the shared context in the microbatch is $j$,
the probability of contribution to the  denominator is
   $\max\{\kappa_{i',j},\kappa_{i,j}\}/\|\kappa_{\cdot j}\|_\infty$
   and to the numerator is 
   $\min\{\kappa_{i',j},\kappa_{i,j}\}/\|\kappa_{\cdot j}\|_\infty$.
   The probability over the random draw of context $j$
 of a contribution to the denominator and numerator respectively is
   $\sum_j \max\{\kappa_{i',j},\kappa_{i,j}\}/\sum_j  \|\kappa_{\cdot 
     j}\|_\infty$ and 
   $\sum_j \min\{\kappa_{i',j},\kappa_{i,j}\}/\sum_j  \|\kappa_{\cdot 
     j}\|_\infty$.   Since a contribution to the numerator is made only
   if there was one to the denominator, the expectation we seek is the
ratio $J(\kappa_{i,\cdot}, \kappa_{i',\cdot})$.
  \end{proof}  

\section{Refinement using LSH Maps} \label{LSH:sec}
\ignore{
Placement of $(i,j)$ and $(i',j)$ in the same focus updating
microbatch results in pulling $\vecf_i$ and $\vecf_{i'}$ closer
together (see the micro-level property highlighted in Section~\ref{cosinegain:sec}).  This is helpful when
the entities $i$ and $i'$ are similar in that they have a close target
embeddings.  Otherwise, the update is anyhow countered by other updates
and the placement have undesirable effect that it increases the microbatch size and may increase variance of the stochastic gradients.  
In particular, since a large microbatch is processed by consecutive same-designation minibatches, it increases the effective minibatch size to microbatch size.
This suggests that it would be useful to tune the quality of
co-placements so as to decrease unhelpful ones while retaining
as many helpful ones as we can.
}
We provide methods to partition our \coo\ microbatches so that they
are smaller and of higher quality in the sense that a larger fraction
of corresponding example pairs are between entities with higher
similarity.  To do this we 
use locality sensitive hashing (LSH) to compute randomized maps of
entities to keys.   Each map is represented by  a vector $\mathbf{s}$ of 
keys for entities such that similar entities are more likely to obtain
the same  key. 
We use these maps to refine our basic microbatches by partitioning them according to keys.

Ideally, our LSH modules would correspond to the target similarity,
but this creates a chicken-and-egg problem.
Instead,  we can use  LSH modules that are available at the start of
training and provide some proxy of the target similarity. For example,
a partially trained or a weaker and cheaper to train model.
We consider two concrete LSH modules based on  {\em
  Jaccard} and on {\em Angular} LSH.
The modules generate maps for either focus or context entities which are
applied according to the microbatch designation.
We will specify the map generation for focus entities, as maps for context entities
can be symmetrically obtained by reversing roles.

Our Jaccard LSH module is outlined in Algorithm~\ref{lsh:alg}. 
The probability that two
focus entities $i$ and $i'$ are mapped to the same key (that is, $s_i
= s_{i'}$) is equal to the weighted Jaccard
similarity of their association vectors $\kappa_{i\cdot}$ and $\kappa_{i'\cdot}$
(For context updates the map is according to the vectors $\kappa_{\cdot j}$):
\begin{lemma}\cite{multiw:VLDB2009}
  $$\Pr[s_i = s_{i'}] = J(\kappa_{ij}, \kappa_{i'j})$$
\end{lemma}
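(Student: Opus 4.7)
The plan is to interpret Algorithm~\ref{lsh:alg} as a consistent weighted sampling scheme and to derive the collision probability from a classical min-hash argument in a suitably lifted space. The result is a special case of the weighted Jaccard min-hash collision probability, so the role of the proof is to check that the algorithm actually implements such a sampler.

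First I would lift each row $\kappa_{i\cdot}$ to a region
\[
R_i := \{(j,t) \mid j\in C,\ 0 < t \le \kappa_{ij}\}
\]
in the product space of contexts and weight levels, equipped with the product of counting measure on $j$ and Lebesgue measure on $t$. A direct calculation gives $\mu(R_i) = \|\kappa_{i\cdot}\|_1$ and, decomposing by coordinate,
\[
\mu(R_i\cap R_{i'}) = \sum_j \min\{\kappa_{ij},\kappa_{i'j}\}, \qquad \mu(R_i\cup R_{i'}) = \sum_j \max\{\kappa_{ij},\kappa_{i'j}\},
\]
so the ratio of these measures is precisely $J(\kappa_{i\cdot},\kappa_{i'\cdot})$.

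Next I would verify the \emph{consistency} property of Algorithm~\ref{lsh:alg}: using the shared randomness (e.g.\ a family of independent exponential clocks, or hash values, indexed by the points of the lifted space), the algorithm produces a key $s_i$ that is a deterministic function of a single selected point $p_i \in R_i$, namely the argmin of the hash over $R_i$. The critical observation is that because the randomness is shared across vectors, the argmin over $R_i \cup R_{i'}$ is the same physical point $p^\star$; this point lies in $R_i$ if $\mu$-distance permits, and the algorithm assigns the same key to both vectors exactly when $p^\star \in R_i \cap R_{i'}$.

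Finally, by the standard min-hash argument, the probability that the argmin of an exchangeable random field over $R_i \cup R_{i'}$ falls into $R_i \cap R_{i'}$ is $\mu(R_i\cap R_{i'})/\mu(R_i\cup R_{i'})$, which is $J(\kappa_{i\cdot},\kappa_{i'\cdot})$. I expect the main obstacle to be the bookkeeping of the consistency step — verifying that the particular randomization used in Algorithm~\ref{lsh:alg} (as opposed to an idealized continuous argmin) really does produce the same key for $i$ and $i'$ if and only if the min-weight element lies in the intersection $R_i \cap R_{i'}$, especially at boundary thresholds where weights agree. Once this is in place, the probability identity follows from the measure computation above.
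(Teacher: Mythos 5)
The paper does not actually prove this lemma --- it states it with a citation to \cite{multiw:VLDB2009}, the intended one-line justification being that the map is a coordinated bottom-$1$ sample of the association vectors --- so your argument has to stand on its own, and it does not: it breaks at exactly the ``consistency step'' you flag as the main obstacle. Algorithm~\ref{lsh:alg} does \emph{not} implement a min-hash over your lifted space $R_i=\{(j,t)\mid 0<t\le\kappa_{ij}\}$. It draws a single $u_j\sim\Exp[1]$ per context and sets $s_i=\arg\min_j u_j/\kappa_{ij}$, so the values attached to rows $i$ and $i'$ at column $j$ are deterministic rescalings of one shared variable; the lifted sampler would instead take the minimum of a shared random field over the nested segments $\{j\}\times(0,\kappa_{ij}]$ and $\{j\}\times(0,\kappa_{i'j}]$. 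The two couplings have the same $\Exp[\kappa_{ij}]$ marginals but different joint laws, so the algorithm is not ``a special case'' of consistent weighted sampling. Moreover, even for the genuine lifted sampler your ``if and only if'' fails in the only-if direction: the key records only the coordinate $j$ of the argmin, not its level $t$, so when the global argmin lies in $R_i\setminus R_{i'}$, row $i'$ can still select a point with the same coordinate at a lower level, and the collision probability strictly exceeds $\mu(R_i\cap R_{i'})/\mu(R_i\cup R_{i'})$ whenever some column has $0<\min\{\kappa_{ij},\kappa_{i'j}\}<\max\{\kappa_{ij},\kappa_{i'j}\}$.

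Concretely, for $\kappa_{i\cdot}=(2,1)$ and $\kappa_{i'\cdot}=(1,2)$ a direct computation gives $\Pr[s_i=s_{i'}]=2/3$ for Algorithm~\ref{lsh:alg} while $J=1/2$; the scheme's true collision probability is $\sum_j\bigl(\sum_k\max\{\kappa_{ik}/\kappa_{ij},\,\kappa_{i'k}/\kappa_{i'j}\}\bigr)^{-1}$ (the ``probability Jaccard''), which agrees with the weighted Jaccard only in special cases. Your measure computation $\mu(R_i\cap R_{i'})/\mu(R_i\cup R_{i'})=J(\kappa_{i\cdot},\kappa_{i'\cdot})$ is correct, and your whole argument does go through when the weights are binary (the regime of the paper's preprocessed datasets): there both samplers reduce to ordinary min-hash over supports, and the argmin of $R_i\cup R_{i'}$ lands in the intersection exactly when the keys agree. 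But as a proof of the lemma for general nonnegative $\kappa$ the consistency step cannot be repaired, because the identity being proved is itself not satisfied by the sampler the algorithm actually uses.
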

\ignore{
\begin{proof}
Notice that the LSH map is according to coordinated bottom-1 samples of  association
vectors.
\end{proof}
The Jaccard similarity is a coarse proxy that is based only on
first order (1-hop) relations (similarity of the context vectors of two focus
entities) whereas we expect our target embedding
to capture higher order relations.
In a text corpus, for example, the words "doctor" and "physician" have
the same associations to words "ambulance" and "hospital", which
contribute to their first order similarity. The words
"bachelor party" and "bridesmaid" may not have strong first order
similarity but have strong higher order similarity: For example,  "bachelor party" is strongly associated to
"groom", "groom" is strongly associated with "bride", and "bride" is
strongly associated with "bridesmaid." }

Our angular LSH module is outlined in Algorithm~\ref{lshcos:alg}.  Here we input an explicit
``coarse'' embedding
$\tilde{\vecf}_i, \tilde{\vecc}_j$ that we expect to be lower quality
proxy of our target one.
Each LSH map is obtained by drawing a random vector and then mapping each entity $i$ to
the sign of a projection of $\tilde{\vecf}_i$  on the random  vector.
The probability that two focus entities have the
same key depends on the angle between their coarse embedding vectors:
\begin{lemma}\cite{GoemansWilliamson:JACM1995}
  $$\Pr[s_i = s_{i'}] = 1- \frac{1}{\pi} \cos^{-1}  \cossim(\tilde{\vecf}_i , \tilde{\vecf}_{i'})\ ,$$  where
  $\cossim(\vecv,\vecu):= \frac{\vecv\cdot\vecu}{\|\vecv\|_2
    \|\vecu\|_2}$ is the cosine of the angle between the two vectors.
\end{lemma}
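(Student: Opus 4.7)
The plan is to invoke the classical hyperplane-rounding argument of Goemans and Williamson. By inspection of Algorithm~\ref{lshcos:alg} (which I have not seen in the excerpt but which is described in the surrounding text), each LSH map is obtained by drawing a random vector $\vecw$ (from a spherically symmetric distribution such as standard Gaussian or uniform on the sphere) and setting $s_i := \mathrm{sign}(\vecw \cdot \tilde{\vecf}_i)$. Hence $s_i = s_{i'}$ if and only if $\tilde{\vecf}_i$ and $\tilde{\vecf}_{i'}$ lie on the same side of the hyperplane $\{\vecx : \vecw \cdot \vecx = 0\}$. So the task reduces to computing the probability that a uniformly random hyperplane through the origin does \emph{not} separate the two fixed vectors.

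My first step would be to reduce to the two-dimensional plane $\Pi$ spanned by $\tilde{\vecf}_i$ and $\tilde{\vecf}_{i'}$. Since the distribution of $\vecw$ is rotationally invariant, the projection of $\vecw$ onto $\Pi$ is itself rotationally symmetric in $\Pi$ (with probability one it is nonzero). The signs $\mathrm{sign}(\vecw\cdot \tilde{\vecf}_i)$ and $\mathrm{sign}(\vecw\cdot \tilde{\vecf}_{i'})$ depend only on this projection, so I may assume $\vecw$ is distributed uniformly on the unit circle in $\Pi$.

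Next, let $\theta \in [0,\pi]$ denote the angle between $\tilde{\vecf}_i$ and $\tilde{\vecf}_{i'}$, so that $\cossim(\tilde{\vecf}_i , \tilde{\vecf}_{i'}) = \cos\theta$. A short geometric argument shows that the random line $\{t\vecw : t\in\mathbb{R}\}$ separates the two vectors precisely when $\vecw$ falls into one of the two arcs, each of length $\theta$, between the line perpendicular to $\tilde{\vecf}_i$ and the line perpendicular to $\tilde{\vecf}_{i'}$. Thus
\[
\Pr[s_i \neq s_{i'}] = \frac{2\theta}{2\pi} = \frac{\theta}{\pi},
\]
and so $\Pr[s_i = s_{i'}] = 1 - \theta/\pi = 1 - \frac{1}{\pi}\cos^{-1}\cossim(\tilde{\vecf}_i , \tilde{\vecf}_{i'})$, as claimed.

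The only subtlety I anticipate is making the reduction to two dimensions fully rigorous and naming the distribution of $\vecw$ precisely enough that rotational invariance can be invoked; everything else is an elementary arc-length computation. Since the excerpt does not display Algorithm~\ref{lshcos:alg}, a clean proof should open by stating explicitly which random projection is being used so that the hyperplane-rounding picture applies verbatim.
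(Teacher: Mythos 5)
Your argument is correct and is exactly the standard hyperplane-rounding computation of Goemans and Williamson that the paper relies on: the paper itself gives no proof, citing the lemma directly, and Algorithm~\ref{lshcos:alg} indeed draws $r$ uniformly from the unit sphere and sets $s_i = \mathrm{sign}(r\cdot\tilde{\vecf}_i)$, so your rotational-invariance reduction to the plane spanned by $\tilde{\vecf}_i,\tilde{\vecf}_{i'}$ and the two-arc count giving $\Pr[s_i\neq s_{i'}]=\theta/\pi$ applies verbatim. The only points worth making explicit are that the projection of $r$ onto that plane is almost surely nonzero (so the signs are well defined) and that $\theta=\cos^{-1}\cossim(\tilde{\vecf}_i,\tilde{\vecf}_{i'})\in[0,\pi]$, both of which you already flag.
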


Multiple LSH maps can be applied to decrease microbatch sizes
and increase the similarity level of entities placed in the same
microbatch: With $r$ independent maps the
probability that two entities are microbatched together is
$\Pr[s_i = s_{i'}]^r$ -- thus the probability decreases faster when
similarity is lower.
The number of LSH maps we apply can be set statically or  adaptively
to obtain microbatches that are at most a certain size (usually the
minibatch size).
For efficiency, we precompute a small number of LSH maps
in the preprocessing step and randomly draw from that set.  The
computation of each map is linear in the sparsity of $\kappa$.

\begin{algorithm2e}[h]
  {\small
\DontPrintSemicolon
\caption{Jaccard LSH map: Focus \label{lsh:alg}}
\ForEach(\tcp*[h]{i.i.d Exp distributed}){context $j$}{Draw $u_j \sim \Exp[1]$}
\ForEach(\tcp*[h]{assign LSH bucket key}){focus $i$}{$s_i \gets \arg\min_j u_j/\kappa_{ij}$}
\Return{$\mathbf{s}$}
  }
  \end{algorithm2e}
\begin{algorithm2e}[h]
  {\small
\DontPrintSemicolon
\caption{Angular LSH map: Focus \label{lshcos:alg}}
\KwIn{$\{\tilde{\vecf}_i\}$ \tcp*[h]{coarse $d$ dimensional embedding}}
Draw $r \sim S_d$ \tcp*[h]{Random vector from the unit sphere}\;
\ForEach(\tcp*[h]{assign LSH bucket key}){focus $i$}{$s_i \gets \text{\bf{sign}}(r \cdot \tilde{\vecf}_i)$}
\Return{$\mathbf{s}$}
  }
\end{algorithm2e}

\begin{figure*}[h]
\center 
\includegraphics[width=0.23\textwidth]{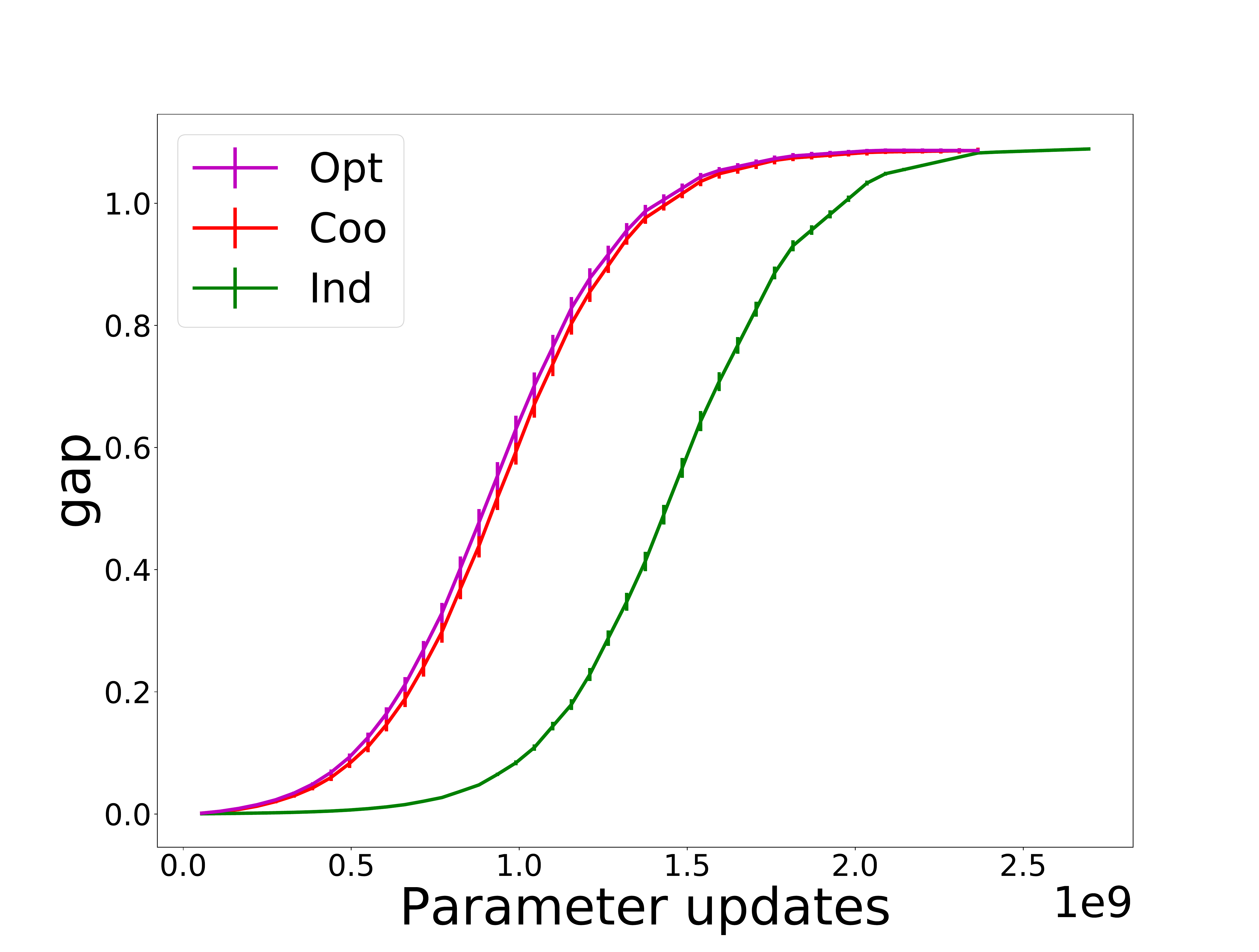}
\includegraphics[width=0.23\textwidth]{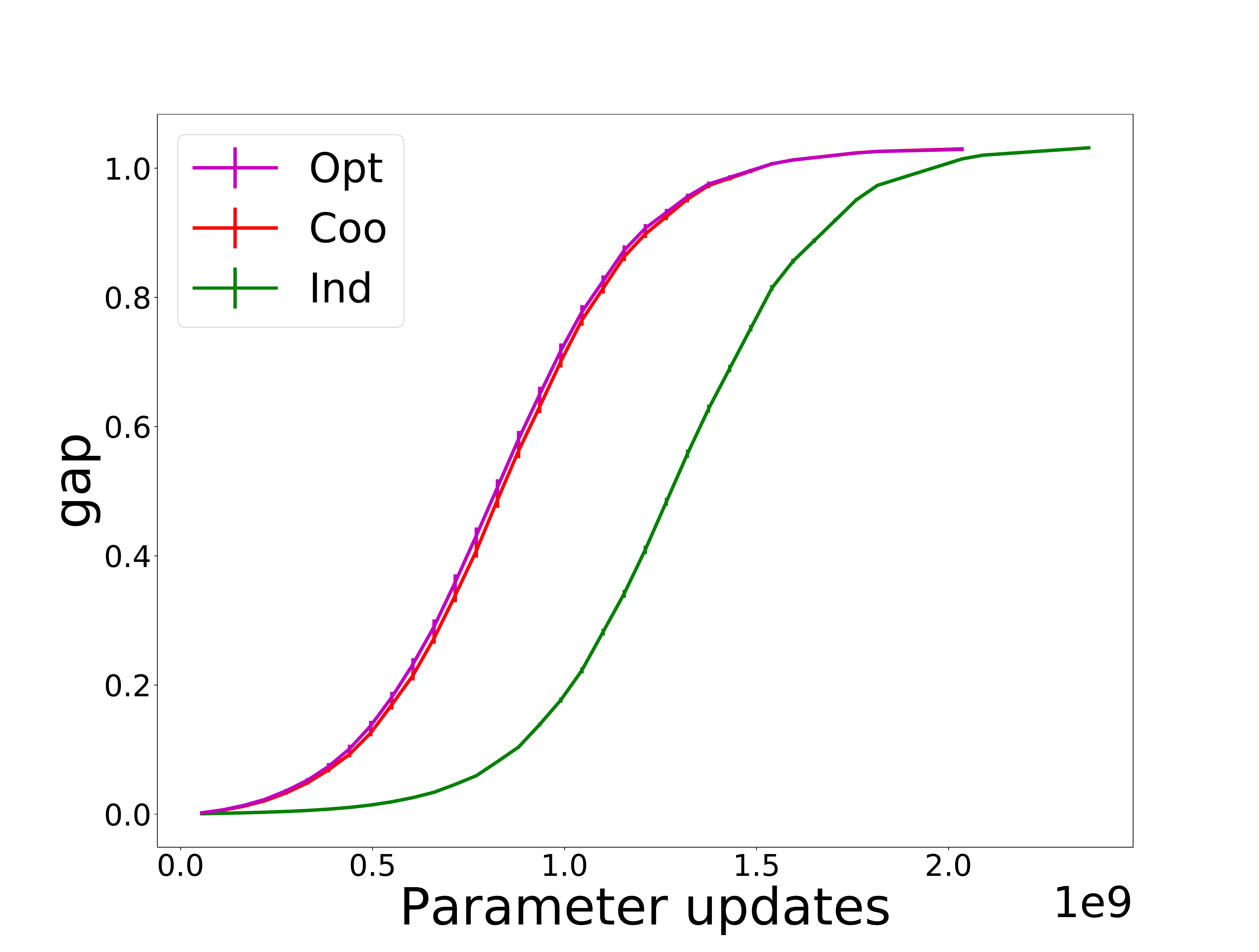}
\includegraphics[width=0.23\textwidth]{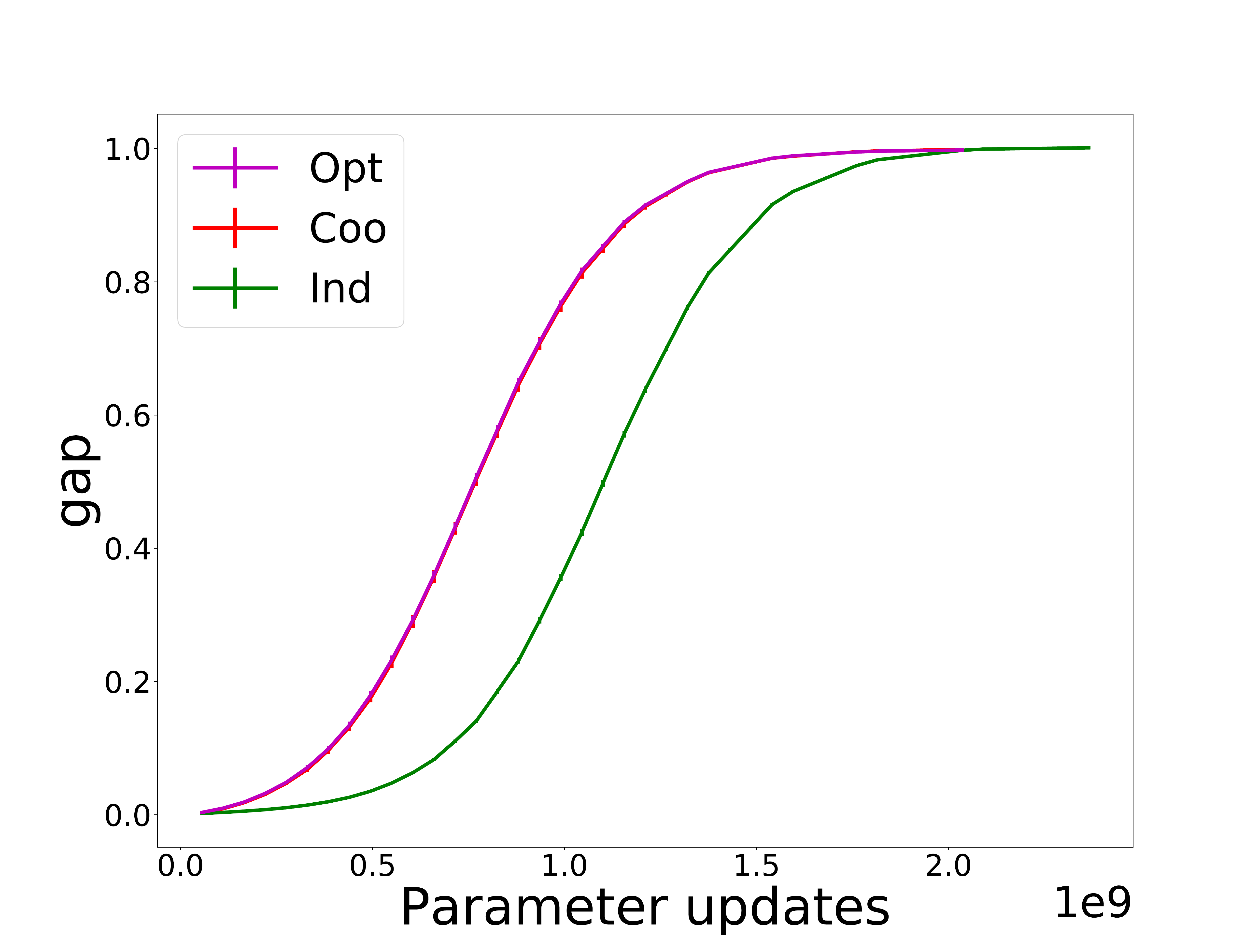}
\includegraphics[width=0.23\textwidth]{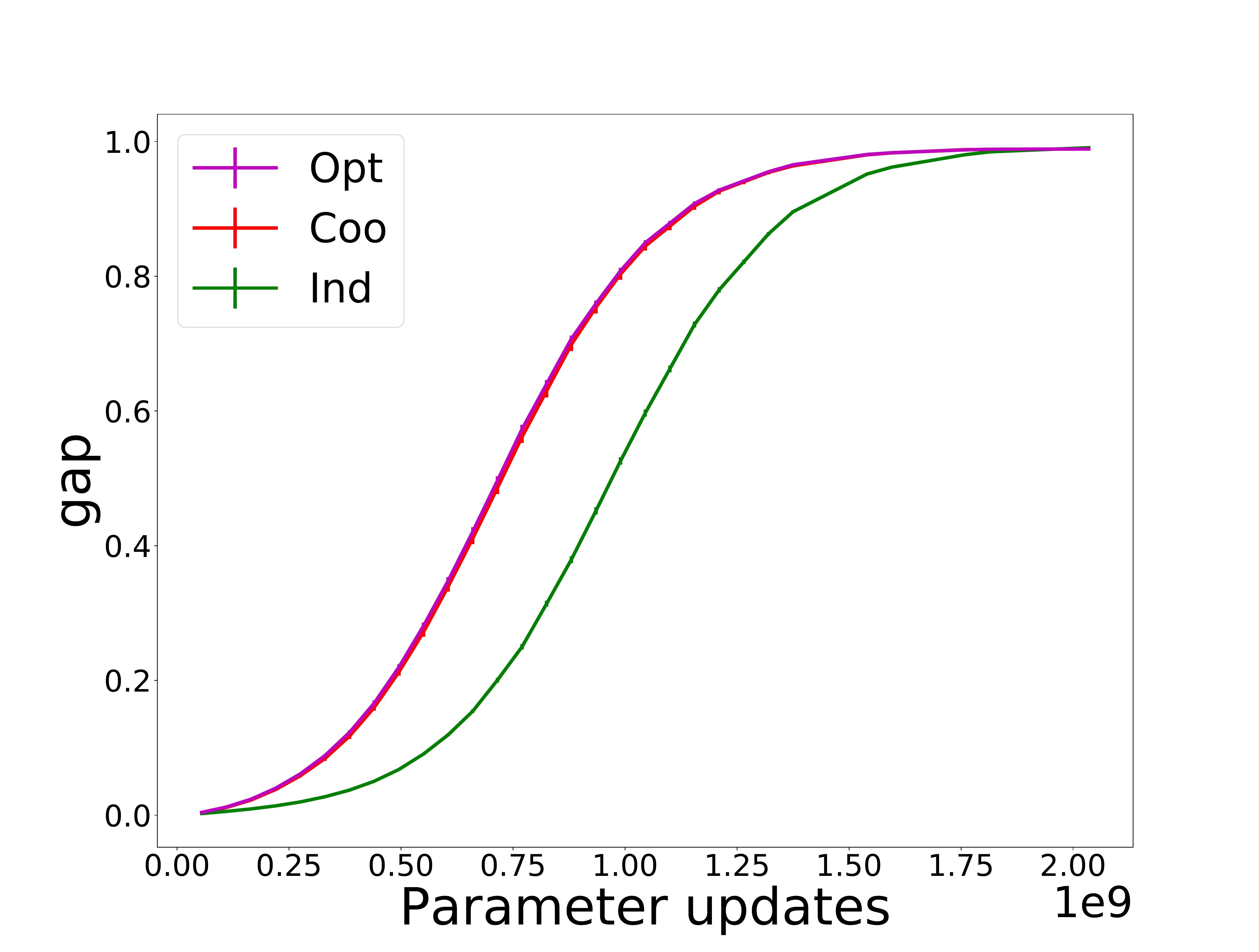}
\ignore{
\\
\includegraphics[width=0.23\textwidth]{early__block_f_c_gap__synthetic_10000_10000_10_0p70_10000000_batch_64_dim_50_agg.pdf}
\includegraphics[width=0.23\textwidth]{early__block_f_c_gap__synthetic_10000_10000_20_0p70_10000000_batch_64_dim_50_agg.pdf}
\includegraphics[width=0.23\textwidth]{early__block_f_c_gap__synthetic_10000_10000_50_0p70_10000000_batch_64_dim_50_agg.pdf}
\includegraphics[width=0.23\textwidth]{early__block_f_c_gap__synthetic_10000_10000_100_0p70_10000000_batch_64_dim_50_agg.pdf}
}
\includegraphics[width=0.23\textwidth]{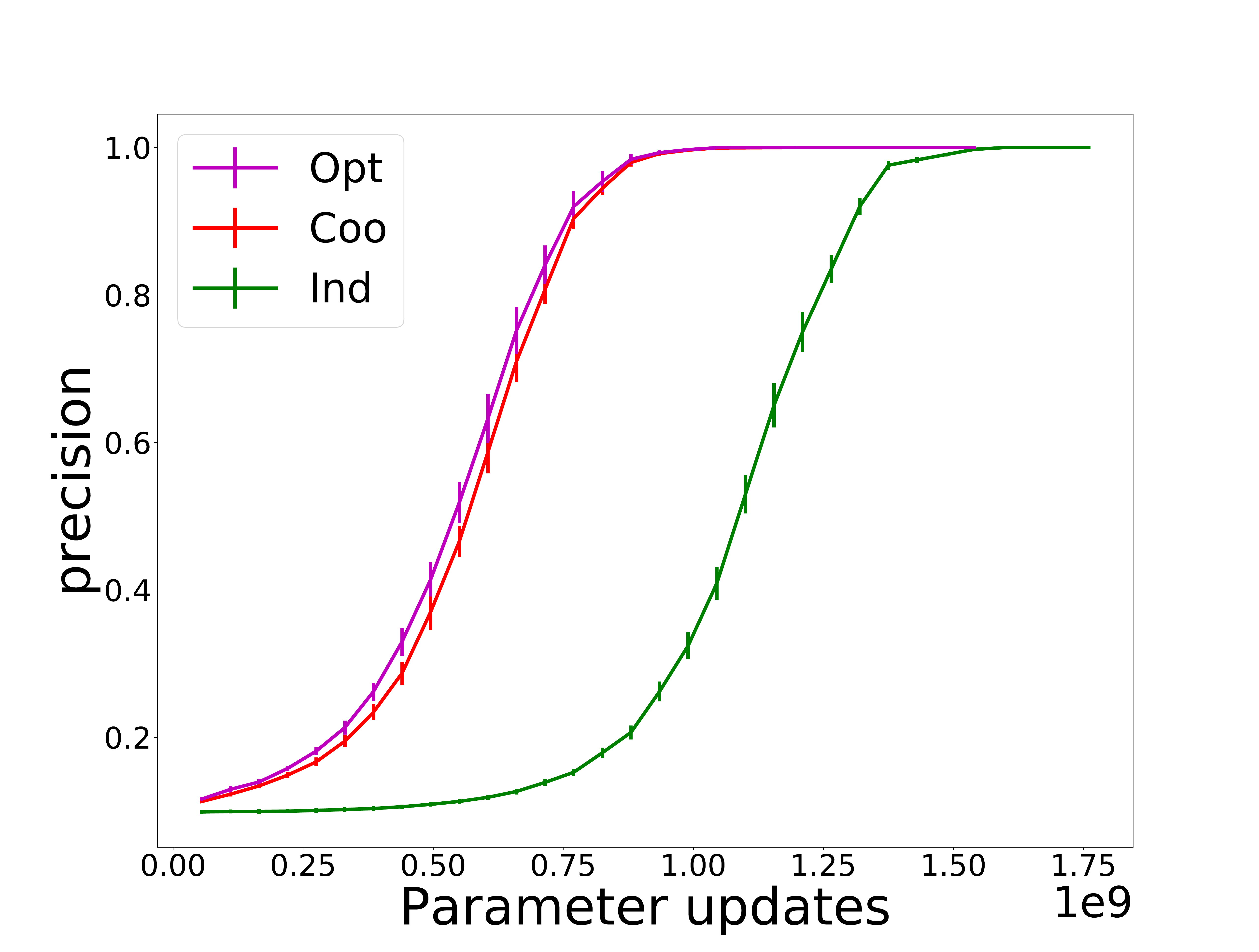}
\includegraphics[width=0.23\textwidth]{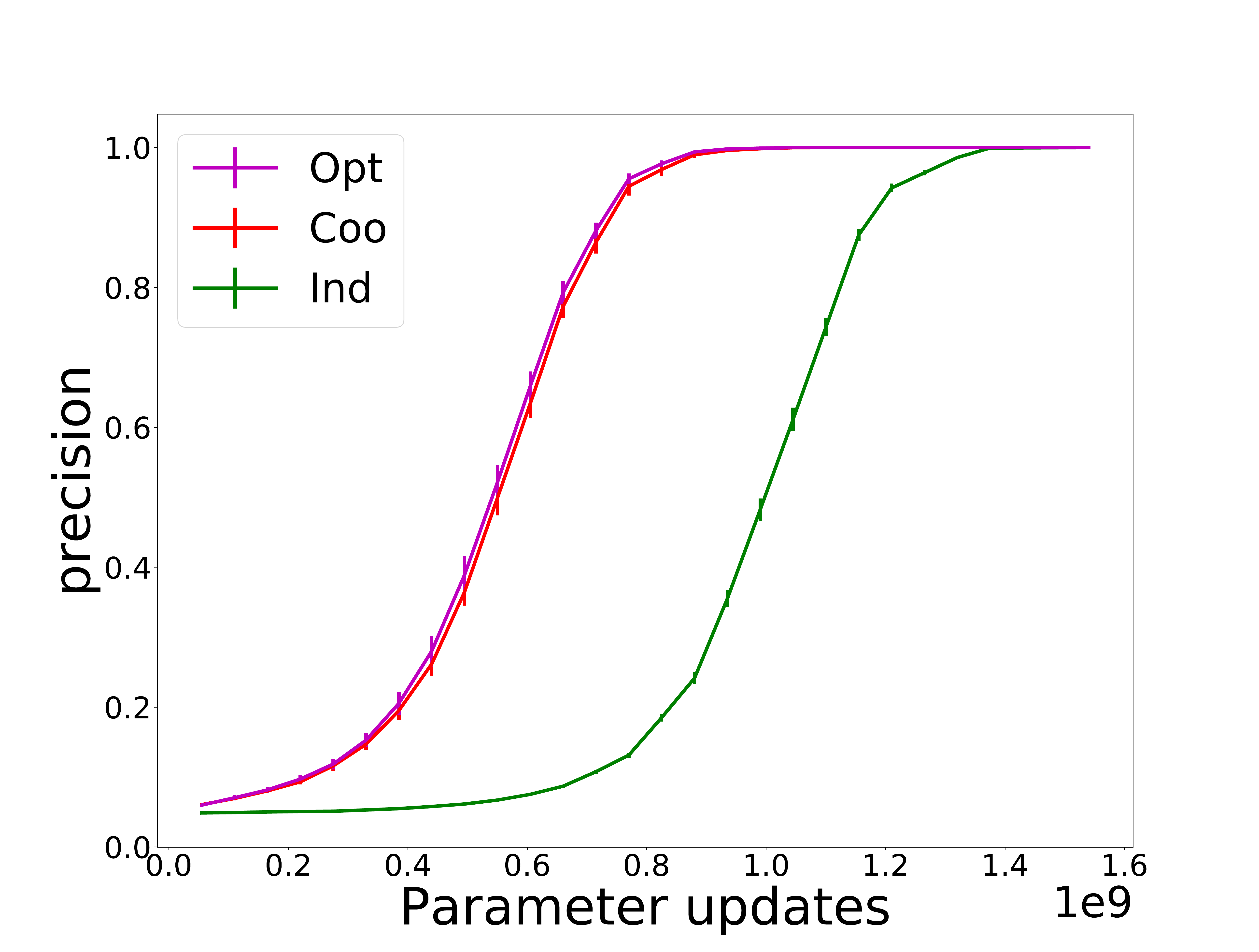}
\includegraphics[width=0.23\textwidth]{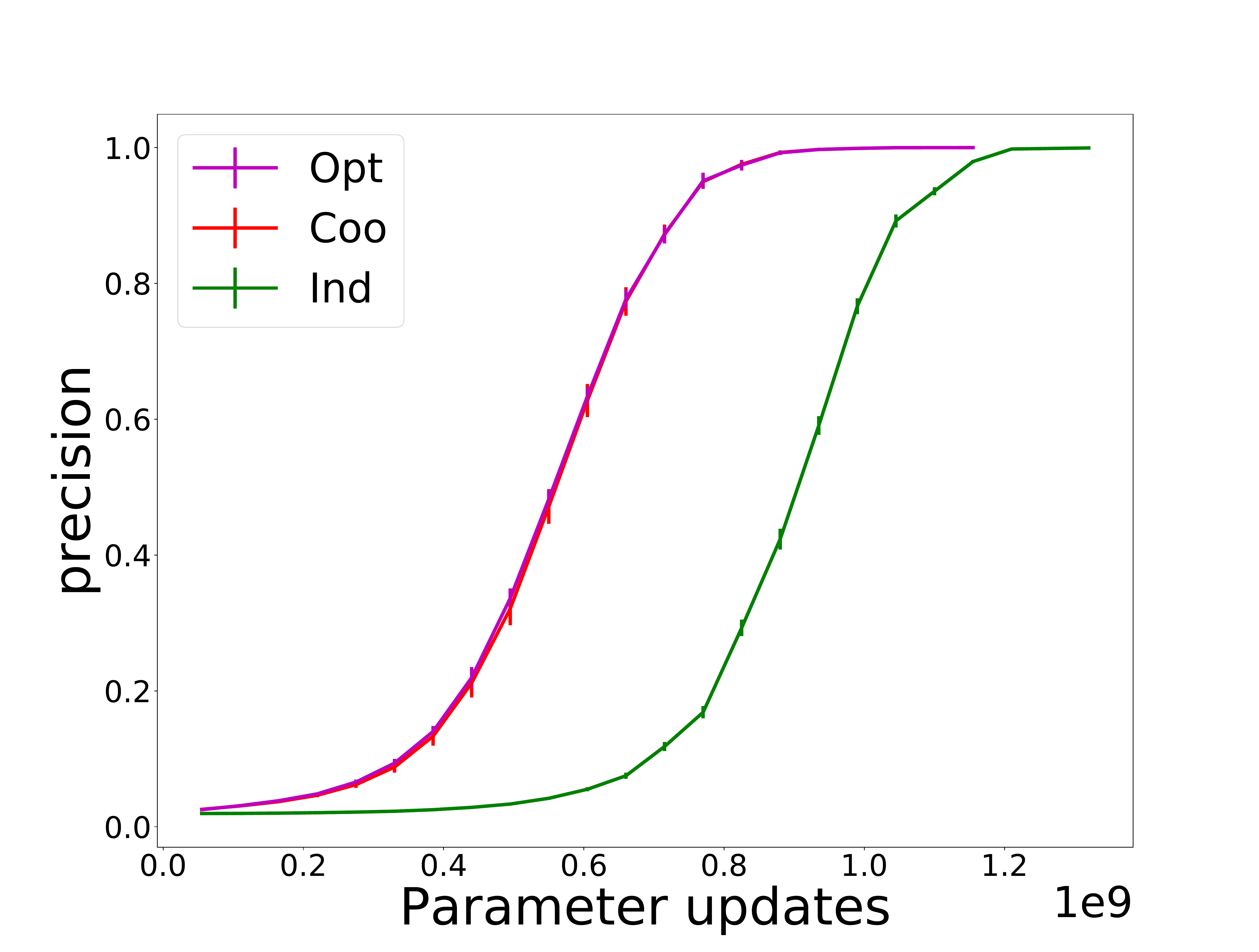}
\includegraphics[width=0.23\textwidth]{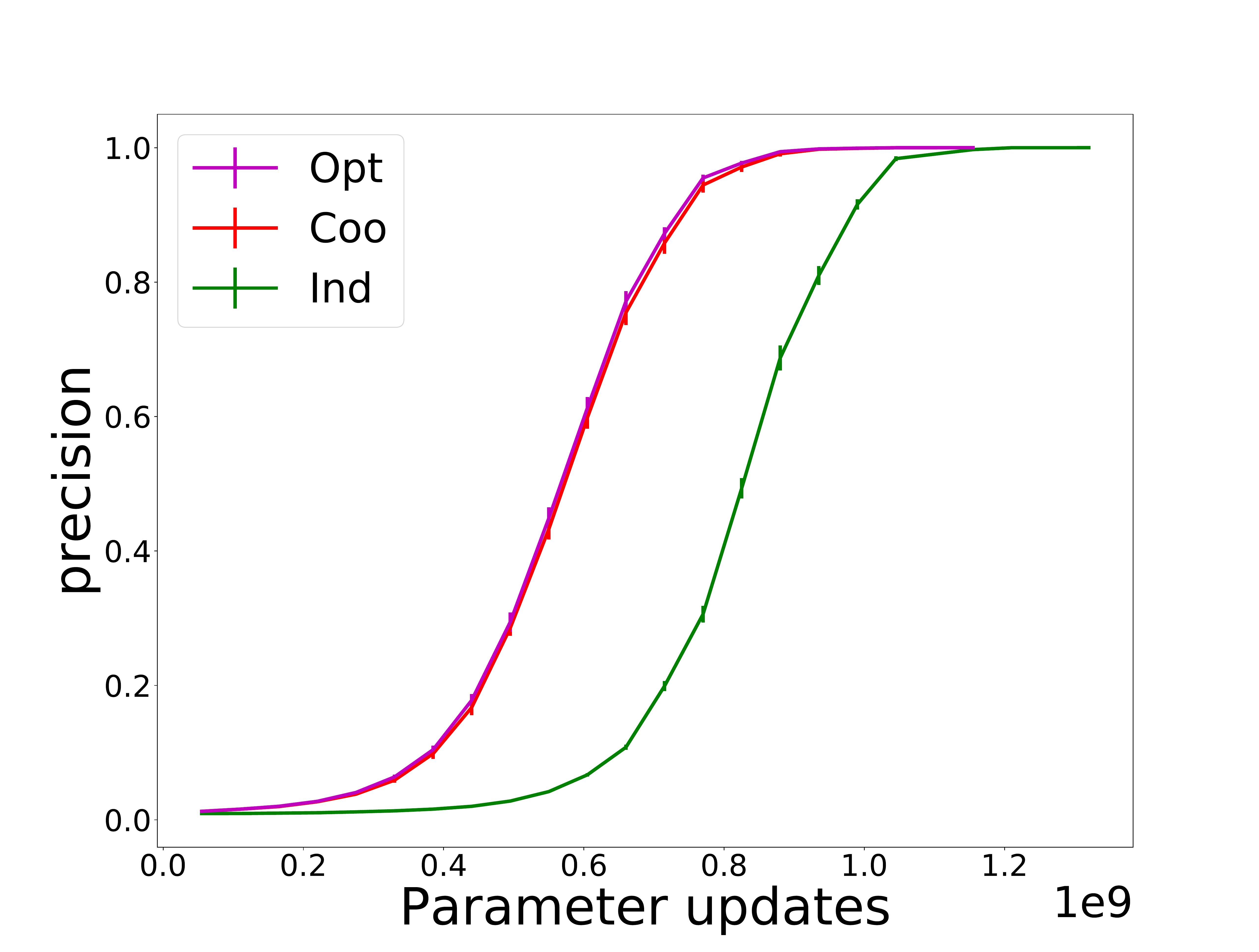}
\ignore{
\\
\includegraphics[width=0.23\textwidth]{early__block_f_c_precision_best_context_synthetic_10000_10000_10_0p70_10000000_batch_64_dim_50_agg.pdf}
\includegraphics[width=0.23\textwidth]{early__block_f_c_precision_best_context_synthetic_10000_10000_20_0p70_10000000_batch_64_dim_50_agg.pdf}
\includegraphics[width=0.23\textwidth]{early__block_f_c_precision_best_context_synthetic_10000_10000_50_0p70_10000000_batch_64_dim_50_agg.pdf}
\includegraphics[width=0.23\textwidth]{early__block_f_c_precision_best_context_synthetic_10000_10000_100_0p70_10000000_batch_64_dim_50_agg.pdf}
 }
\caption{Cosine gap (top) and Precision at $k=10$ (bottom) with \coo\ and \ind\ arrangements and
  \opt\ refinements in the course of training ($d=50$, $b=64$) for stochastic blocks 
  with $B\in\{10,20,50,100\}$ (left to right). }\label{blocks_gap_precision:fig}
\end{figure*}

\ignore{
\begin{figure}[t] 
\center
\caption{Precision at $k=10$
 with \coo\ and \ind\ arrangements and
  \opt\ ($d=50$, $b=64$) for stochastic blocks 
  with $B\in\{10,20,50,100\}$. }\label{blocks_precision:fig}
\end{figure}
}

\section{Arrangement Methods Experiments} \label{experiments:sec}

We trained embeddings with different arrangement methods:
The baseline independent arrangements (\ind) as in 
Algorithm~\ref{MBIndGlobal:alg},  coordinated arrangements (\coo)  as
in Algorithm~\ref{MBCooGlobal:alg}, and some  (\coolsh) arrangements
with Jaccard LSH.   We also experimented with 
tunable arrangements (\mix) that start with \coo\ (which reaps much of
its benefit earlier in training)
and switch to \coolsh\ or to \ind.   Finally, as another baseline we
also trained using the more standard \ind\ with two-sided updates.
The results were similar or slightly inferior to one-sided \onlyinproc{\ind.}\notinproc{\ind\ and
are reported in Appendix~\ref{twosided:sec}.}

\ignore{
(i) {\em Jaccard}: single Jaccard LSH map, (ii) {\em Jaccard*}: repeated partitions until the microbatch sizes are capped by the
minibatch size $b$  and (iii) {\em angular*}:  angular LSH with respect to a pre-computed $d=3$
dimensional embedding, applied repeatedly to obtain microbatches of
size that is capped by $b$.
We also evaluate tunable arrangements (\mix) that start with \coo, 
may switch to (one variant) of \coolsh\ or to \ind,  and may switch
from \coolsh\ to \ind.   This \mix\ design allows us to benefit from a higher recall (albeit
lower precision) of helpful co-placements (\coo) being more effective in
early training regime and also allows for \ind\ arrangements in the late training
regime after the coarse similarity proxy used in our LSH maps exceeds its utility.
The (at most) two switch points of each \mix\ method are hyperparameters that were determined once via a grid
search  and then used across repetitions (generated synthetic data and
splits for recommendation data).
The coordinated arrangement methods we implemented are not meant to be
a comprehensive coverage of our approach and were not even selected to maximize
performance.  Our intention is to present and evaluate a sampler of
basic simple methods in order to provide usage examples, 
understand the potential of coordinated arrangements, and develop an
understanding of what methods are more effective in different 
training regimes.
}

\subsection{Stochastic blocks data} \label{stochasticblocks:sec}

We generated data sets using the stochastic blocks model
\cite{CondonKarp:2001}.   This synthetic data allowed us to explore the effectiveness
of different arrangement methods as we vary the number and size
of blocks.  The simplicity and symmetry of this data (parameters,
entities, and associations) allowed us to compare different
arrangement methods while factoring out 
optimizations and methods geared for asymmetric data such as per-parameter learning
rates or alterting the distribution of examples.  The blocks data
binary similarity allowed us to explore the
limits of LSH refinements by refining \coo\ microbatches according to
ground truth similarity (partitioning \coo\ microbatches by block
membership) (\opt).


The parameters for the generative model are the  
dimensions $n\times n$ of the matrix, the number of (equal size) blocks $B$, the  
number of interactions $r$,  and the in-block probability $p$.  
The rows and columns are partitioned to consecutive groups of $n/B$, 
where the $i$th part of rows and $i$th part of  
columns are considered to belong to the same block.  
 We generate the matrix by 
initializing the associations to be $\kappa_{ij}=0$. We then  
draw $r$ interactions independently as follows. 
We select a row index $i\in [n]$ uniformly at random. 
With probability $p$, we select (uniformly at random) a column $j\in [m]$
that is in the 
same block as $i$ and otherwise (with probability $1-p$) we select a 
column $j\in [n]$ that is outside the block of $i$.  We then increment $\kappa_{ij}$.   
 The final $\kappa_{ij}$ is the number of times $(i,j)$ was drawn.
 In our experiments we set $n=10^4$, $r=10^7$,$p=0.7$
 and $B\in \{10, 20, 50, 100\}$.

 \subsection{Implementation and methodology}
 We implemented our methods in Python using  the TensorFlow library 
\cite{tensorflow2015-whitepaper}. We used the word embedding implementation
of~\cite{Mikolov:NIPS13,word2vec:github}
except that we used our methods to specify
minibatches. The 
implementation included a default bias parameter associated with
context embeddings and we 
trained embeddings with and without the bias parameter.  The
relative performance of arrangement methods was the same but the overall 
performance was significantly better when the bias parameter was used.  We
therefore report results with bias parameters.\notinproc{ Details on training and prediction with bias are provided in Appendix~\ref{biasparam:sec}. }
We used a fixed learning rate to facilitate a more accurate comparison
of methods and trained with $\eta=0.005$ to
$\eta=0.15$. We observed similar relative performance and report results
with $\eta=0.02$.  We worked with minibatch size
parameter values $b\in\{4,64,256\}$ (recall that $b$ is the number of
positive examples and $\lambda=10$ negative examples are matched with
each positive example), and embeddings dimension
$d\in\{5,10,25,50,100\}$. \notinproc{ Hyperparameter sweeps
  of $d$ and $b$ and the learning rate $\eta$ are reported
  respectively in
  Appendix~\ref{dimanalysis:sec}, Appendix~\ref{minibatchsize:sec},
  and Appendix~\ref{learningrate:sec}.}

\subsection{Quality measures}
We use two measures of the quality of an embedding with respect to the blocks ground
truth.  The first is the  {\em cosine gap} which measures average quality and is defined as the difference in the average cosine similarity between positive examples and negative examples. We generate a
sampled set $T_{+}$ of same-block pairs $(i,j)$ as positive test
examples and a sampled set $T_{\_}$ of pairs that are not in the same
block as negative test examples and compute
\begin{equation} \label{cosinegap:eq}
\frac{1}{|T_+|}\sum_{(i,j)\in T_+} \cossim(\vecf_i,\vecc_j) -
\frac{1}{|T_-|}\sum_{(i,j)\in T_-} \cossim(\vecf_i,\vecc_j)\ .
\end{equation}
We expect a good embedding to have high cosine similarity for same-block pairs and 
low cosine similarity for out of block pairs.
The second measure we use, {\em precision at $k$}, is focused on the quality of the top predictions and is appropriate for recommendation tasks.  For each sampled representative entity we compute the entities with top $k$ cosine similarity and consider the average fraction of that set that are in the same block.

\begin{table}[h]
\scriptsize 
\center 
\begin{tabular}{cc|cc|cc|cc} 
\#blocks   &        & \multicolumn{2}{c}{$0.75$} & \multicolumn{2}{c}{$0.95$} & \multicolumn{2}{c}{$0.99$} \\
$B$ & peak&\%gain &$\times10^9$& \%gain& $\times 10^9$& \%gain& $\times 10^9$\\
  \hline
  \multicolumn{8}{c}{Cosine Gap Quality }\\
\hline
10&1.09&31.07&1.58&24.80&1.89&\textcolor{blue}{{\bf 19.06}}&2.17\\  

20&1.03&29.56&1.40&23.74&1.72&\textcolor{blue}{{\bf 18.13}}&1.97\\

50&1.00&25.52&1.22&20.46&1.55&\textcolor{blue}{{\bf 14.73}}&1.80\\

100&0.99&20.66&1.09&15.27&1.40&\textcolor{blue}{{\bf 12.11}}&1.64\\

%
  \hline
    \multicolumn{8}{c}{Precision at $k=10$ Quality }\\
\hline
  10&1.00&43.64&1.12&37.70&1.24&\textcolor{blue}{{\bf 37.41}}&1.38\\

20&1.00&40.80&1.03&36.16&1.14&\textcolor{blue}{{\bf 33.47}}&1.23\\  

  50&1.00&33.89&0.92&31.37&1.04&\textcolor{blue}{{\bf 25.35}}&1.09\\

  100&1.00&26.67&0.84&23.37&0.94&\textcolor{blue}{{\bf 20.00}}&1.02\\

\end{tabular}
\caption{Training gain of \coo\ with respect to \ind\
baseline for  stochastic 
   blocks ($d=50$, $b=64$).  Peak is maximum quality for \coo.  We report the training  for \ind\ to 
 reach 75\% , 95\%, and 99\% of peak with respective 
 percent reduction in training with \coo.}
\label{tab:coo_gap_precision_blocks}
\end{table}

\subsection{Stochastic blocks results} \label{stochasticblocksexp:sec}

Our results were consistent for different dimensions and minibatch
sizes and we report
representative results for $d=50$ and $b=64$ and for the methods
\coo, \ind, and the reference method \opt.
Results for the
cosine 
gap quality measure and for the precision (at $k=10$) are reported in 
Figure~\ref{blocks_gap_precision:fig} and
Table~\ref{tab:coo_gap_precision_blocks}.
The figures show the increase in
quality in the course of training for the different methods
\ignore{ and also zoom on the early part
of training}. The $x$-axis in these plots shows the {\em amount of training}
in terms of the {\em total number of gradient updates performed}. 
The tables report the amount of additional training needed for \ind\ to obtain
the performance of \coo.
We observe that across all block sizes $B$ and for the two
quality measures, \coo\ arrangement resulted in significantly faster
convergence than the  \ind\ arrangements.  The gains were larger with
larger blocks.  Much of the gain of \coo\ arrangements over \ind\ was realized
earlier in training and then maintained.
 The \opt\ method provided only very modest
 improvement over \coo\ and only for larger blocks. This improvement
bounds that possible by any \coolsh\ method on this data and indeed
\coolsh\ results (not shown) were between \coo\ and \opt.

\ignore{
The different \coolsh\ methods trade off the microbatch size (recall 
of helpful co-placements) and co-placement quality (precision).  We 
observe that the sweet spot for this tradeoff varies for different 
regimes in the training.  In particular, higher recall is beneficial 
early on:  When zooming on early training we see that \coo\ (that
has the largest recall) is 
dominant in the very early regime but may deteriorate later, in 
particular for larger blocks ($B=10$) (that yield even larger basic 
microbatches). 

Jaccard \coolsh\ that uses a single map still generates large
microbatches in particular with 
larger blocks but when used as part of a
\mix\ (start with \coo\ and ends with \ind) the overall training is faster
that any of the components alone.  We can also see that the switch
point for \ind\ occurs in early-mid training and that the training
gain obtained before that point is retained.

The Jaccard* and angular* \coolsh\ methods which cap the microbatch
size by the minibatch size perform well for $b=64$ and for $b=256$. For
$b=4$, they are outperformed  in early training by \coo\ (which
provides a higher ``recall'' of
helpful co-placements) and overall it is outperformed by \mix\ that
start with \coo\, continues with \coolsh\ with a single Jaccard map,
and ends with \ind.
The additional results reported in
Appendix~\ref{moresquare:sec} for minibatch size parameters
$b=4$ and $b=256$ indicate that (as expected) the
relative advantage of the coordinated methods increases with 
 minibatch size and consistently yield training gains of 5-30\% over the baseline \ind\ arrangements.

 In Appendix~\ref{dimanalysis:sec} we consider
training embeddings with different dimensions:   We observe the same
relative performance of arrangement methods as reported for $d=50$. 
We also see that the peak quality is lower for $d=3$, which justifies
its use as a ``coarse'' proxy with angular LSH.
}

\ignore{
\begin{table}[]
\scriptsize 
\center 
\begin{tabular}{l|cc|cc|cc} 
 LSH  &   \multicolumn{2}{c}{$0.75\times$ peak} &
                                                            \multicolumn{2}{c}{$0.95\times$
                                                  peak} &
                                               \multicolumn{2}{c}{$0.99\times$ peak} \\
     &\%gain &$\times10^6$& \%gain& $\times 10^6$& \%gain& $\times 10^6$\\
\hline 
 \multicolumn{7}{l}{\amazon:   Gain of \coolsh\ over \ind\ (peak=$0.33$)}\\
\hline 
Jac &4.29&3.50&6.86&5.83&11.02&7.17\\ 
Ang &10.00&3.50&13.38&5.83&16.04&7.17\\
\hline\hline 
\multicolumn{7}{l}{\movielens:   Gain of \mix\ over \ind\ (peak=$0.40)$}\\
\hline 
Jac& 2.13&1.41&0.58&1.73&1.55&1.93\\
Ang  & 4.96&1.41&8.67&1.73&11.92&1.93
\end{tabular}
\caption{\amazon\ and \movielens:  Training gain over \ind\ baseline
 ($b=64$, cosine gap).}
\label{tab:recommendations}
\end{table}
}

\subsection{Recommendation data sets and results} \label{recexp:sec}

We performed experiments on two recommendation data sets, \movielens\
and \amazon.
The \movielens\ dataset \cite{movielen1m} contains $10^6$ reviews by  $6\times 
10^3$ users of $4\times 10^3$ movies.  
The \amazon\ dataset \cite{SNAP}
contains $5\times 10^5$ fine food reviews of $2.5 \times 10^5$ users 
on $7.5 \times 10^3$ food items. 
Provided review  scores were $[1\text{-}5]$ and we
preprocessed the matrix by taking $\kappa_{ij}$
to be $1$ for  review score that is at least $3$ and $0$
otherwise. We then reweighed entries in the \movielens\ dataset by dividing the value by the sum 
of its row and column to the power of $0.75$. 
This is standard processing that retains only positive ratings and 
reweighs to prevent domination of frequent entities.

We created a test set $T_+$ of positive examples by
sampling $20\%$ of the non zero entries 
with probabilities proportional to 
$\kappa_{ij}$.
The remaining examples were used for training.   
As negative test examples $T_-$ we used random zero entries.
We measured quality using the cosine  gap 
and precision at $k=10$ over users with at least 20 nonzero entries.
We used 5 random splits of the data to test
and training sets and 
10 runs per split.  \ignore{ We set hyperparameters and \mix\ switch points
using a separate split. }
The results are reported in Figure~\ref{realdata:fig} and
Table~\ref{tab:recommendations}.  We show performance for \coo\ and
\ind\ arrangements and also for a \mix\ method that
started out with \coo\ arrangements and switched to \ind\ arrangements
at a point determined by a hyperparameter search.  The \mix\ method
was often the best performer.
We observe consistent gains of 3\%-12\% that indicate
that arrangement tuning is an effective tool also on these more
complex real-life data sets.

\begin{figure}[t] 
  \center
  \includegraphics[width=0.23\textwidth]{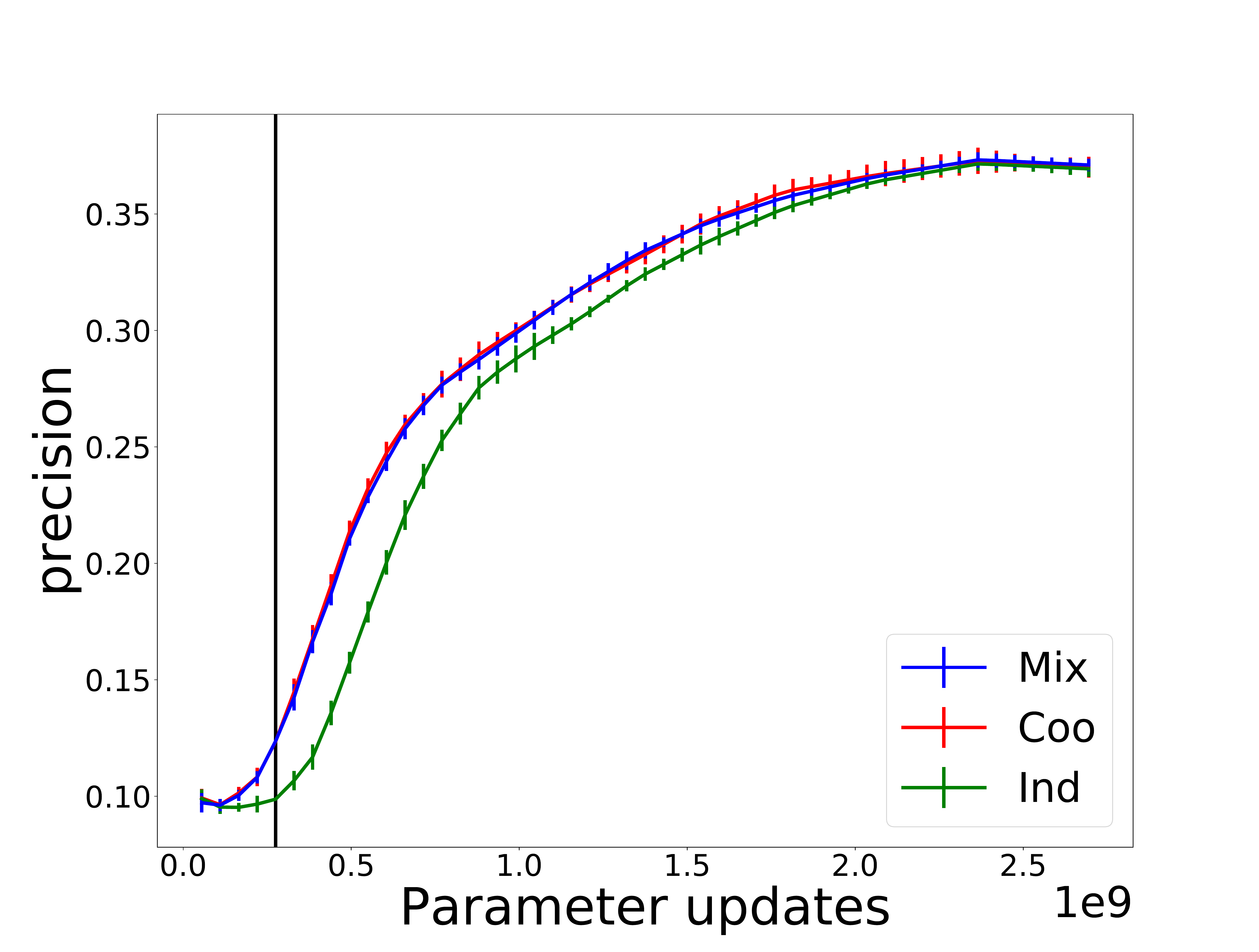}
\includegraphics[width=0.23\textwidth]{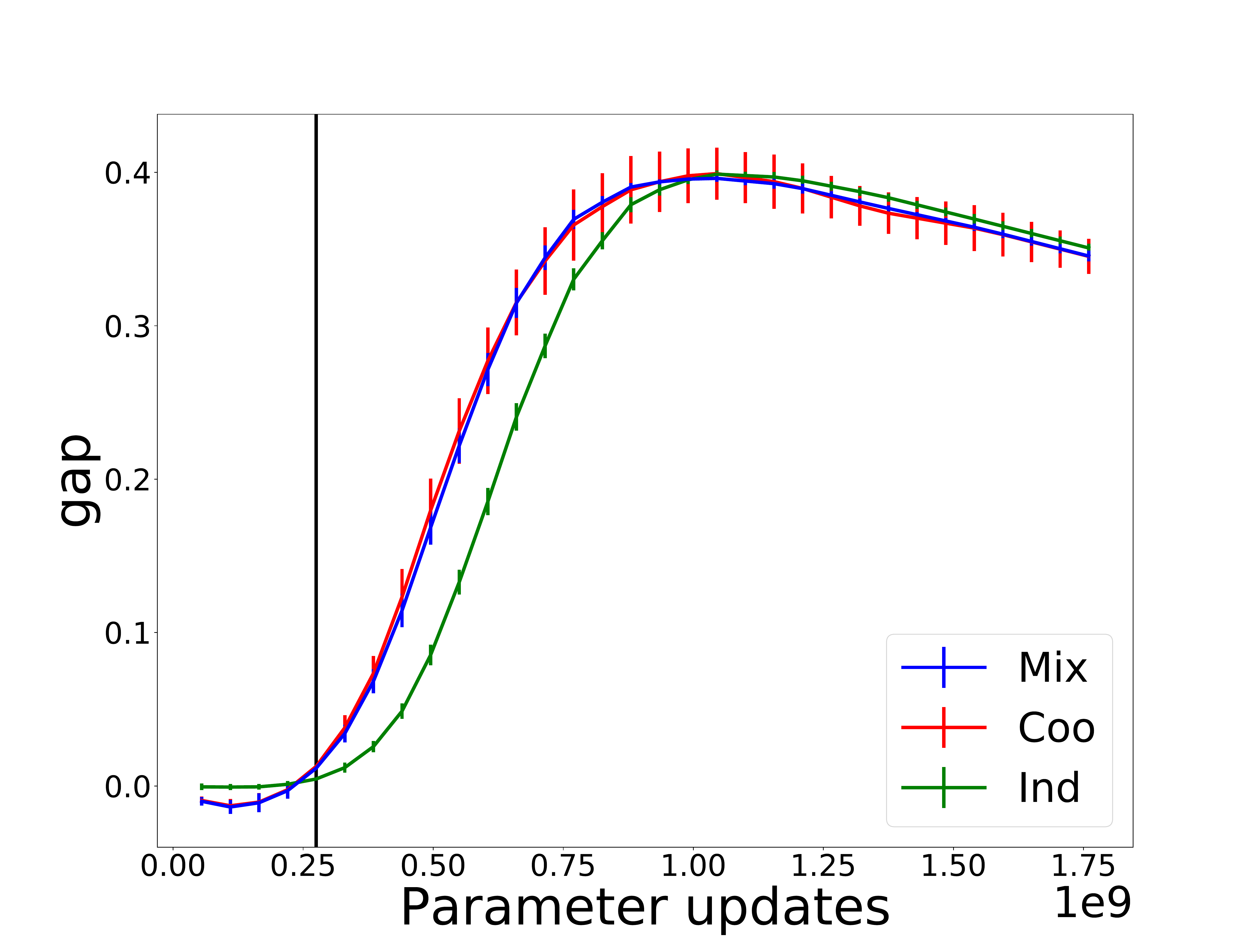}
\includegraphics[width=0.23\textwidth]{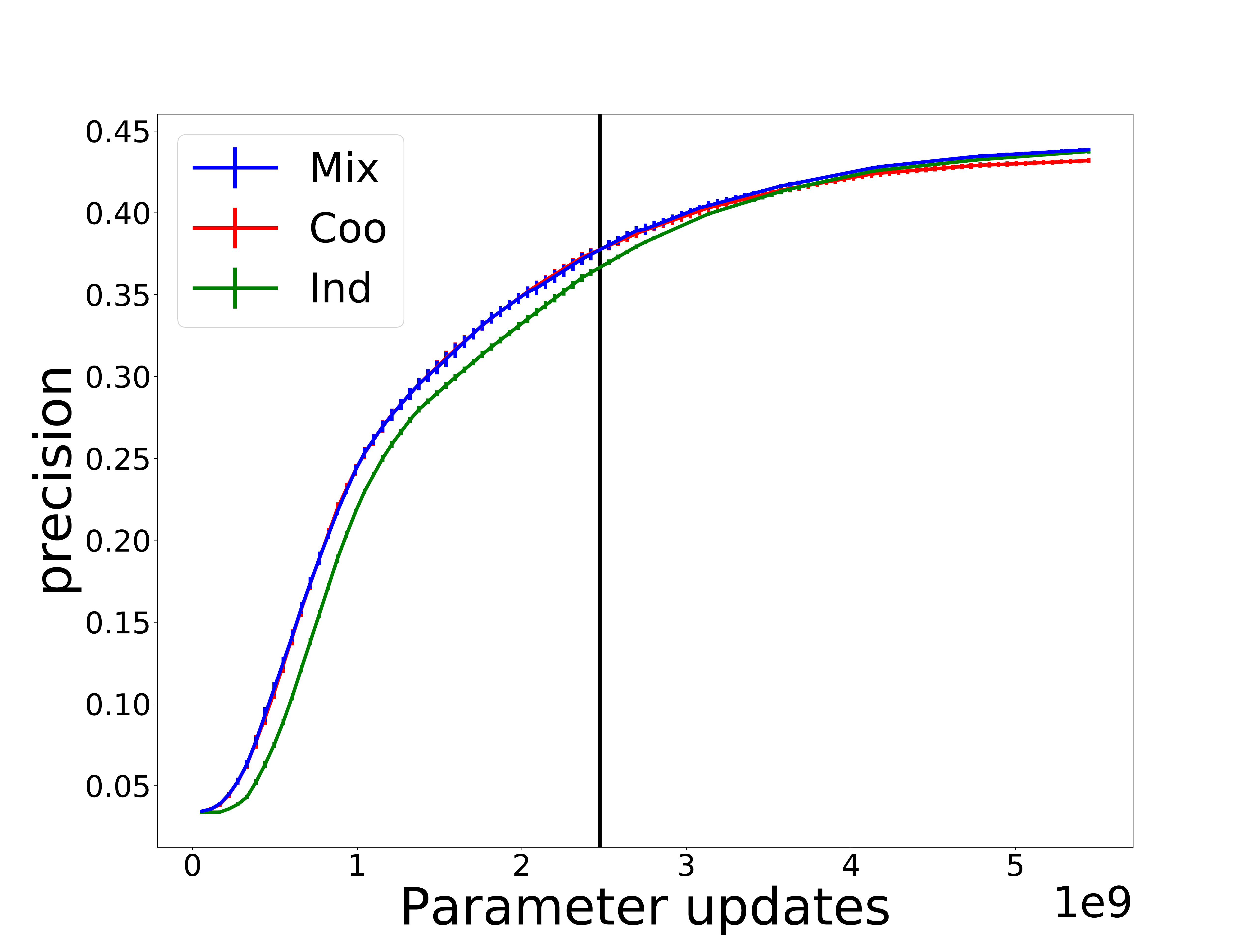}
\includegraphics[width=0.23\textwidth]{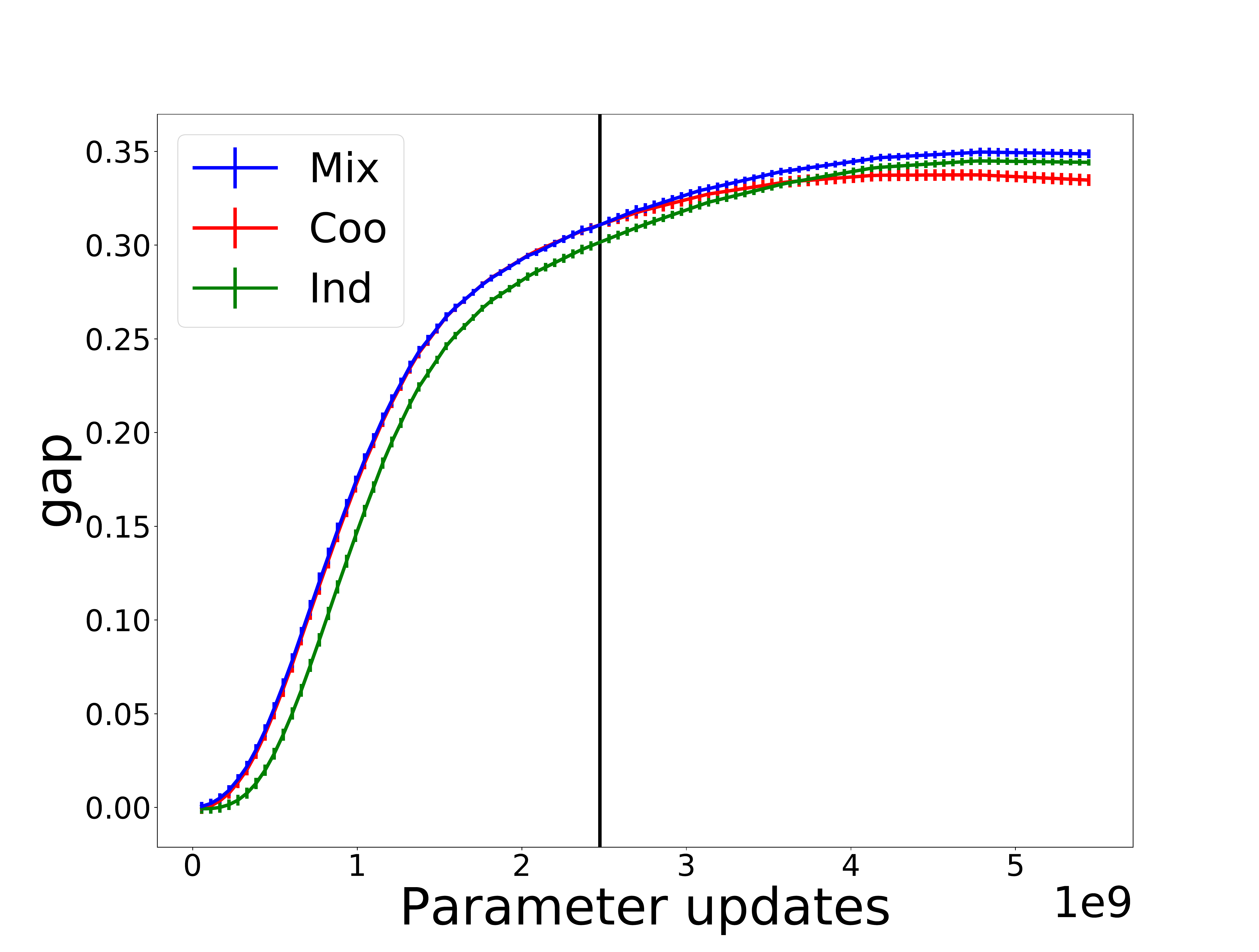}
\caption{Precision at $k=10$ (left) and cosine gap (right) in the
  course of training
  with different arrangement methods on \movielens (top) and \amazon
  (bottom) datasets.
($d=50$, $b=64$). The
  vertical lines indicate the switch point of \mix\ (from \coo\ to
  \ind).}
\label{realdata:fig}
\end{figure}

\begin{table}[]
\scriptsize 
\center 
\begin{tabular}{l|cc|cc|cc} 
   &   \multicolumn{2}{c}{$0.75\times$ peak} &
                                                            \multicolumn{2}{c}{$0.95\times$
                                                  peak} &
                                               \multicolumn{2}{c}{$0.99\times$ peak} \\
     &\%gain &$\times10^9$& \%gain& $\times 10^9$& \%gain& $\times 10^9$\\
\hline 
 \multicolumn{7}{c}{\amazon\ cosine gap:  Gain over \ind\ (peak=$0.35$) }\\
\hline 
Mix@4.5M  & 9.8 &1.56 & 10.29&3.12&\textcolor{blue}{{\bf 12.55}}&3.94\\
Coo &9.48&1.56&7.35&3.12&0&3.94\\
\hline\hline 
\multicolumn{7}{c}{\movielens\ cosine gap: Gain over \ind\ (peak=$0.40)$}\\
\hline 
Mix@0.25M  & 11.94&0.68&7.45&0.82&\textcolor{blue}{{\bf 6.08}}&0.92\\
  Coo  & 11.94	& 0.68 &	4.97&0.82&\textcolor{blue}{{\bf 6.08}}&0.92\\
\hline\hline
 \multicolumn{7}{c}{\amazon\ precision: Gain over \ind\ (peak=$0.44$)}\\
\hline 
Mix@4.5M  &10.76&1.80&3.00&3.40&\textcolor{blue}{{\bf 4.05}}&4.53\\
\hline\hline 
\multicolumn{7}{c}{\movielens\ precision:  Gain  over \ind\ (peak=$0.37)$}\\
\hline 
Mix@0.25M  & 10.24&0.85&4.31&1.66&\textcolor{blue}{{\bf 3.24}}&2.05\\
\end{tabular}
\caption{\amazon\ and \movielens, cosine gap and precision, training
  gain over \ind\ baseline  ($b=64$, $d=50$).}
\label{tab:recommendations}
\end{table}








\section{Example Selection Experiments} \label{selection:sec}

\begin{figure}[h]
  \center
\includegraphics[width=0.21\textwidth]{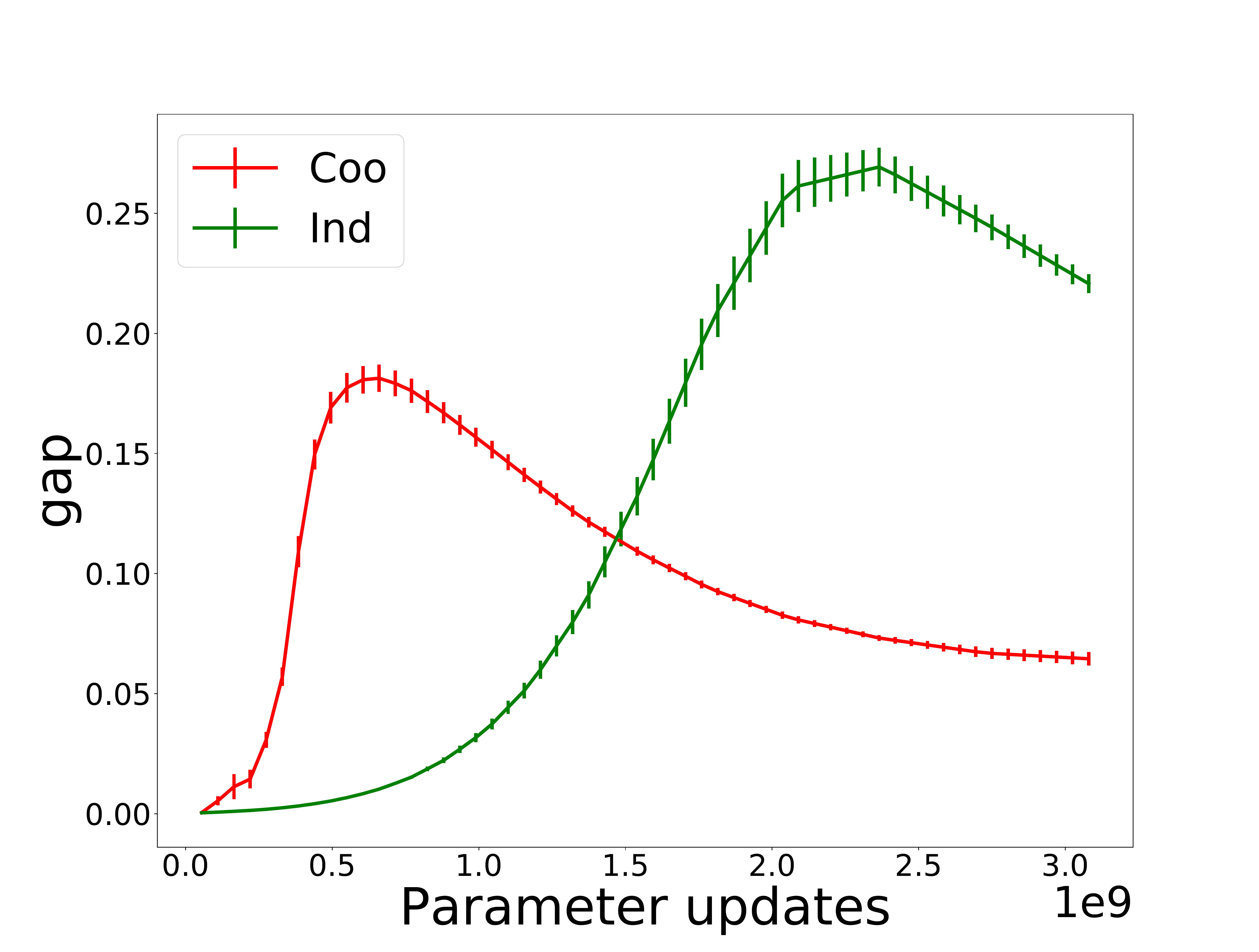}
\includegraphics[width=0.21\textwidth]{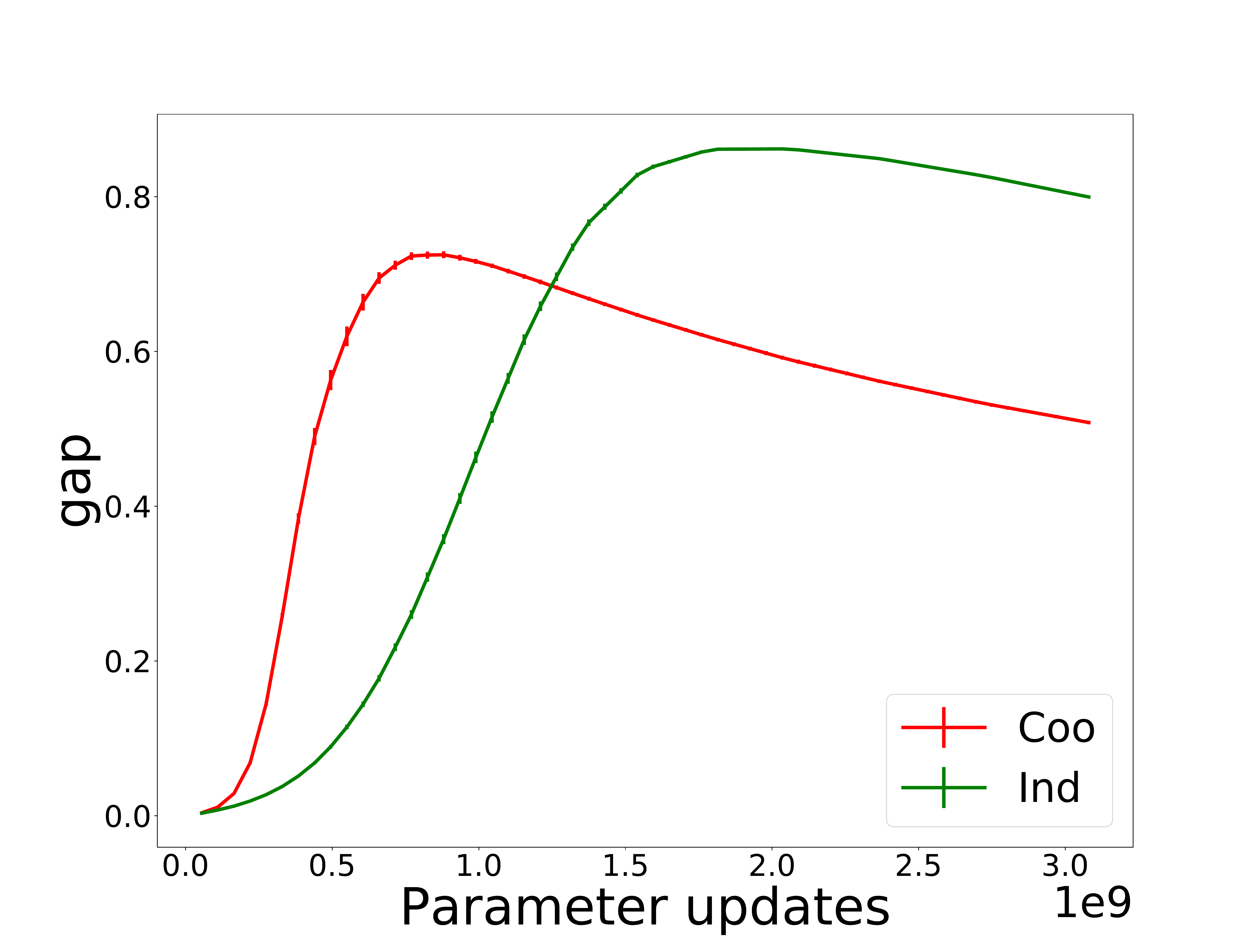}
\includegraphics[width=0.21\textwidth]{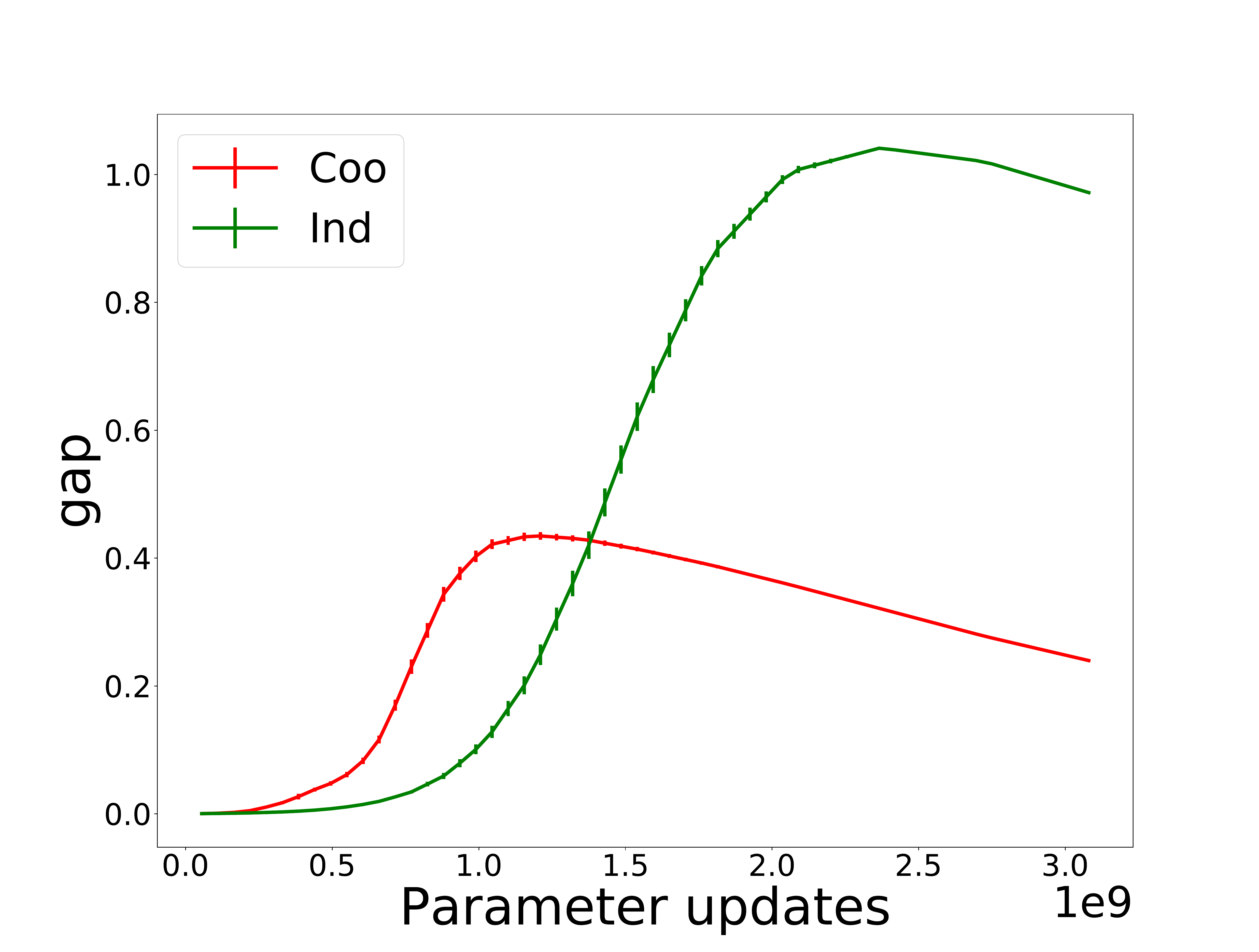}
\includegraphics[width=0.21\textwidth]{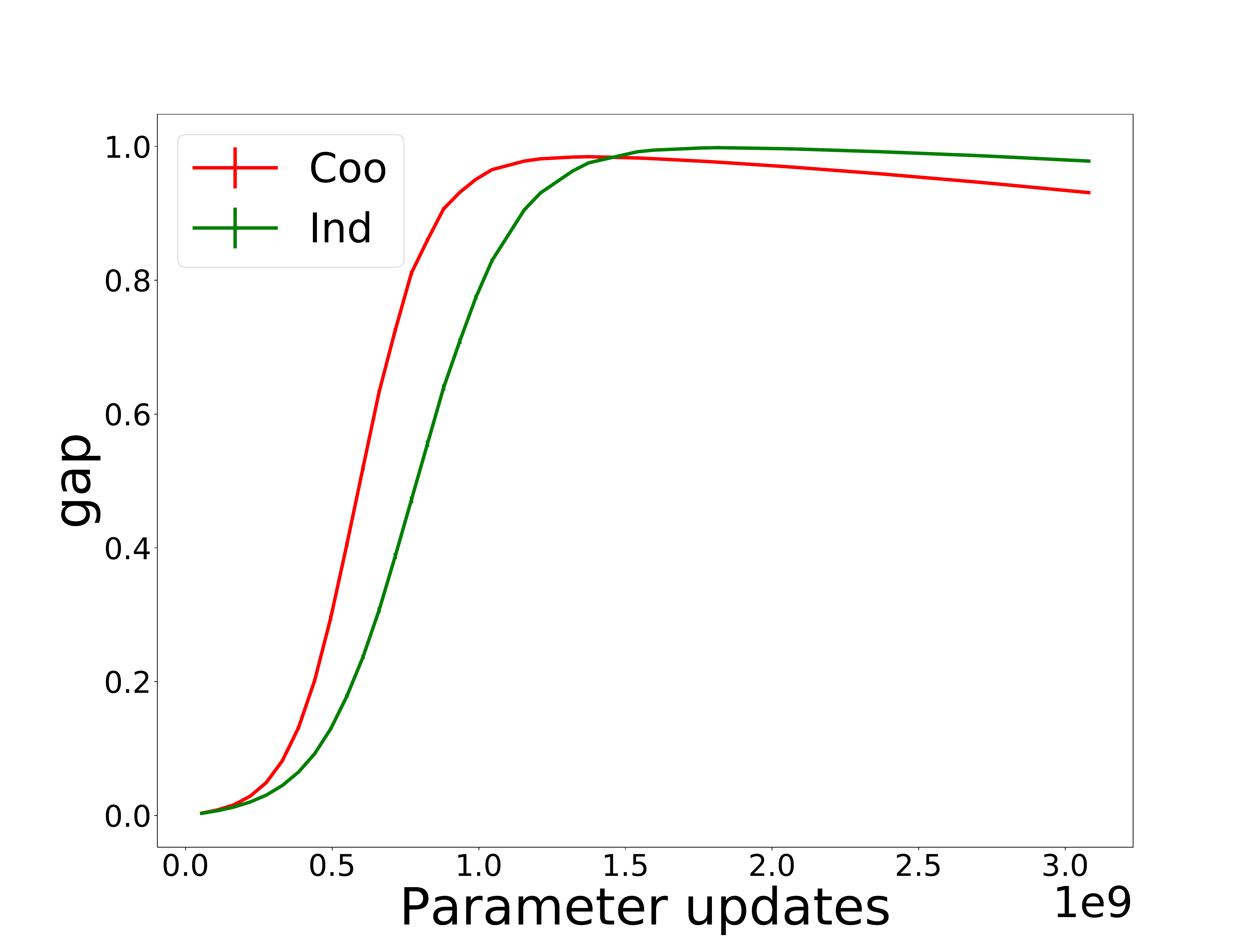}
  \ignore{
\includegraphics[width=0.23\textwidth]{synthetic_10000_10000_10_0_70_10000000_t_size_5_train_selection_scale_0_set_0_budget_b_size_4_lr__step_pol_4_0.pdf}
\includegraphics[width=0.23\textwidth]{synthetic_10000_10000_100_0_70_10000000_t_size_5_train_selection_scale_0_set_0_budget_b_size_4_lr__step_pol_4_0.pdf}
\includegraphics[width=0.23\textwidth]{synthetic_10000_10000_10_0_70_10000000_t_size_20_train_selection_scale_0_set_0_budget_b_size_4_lr__step_pol_4_0.pdf}
\includegraphics[width=0.23\textwidth]{synthetic_10000_10000_100_0_70_10000000_t_size_20_train_selection_scale_0_set_0_budget_b_size_4_lr__step_pol_4_0.pdf}
}
\caption{Training with selected examples (sub-epoch with \ind\
  and \coo\ arrangements).  Stochastic blocks datasets and minibatch
  size parameter $b=4$.
Left to right:  $(T=5,B=10)$;   $(T=5,B=100)$, $(T=20,B=10)$, $(T=20,B=100)$. 
\label{budgetR:fig}}
\end{figure}

In this section we empirically explore 
the qualities of the information contained in
sub-epochs when using \coo\ and \ind\ arrangements.
We select a small set of training examples that corresponds to 
sub-epochs with different arrangements and then train to
convergence using multiple epochs on only the selected examples.
We sampled  $T=5,10,15,20$ examples from each row (for focus
 updates) and symmetrically from each
 column (for context updates) of the association matrix.  
 To emulate a sub-epoch of \ind\ arrangement, we select $T$ independent examples
 from each row $i$ by selecting
 a column $j$ with probability $\kappa_{ij}/\|\kappa_{i\cdot}\|_1$.
 To emulate a sub-epoch with \coo\ arrangement 
 we repeat the following $T$ times. 
We draw $u_j\sim \Exp[1]$ for each column and select for each row $i$
the column $\arg\max_j \kappa_{ij}/u_j$.   Clearly the marginal
distribution of both selections is the same: The probability that 
  column $j$ is selected for row $i$  is equal to 
  $\kappa_{ij}/\|\kappa_{i\cdot}\|_1$.
Symmetric schemes apply to columns. 

 We trained embeddings (with no bias parameter and using \ind\
 arrangements) on these small subsets of examples using identical setups.  Updates were
 according to minibatch designation:
Row samples used for updating
row embeddings and  column samples for updating column embeddings. 
 Representative results are reported in Figure~\ref{budgetR:fig}.
We observe that \coo\  selection
consistently results in faster early training than 
\ind\ selection but
sometimes reaches a lower peak.
We explain the faster early training by the \coo\ selection preserving 
short-range similarities (based on first-hop relations) and the lower peak by 
lost ``long-range'' structure (that reflects longer-range relations as
those captured by longer random walks and metrics such as personalized
page rank). 
Our \coo\ arrangements which use the complete set of examples
retain both the short-range benefits of 
\coo\ selections and the long-range benefits
of \ind\ selections.
\ignore{
 With fewer examples per entity, coordinated selection
also had a higher peak quality than  the respective 
independent selection.  With more examples and larger blocks, 
the coordinated selection peaked lower, due to loss of the multi-hop expander structure.}


    \section{Conclusion} \label{conclu:sec}

    We demonstrated that SGD can be accelerated with principled arrangements of training 
    examples that are mindful of the ``information flow'' through gradient
    updates.
\notinproc{

   We mention some directions for followup work.  Our arrangements respect a specified marginal distribution of
    training examples and hence
 can be combined and explored with methods, such as curriculum learning,  that
 specify the distribution and even modify it in the course of training.
We explored here one design goal of
self similarity, where sub-epochs preserve structural properties of
the full data.  Our case study further focused on pairwise associations
(``matrix factorization''  with  the SGNS objective), and preserving weighted Jaccard
similarities of rows and columns.  It is interesting to explore extensions that include
other loss objectives and deeper networks  and design principled
arrangements suitable to more complex association structures.  

}
\ignore{
showed that arrangements
    can be a powerful optimization knob.
    
We consider embedding computations with stochastic gradients and 
establish that the arrangement of training examples into
minibatches can be a powerful performance knob.
In particular, we introduced coordinated arrangements as a principled method to
accelerate SGD training of embedding vectors.
  Our experiments focused on the popular SGNS loss and our methods were designed for pairwise associations.   In future we hope to explore the use of coordinated
arrangement with other loss objectives, deeper networks, and more complex association structures. 
}

\onlyinproc{\newpage}
\section*{Acknowledgements}
We are grateful to the anonymous ICML 2019 reviewers for many helpful
comments that allowed us to improve the presentation.  This research is partially supported by the Israel Science Foundation (Grant No. 1841/14).
\bibliographystyle{icml2019}
\bibliography{cycle}
\onlyinproc{\end{document}}
\appendix

\section{Embedding Dimension Sweep} \label{dimanalysis:sec}

We trained with different dimensions ($d=5$ to $d=200$) in order to
understand the impact of the choice of dimension on final quality and
convergence.  We report
results for
  \ind\ arrangements  as the relative behavior of arrangement methods
  was similar across dimensions with fixed training rate of $\eta=0.02$.
  The quality in the course of training (cosine gap and precision at
  $k=10$) is reported in
Figure~\ref{dimanalysisRW:fig}. 
For each experiment we report the training amount both in terms of the
total number of parameter updates performed and the total
multiplicity of positive examples processed during training.  Note
that the ratio of the number of updates to the multiplicity of
examples is $\lambda d$, where $\lambda$ is the ratio of negative to positive
examples and $d$ is the dimension.  Therefore, the measures are
linearly related for a fixed dimension but the number of updates per
processed example increases with the dimension.
\begin{figure*}[hbt!]
  \center
{\small \amazon\ dataset}\\
\includegraphics[width=0.22\textwidth]{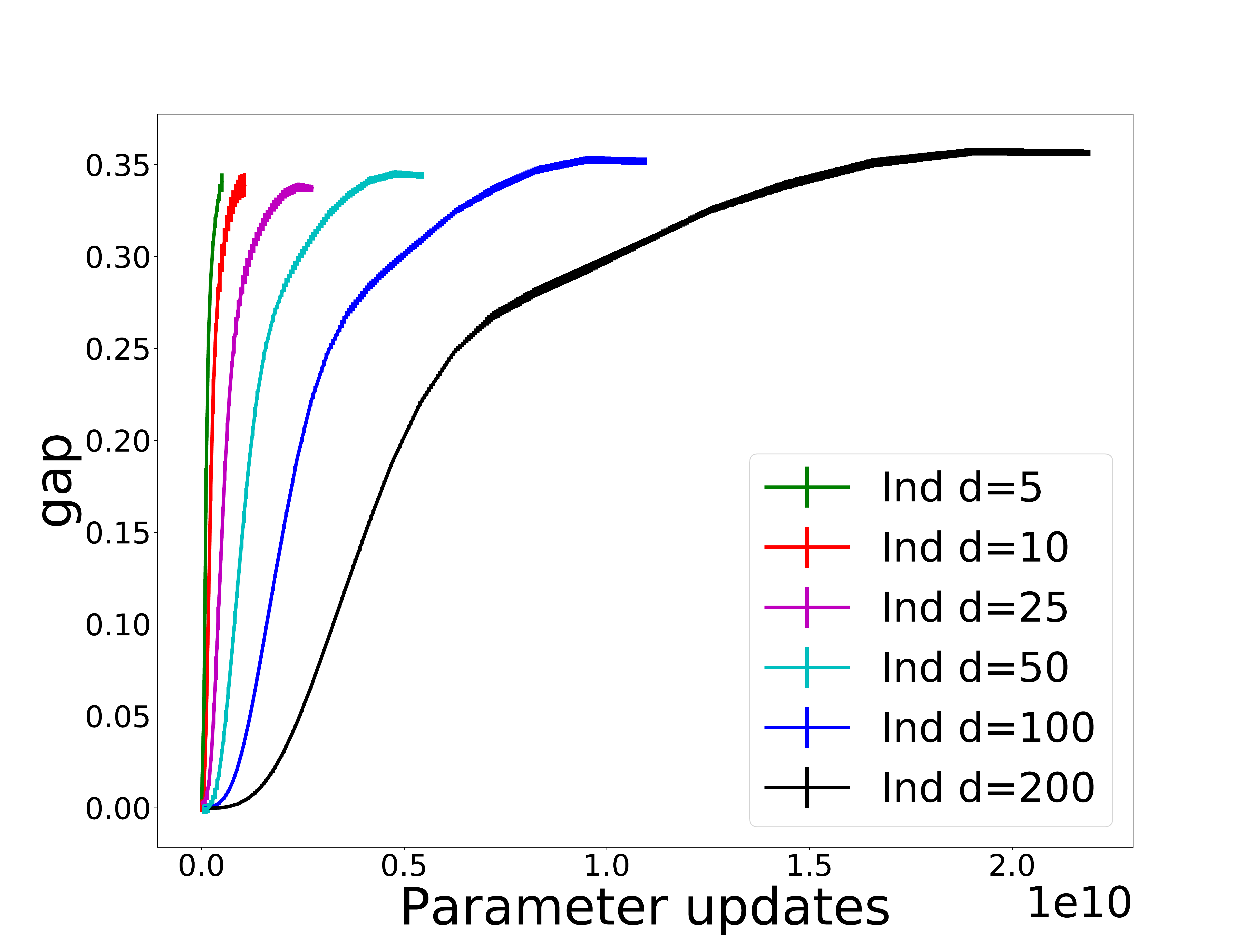}
\includegraphics[width=0.22\textwidth]{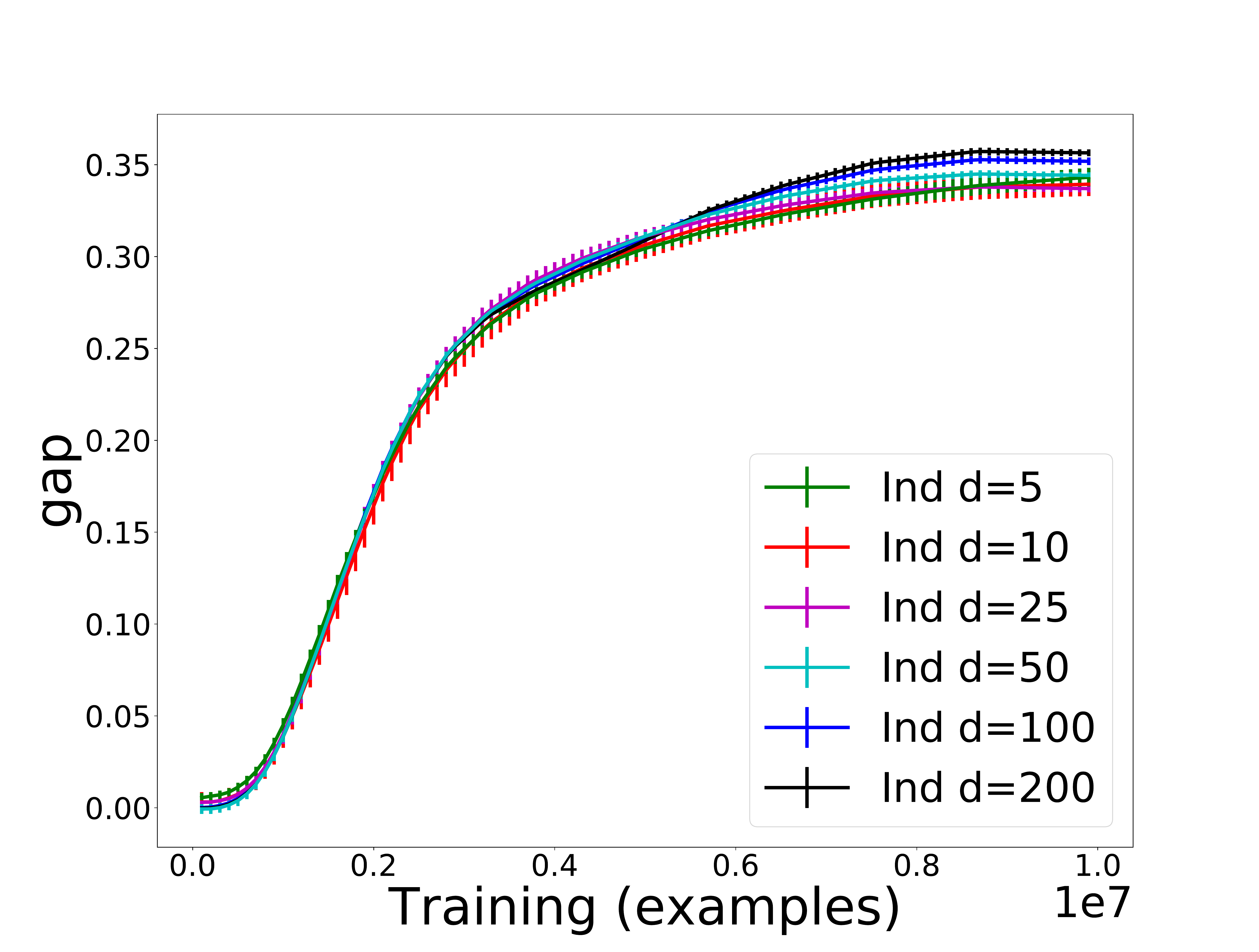}
\includegraphics[width=0.22\textwidth]{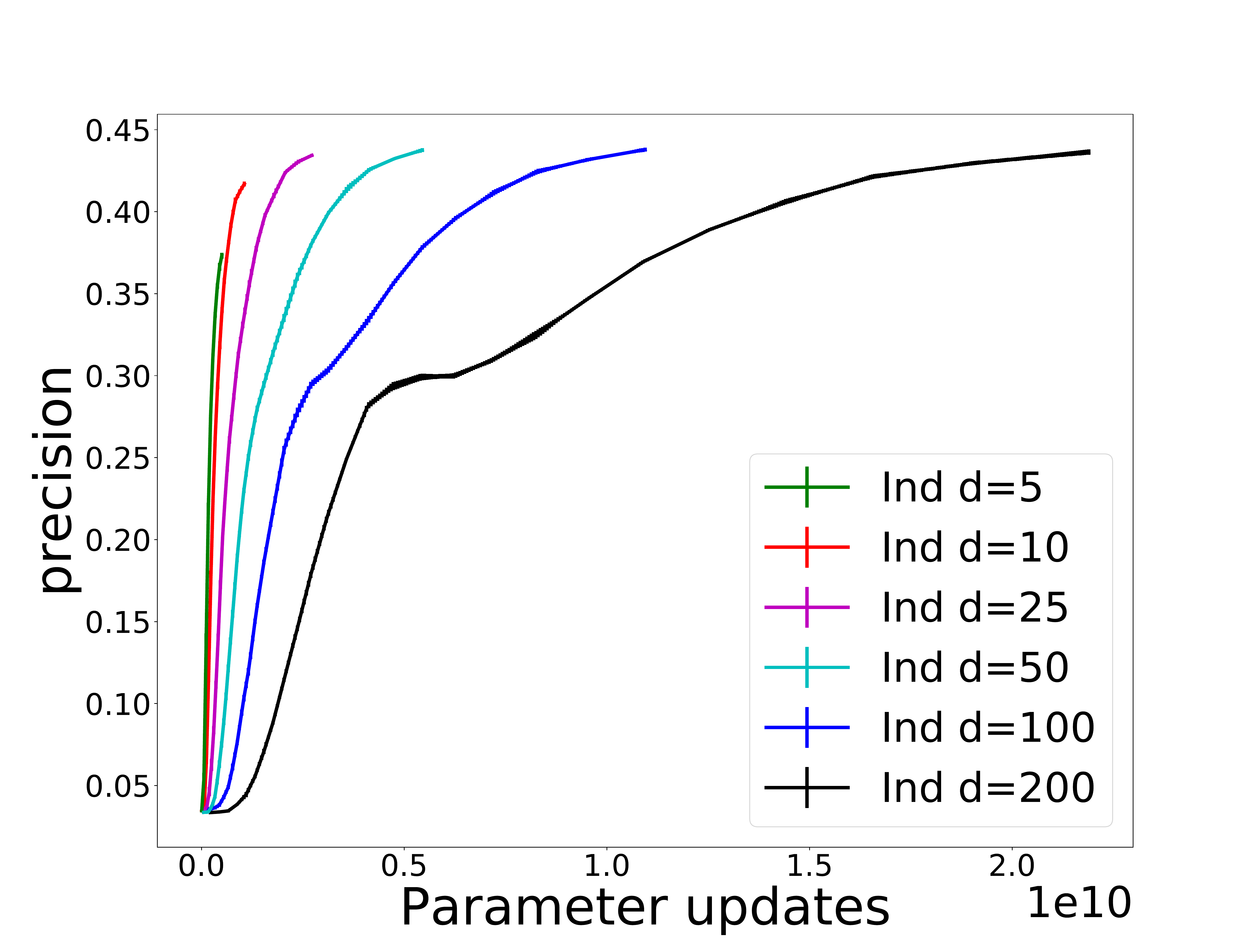}
\includegraphics[width=0.22\textwidth]{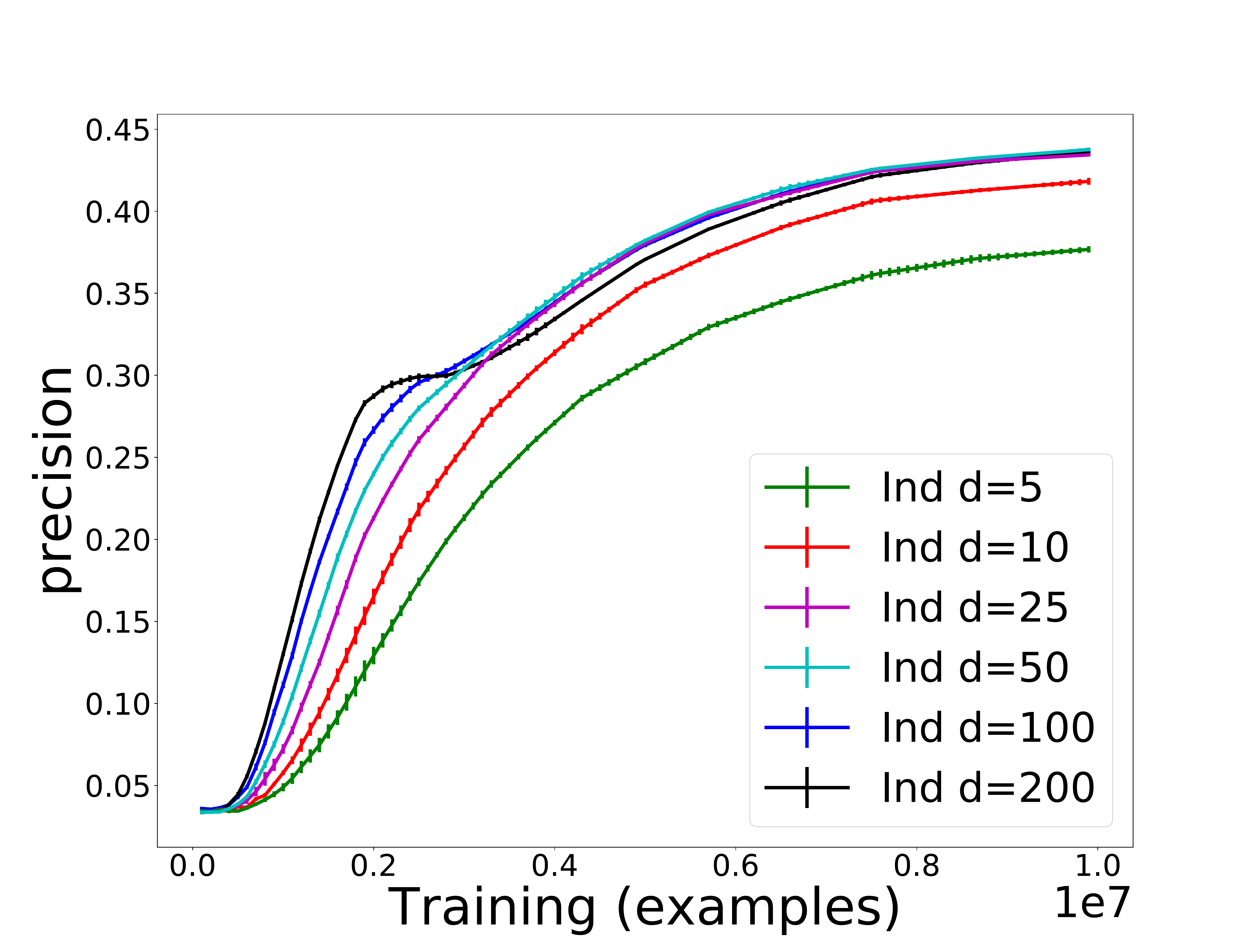}\\
{\small \movielens\ dataset}\\
\includegraphics[width=0.22\textwidth]{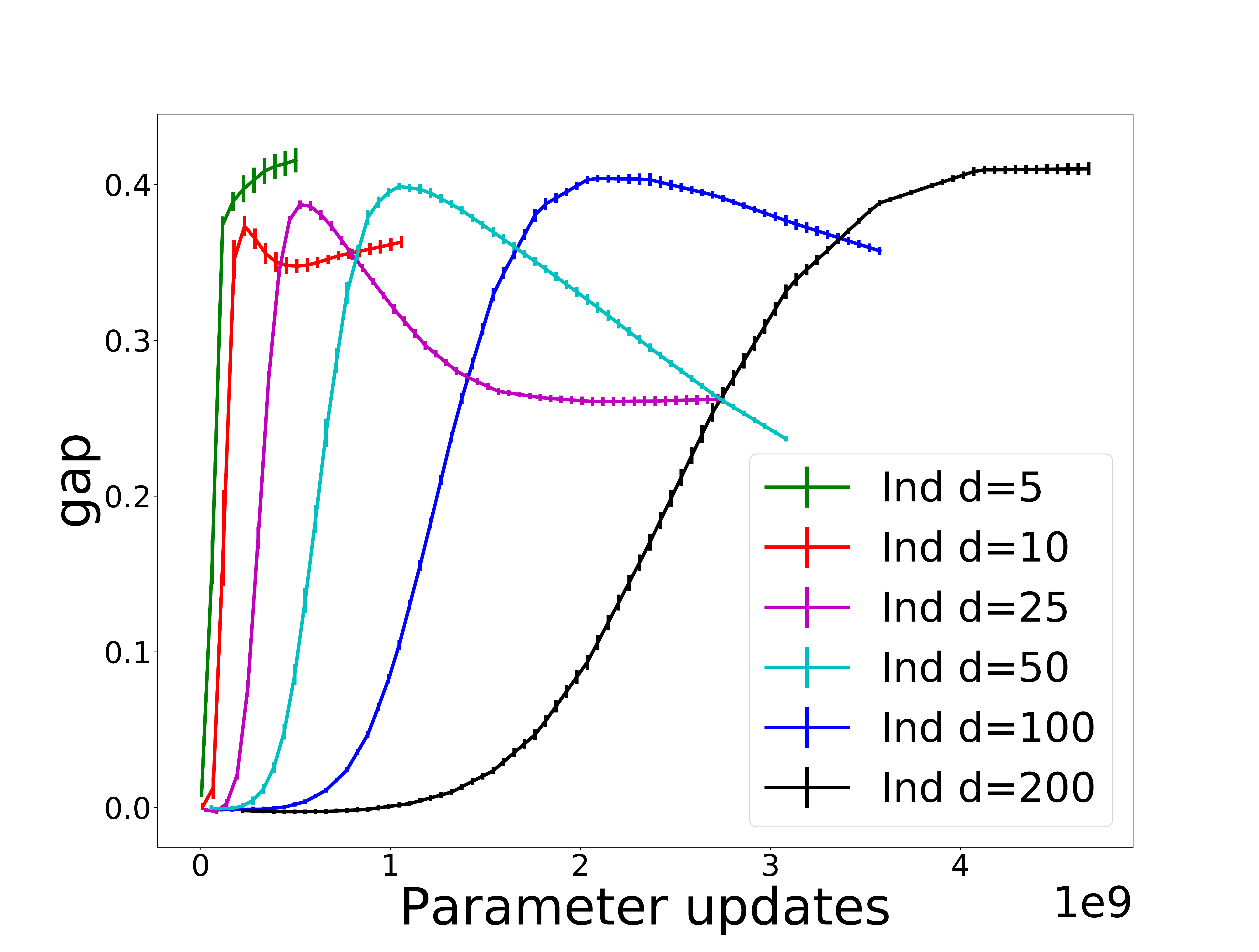}
\includegraphics[width=0.22\textwidth]{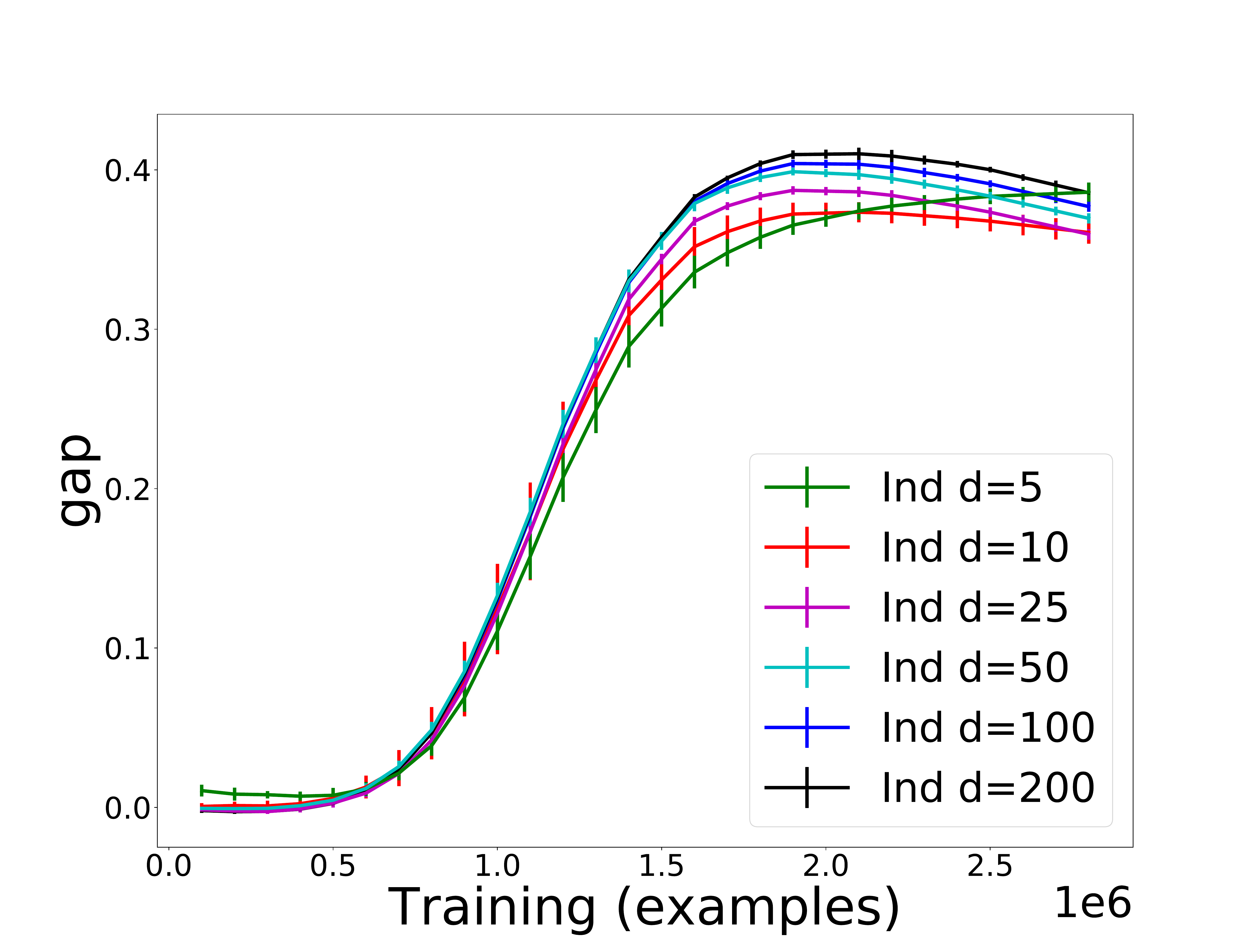}
\includegraphics[width=0.22\textwidth]{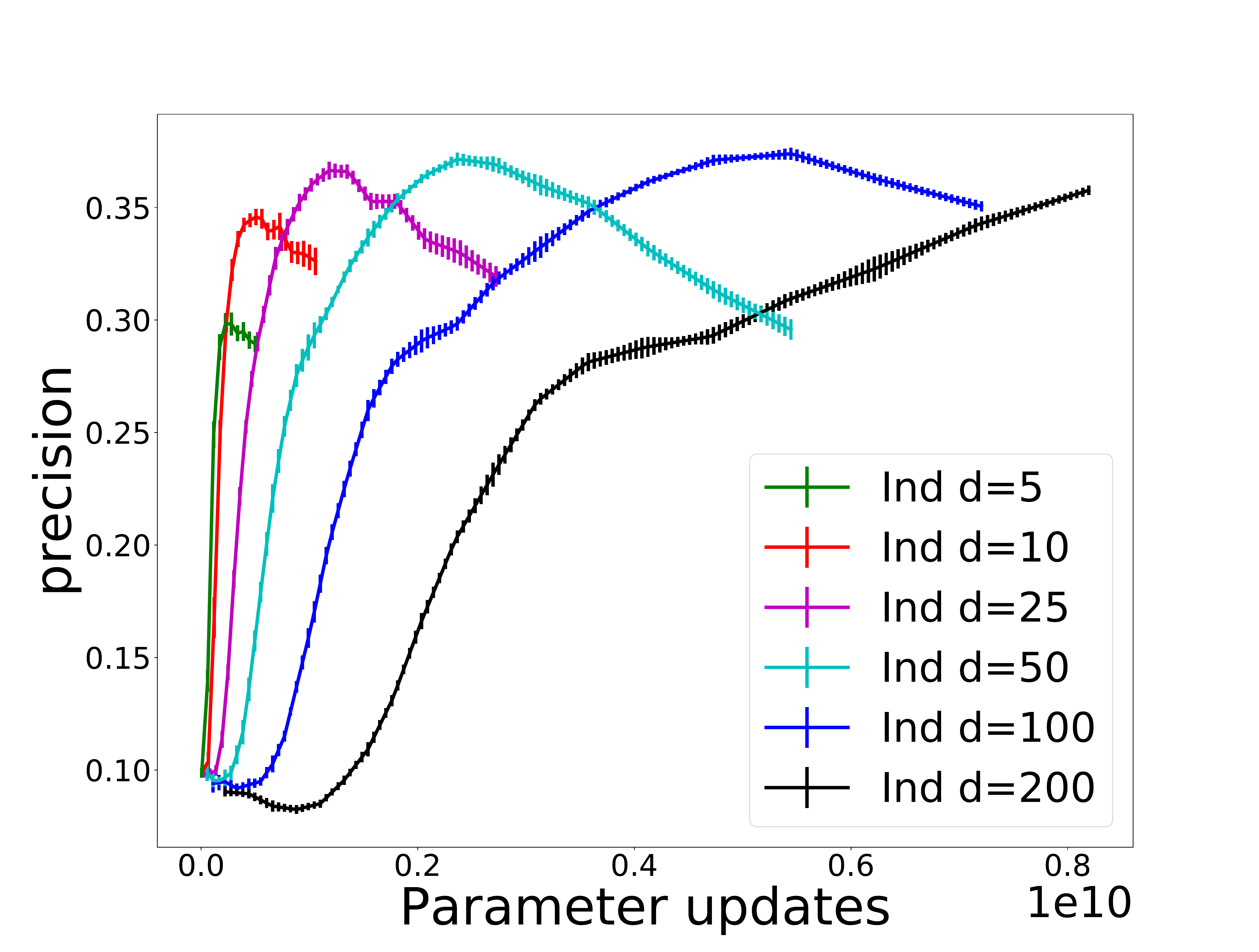}
\includegraphics[width=0.22\textwidth]{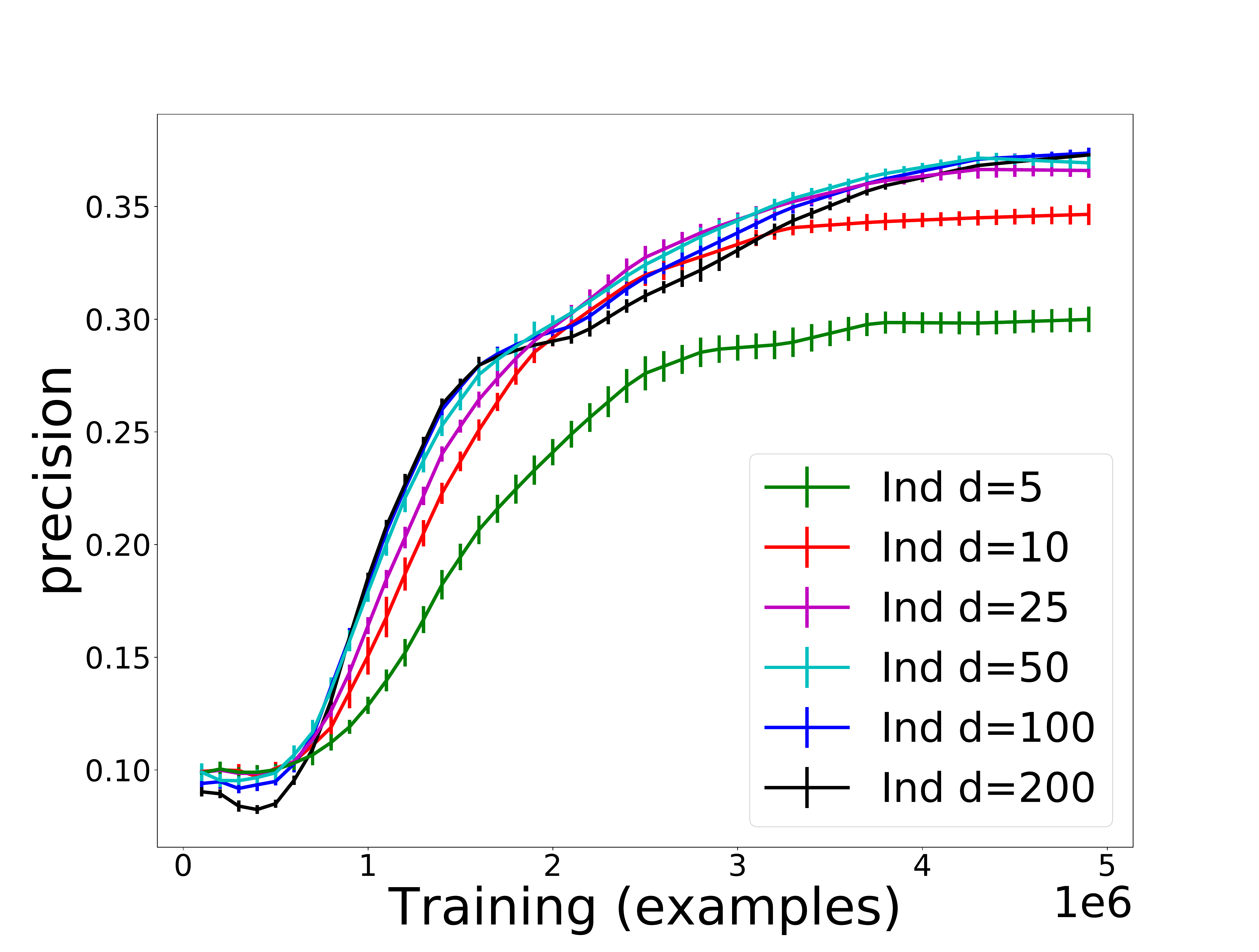}\\
{\small $10^4\times 10^4$ stochastic blocks dataset ($B=10$ blocks)}\\
\includegraphics[width=0.22\textwidth]{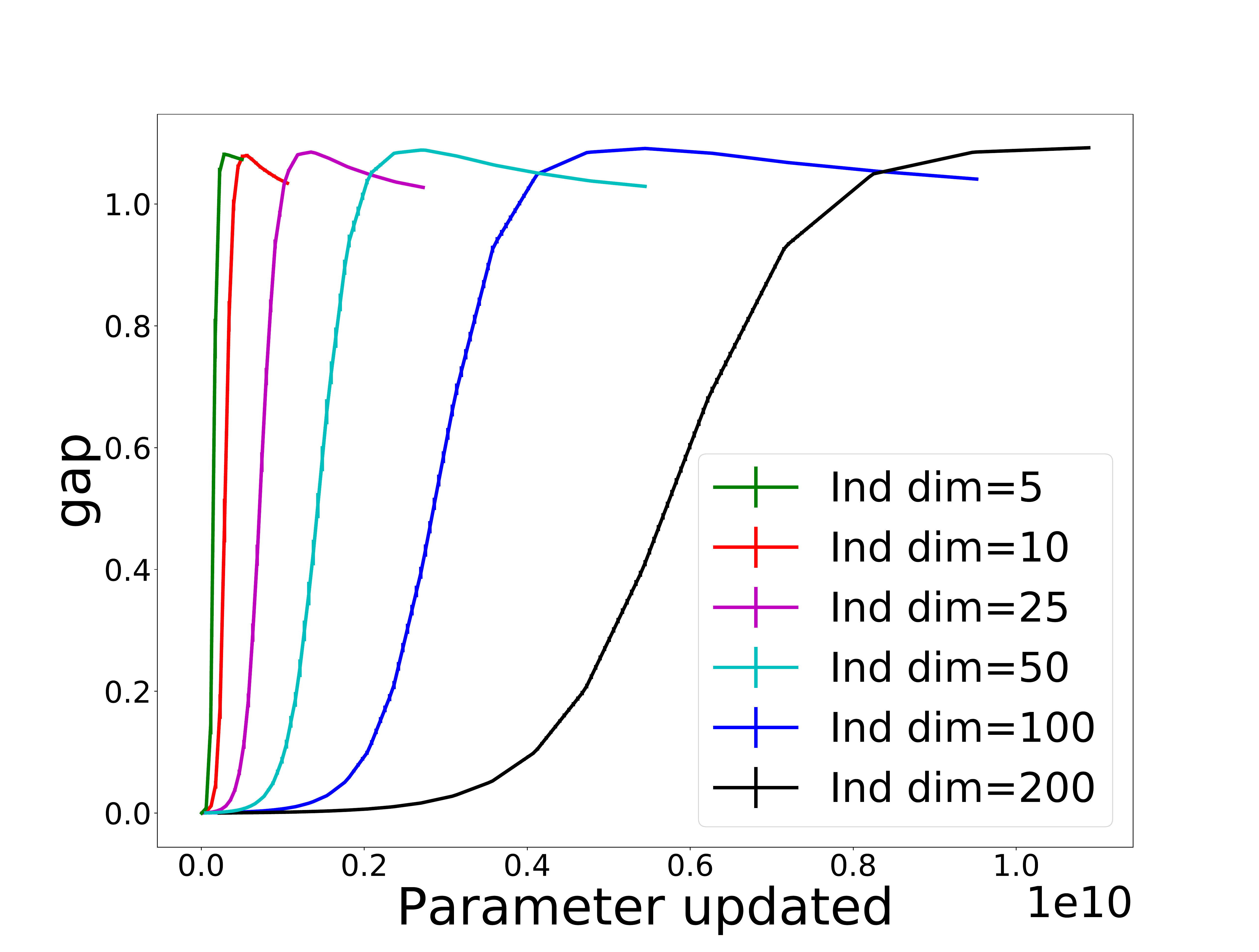}
\includegraphics[width=0.22\textwidth]{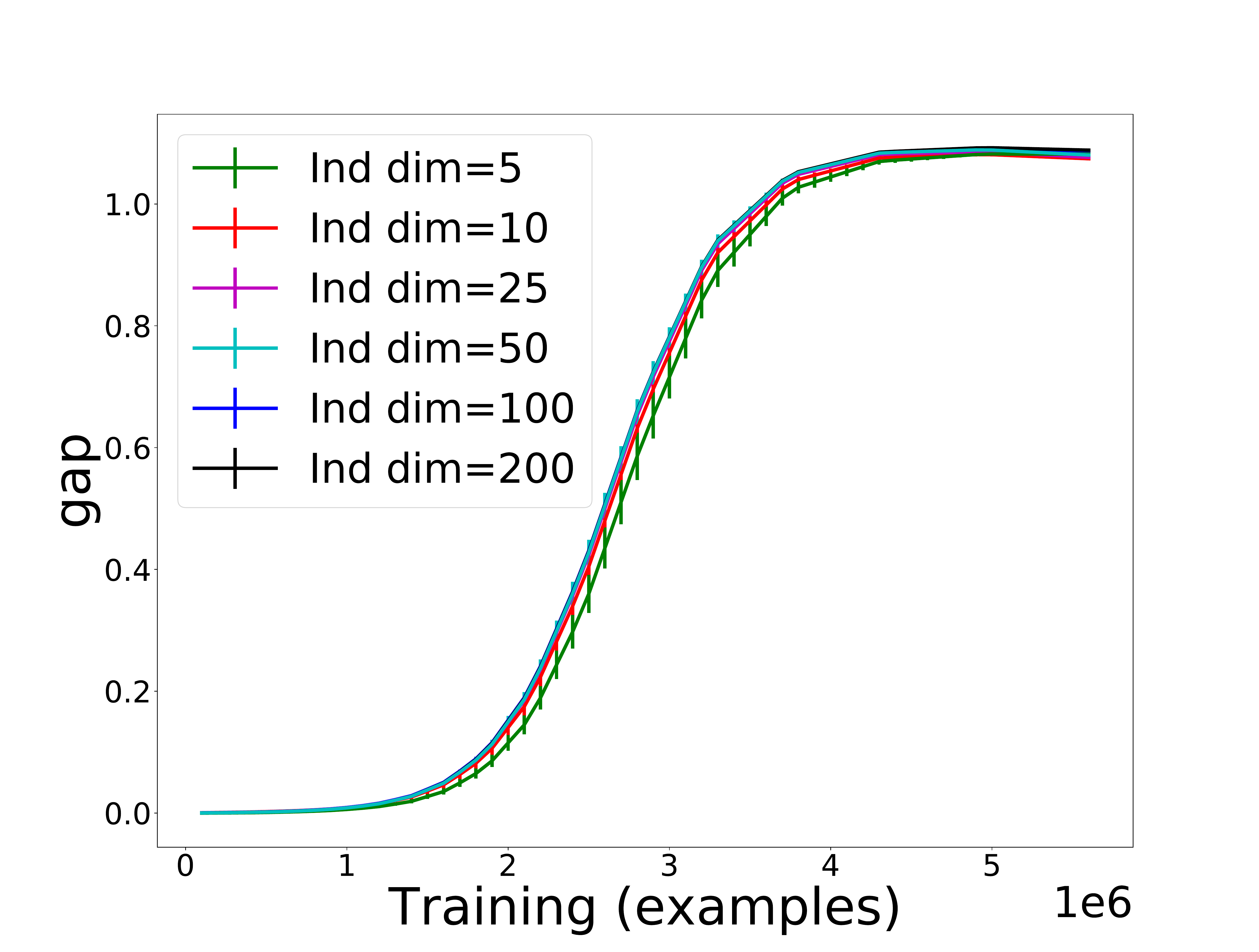}
\includegraphics[width=0.22\textwidth]{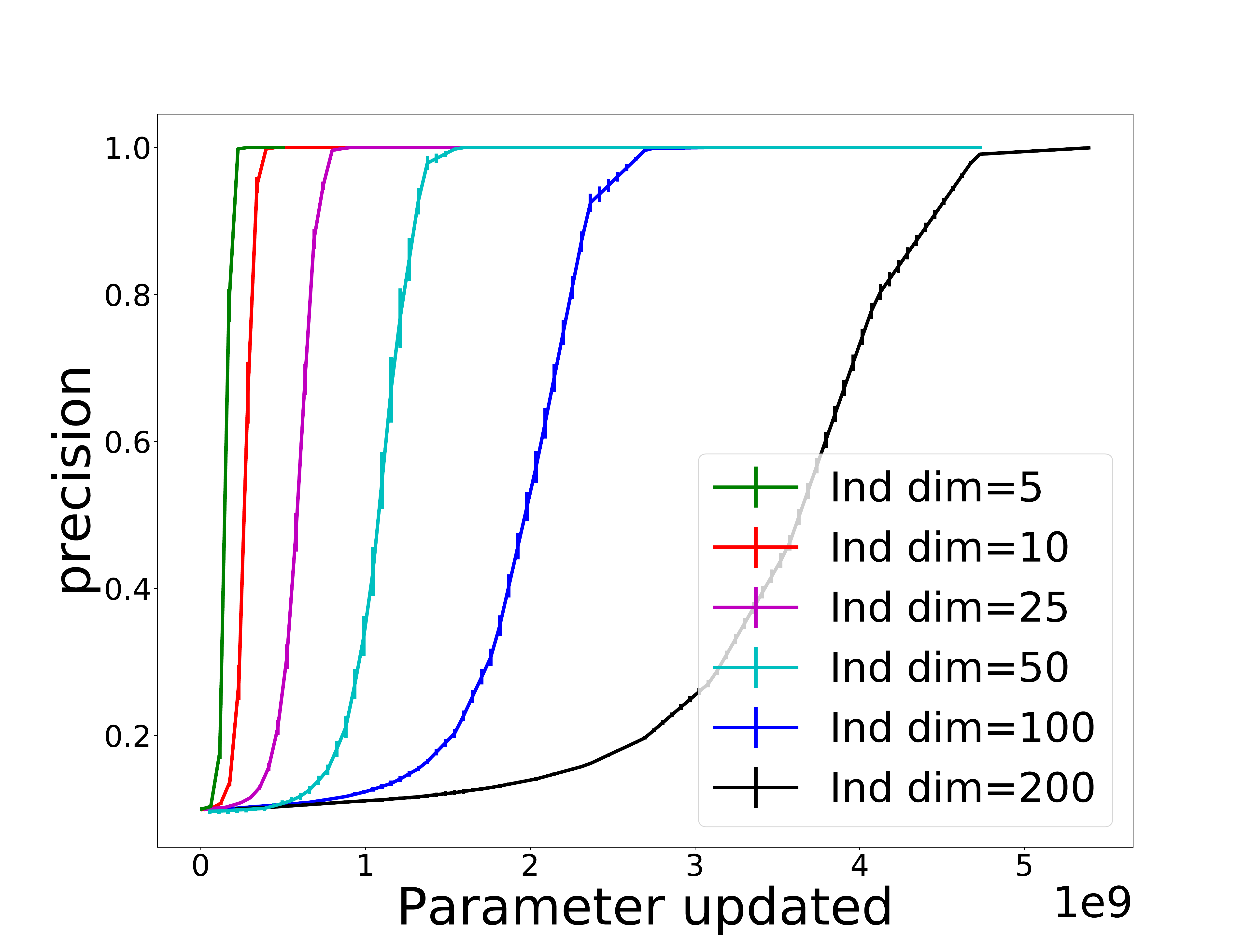}
\includegraphics[width=0.22\textwidth]{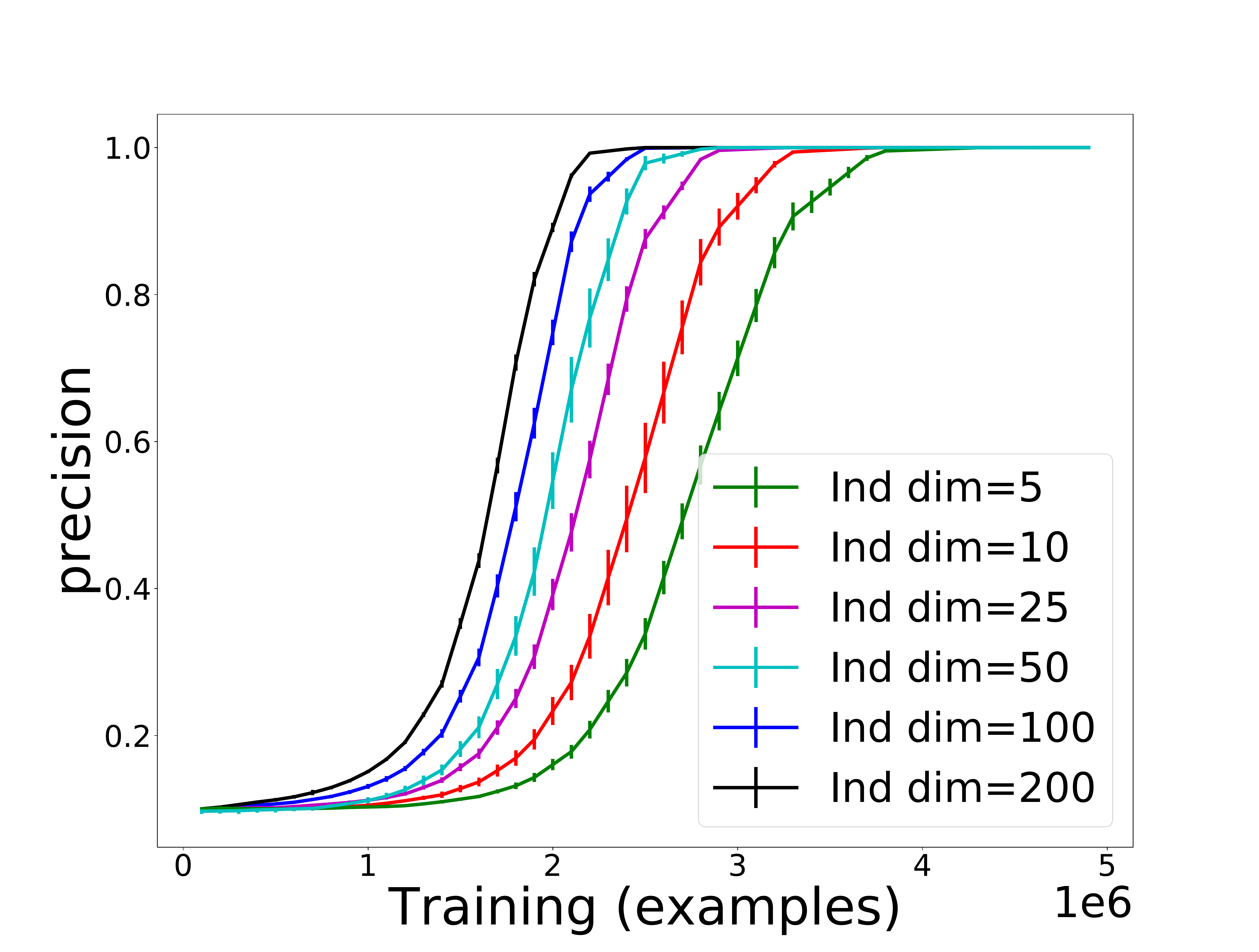}\\
{\small $10^4\times 10^4$ stochastic blocks dataset ($B=100$ blocks)}\\
\includegraphics[width=0.22\textwidth]{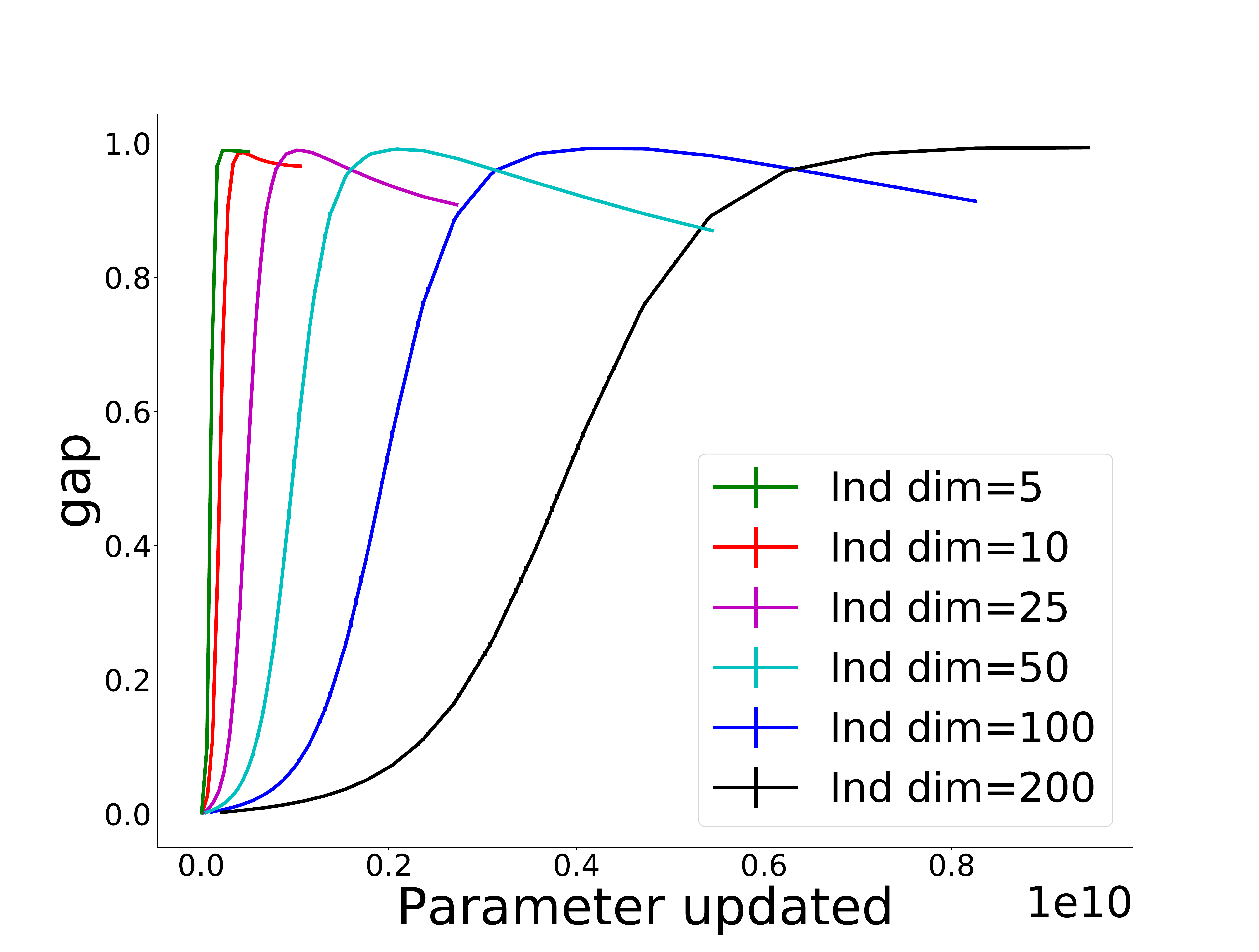}
\includegraphics[width=0.22\textwidth]{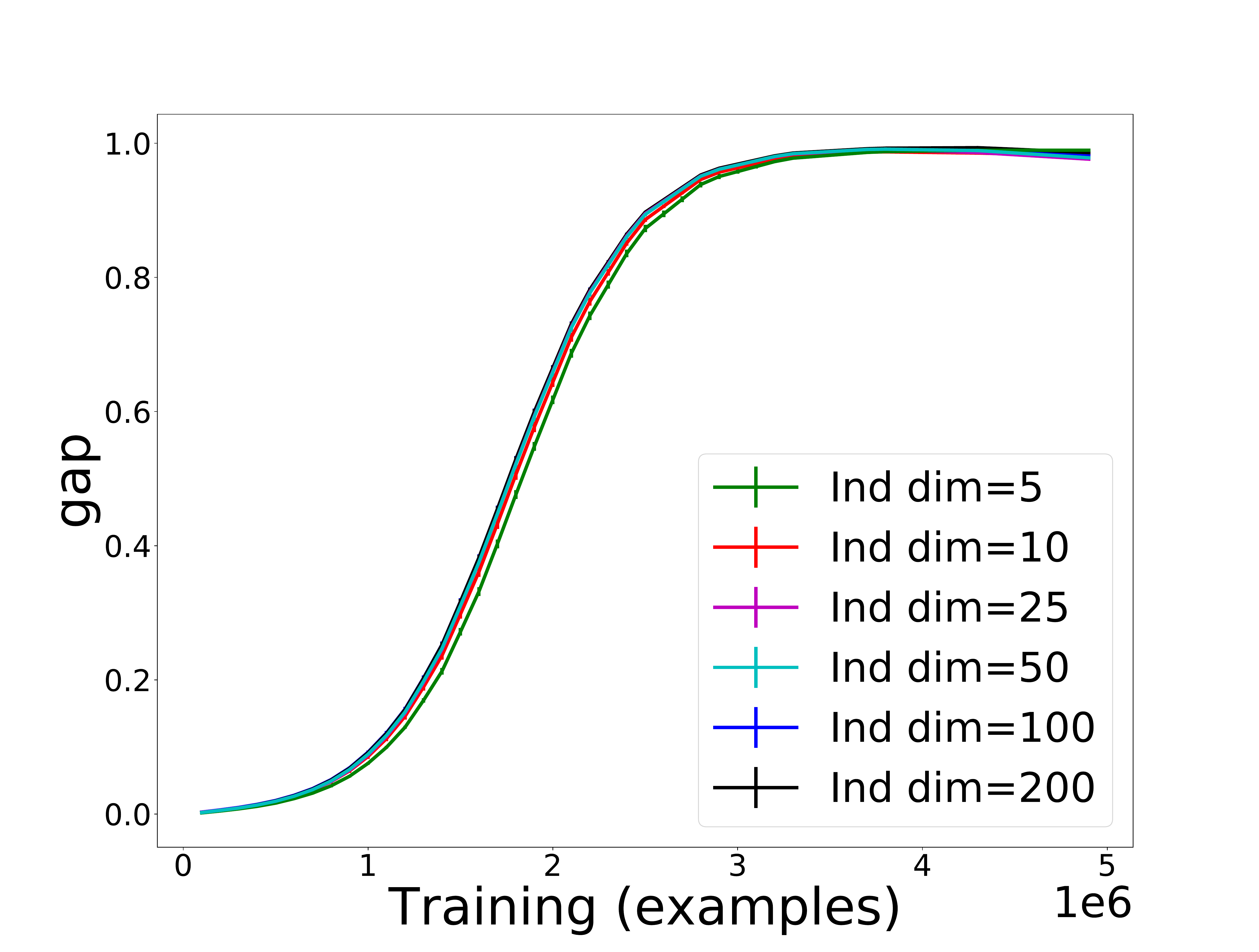}
\includegraphics[width=0.22\textwidth]{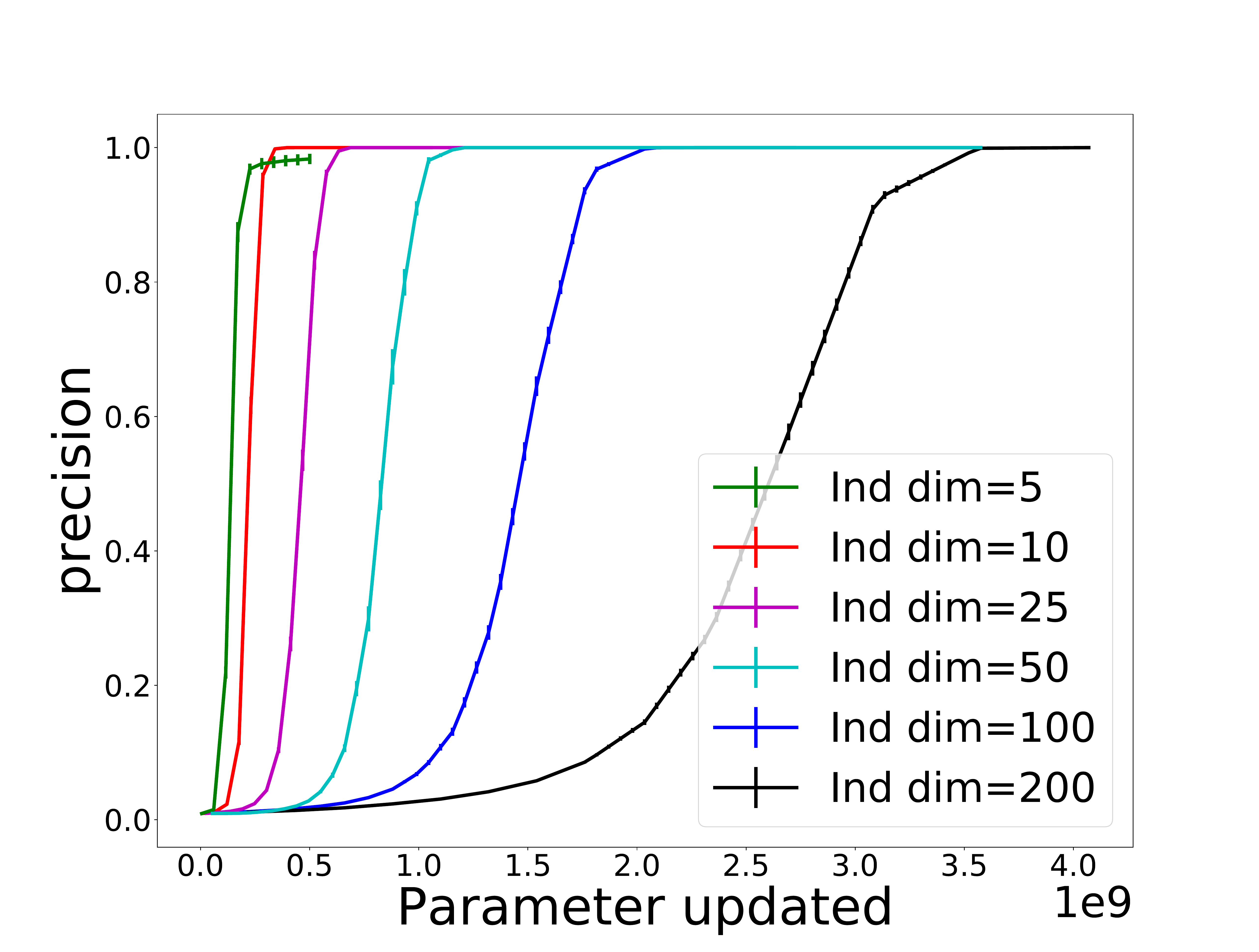}
\includegraphics[width=0.22\textwidth]{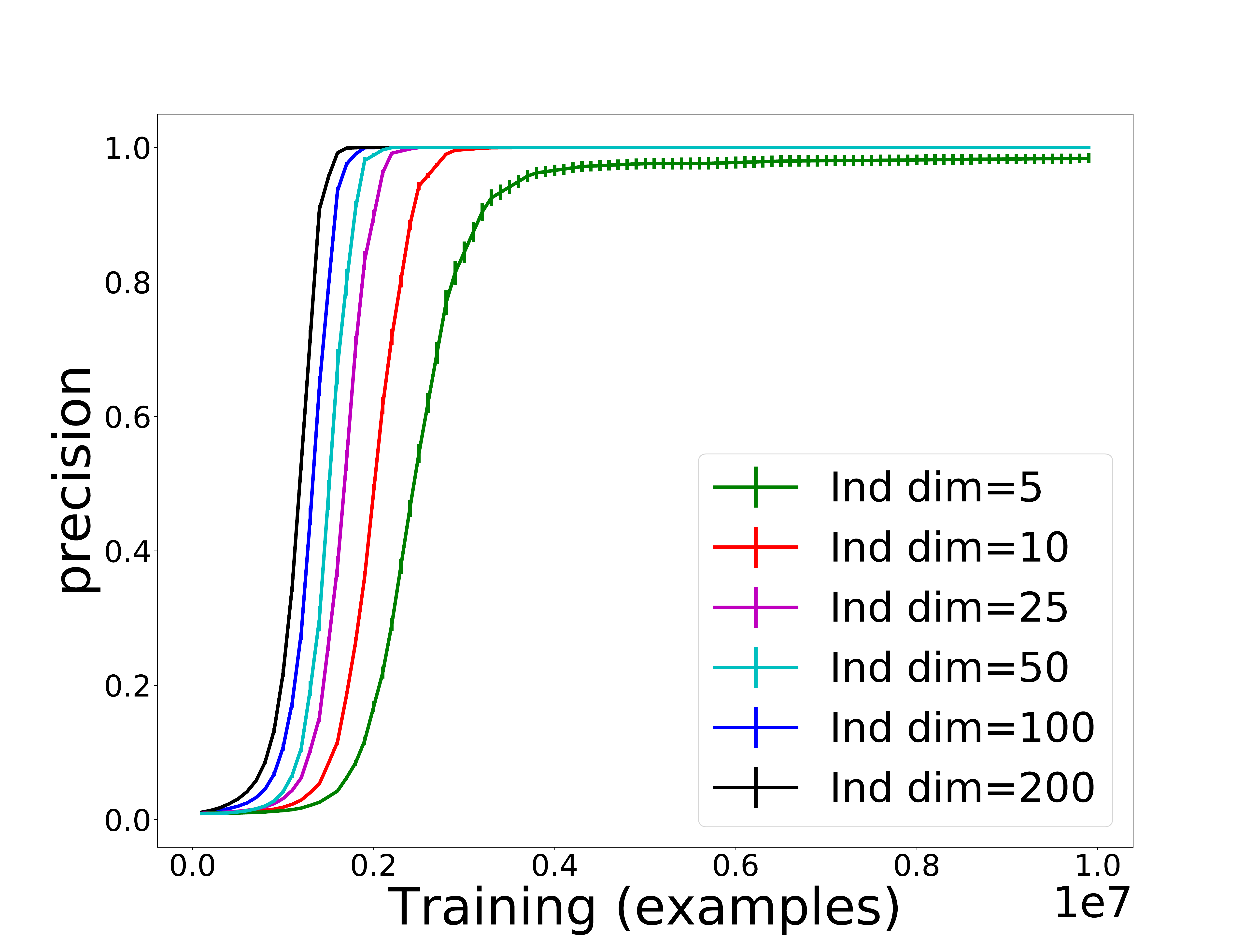}
\caption{Training  (\ind\ with $b=64$) with different
  dimensions. Cosine gap and Precision for $k=10$ for 
 \movielens,  \amazon, and stochastic blocks datasets.  Training is reported in terms of
 number of parameter updates (left) and in terms of processed
 positive examples (right).}\label{dimanalysisRW:fig}
\end{figure*}

On all data sets we can observe
 slightly faster convergence with higher dimension in terms of number of training
 examples but generally slower convergence with dimension when
 considering the number of parameter updates.   On the recommendations data sets and for the precision quality
  measure on the stochastic blocks data we can see that the peak
  quality increases with the dimension.   
 This means that lower
  dimension is more effective in reaching a
  particular lower quality level but may reach lower peak quality.
  This supports methods (such as our LSH based refinements) that can leverage coarser, lower
  quality, but much more efficient to compute embeddings to
  accelerate the training of more complex models.

\ignore{
\begin{figure}[H]
  \center
\caption{Training (\ind\ $b=64$) with different dimensions on
  $10^4\times 10^4$ stochastic blocks. Cosine gap and precision
  for $k=10$.  Training is reported in terms of
 number of parameter updates (left) and in terms of processed
 positive examples (right).}\label{sblocksdimanalysis:fig}
\end{figure}
}

\ignore{
\begin{figure}[H]
  \center
\includegraphics[width=0.22\textwidth]{_movielen_without_bias_gap__batch_64_agg.pdf}
\includegraphics[width=0.22\textwidth]{_movielen_without_bias_precision_10_best_context_batch_64_agg.pdf}
\includegraphics[width=0.22\textwidth]{_amazon_without_bias_gap__batch_64_agg.pdf}
\includegraphics[width=0.22\textwidth]{_amazon_without_bias_precision_10_best_context_batch_64_agg.pdf}
  \ignore{  
\includegraphics[width=0.24\textwidth]{movielen1m_gap_affect_dim_ind_batch_64_agg.pdf}
\includegraphics[width=0.24\textwidth]{movielen1m_precision_affect_dim_ind_batch_64_agg.pdf}
\includegraphics[width=0.24\textwidth]{amazon_gap_affect_dim_ind_batch_64_agg.pdf}
\includegraphics[width=0.24\textwidth]{amazon_precision_affect_dim_ind_batch_64_agg.pdf}}
\caption{Training  (\ind\ with $b=64$) with different
  dimensions. Cosine gap (left) and Precision for $k=10$ (right) for 
 \movielens\ (top) and \amazon\ (bottom). }\label{dimanalysisRW:fig}
\end{figure}
}
\ignore{
\begin{figure}[H]
  \center
\includegraphics[width=0.22\textwidth]{_dims_f_c_synthetic_10000_10000_10_0p70_10000000__gap_batch_64_agg.pdf}
\includegraphics[width=0.22\textwidth]{_dims_f_c_synthetic_10000_10000_10_0p70_10000000_best_context_precision_batch_64_agg.pdf}\\
\includegraphics[width=0.22\textwidth]{_dims_f_c_synthetic_10000_10000_100_0p70_10000000__gap_batch_64_agg.pdf}
\includegraphics[width=0.22\textwidth]{_dims_f_c_synthetic_10000_10000_100_0p70_10000000_best_context_precision_batch_64_agg.pdf}
  \ignore{  
\includegraphics[width=0.24\textwidth]{gap_synthetic_10000_10000_10_0p70_10000000_affect_dim_ind_batch_64_agg.pdf}
\includegraphics[width=0.24\textwidth]{precision_synthetic_10000_10000_10_0p70_10000000_affect_dim_ind_batch_64_agg.pdf}
\includegraphics[width=0.24\textwidth]{gap_synthetic_10000_10000_100_0p70_10000000_affect_dim_ind_batch_64_agg.pdf}
\includegraphics[width=0.24\textwidth]{precision_synthetic_10000_10000_100_0p70_10000000_affect_dim_ind_batch_64_agg.pdf}
}
\caption{Training (\ind\ $b=64$) with different dimensions on
  $10^4\times 10^4$ Stochastic blocks. Cosine gap (left) and precision
  for $k=10$ (right).  Number of blocks $B=10$ (top)
  and $B=100$ (bottom).}\label{sblocksdimanalysis:fig}
\end{figure}
}

\newpage
  \section{Minibatch Size Sweep} \label{minibatchsize:sec}

We trained using \ind\ arrangements with minibatches with size
parameters $b=\{1,4,16,64,256\}$ in order to understand how
performance depends on minibatch size.
Recall that the parameter value $b$ is the number of positive
examples, so the actual minibatch size in terms of the total number of
examples is $\lambda b$, where
$\lambda$ (we used $\lambda=10$) is the number of negative examples
per postive one.
When sweeping the minibatch size we used the learning rate $\eta=0.02$
and dimension $d=50$ as in our main experiments.
On our synthetic and real datasets and both measures there was no
performance difference between different parameters.  Therefore, the gains of our arrangement
methods hold also with respect to \ind\ arrangements with very
small minibatches.

  \newpage
\section{Learning Rate Sweep} \label{learningrate:sec}

We trained with a fixed learning rate to facilitate a better
comparison of different arrangement methods and used $\eta=0.02$ in our
reported results. 
Figure~\ref{lrsweep:fig} shows the performance with \ind\ arrangement
for varied fixed learning rates between $\eta=0.01$ and $\eta=0.15$.  As expected, we can see faster
initial training with higher learning rate but (for the cosine gap
measure) lower final quality.  The relative behavior of different
methods, in particular, the effectiveness of \coo\ arrangements
earlier in the training, was similar across learning rates.
Figure~\ref{lr0_1:fig} reports representative results for $\eta=0.01$.
\begin{figure}[htb!]
  \center
{\small Cosine gap}\\
\includegraphics[width=0.22\textwidth]{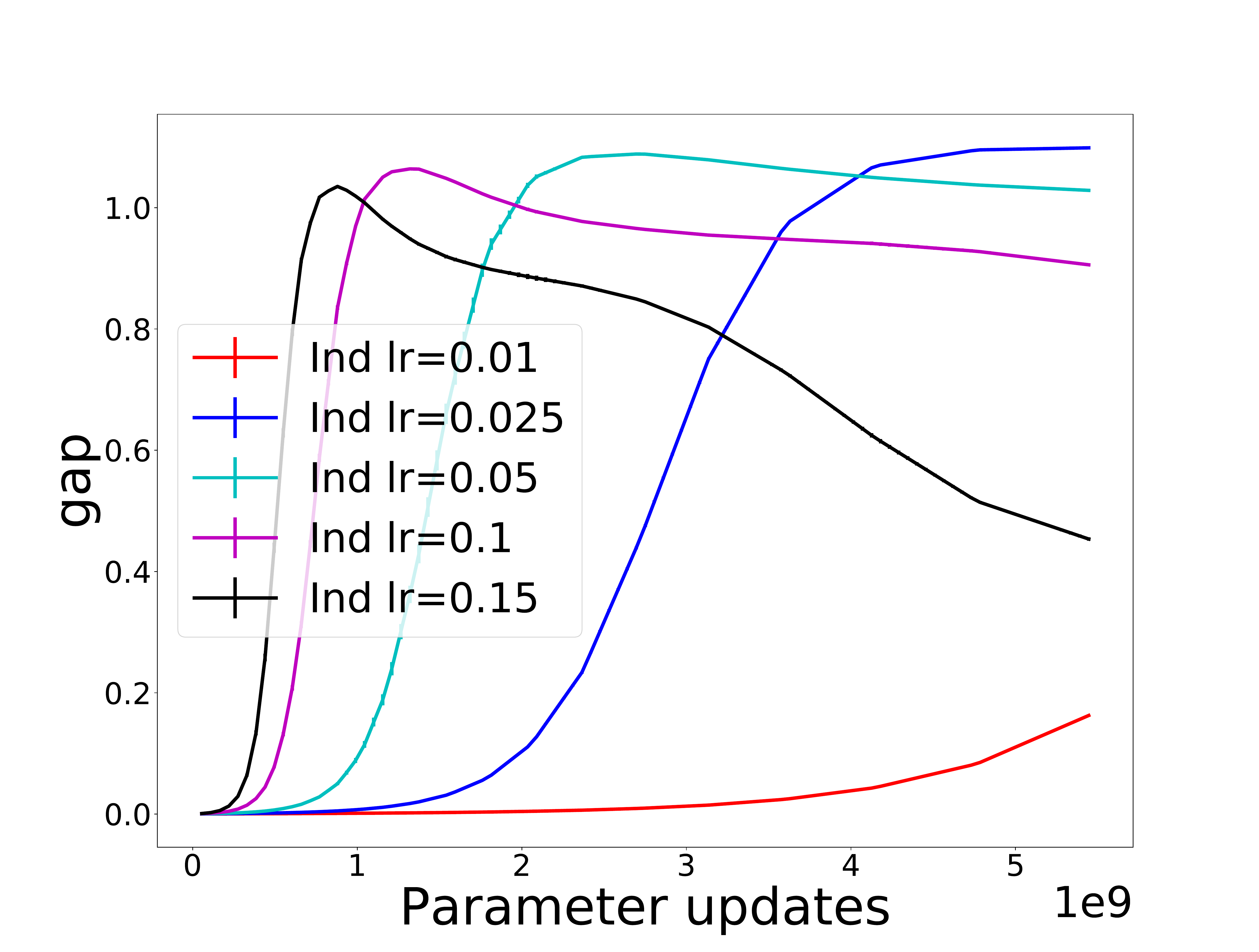}
\includegraphics[width=0.22\textwidth]{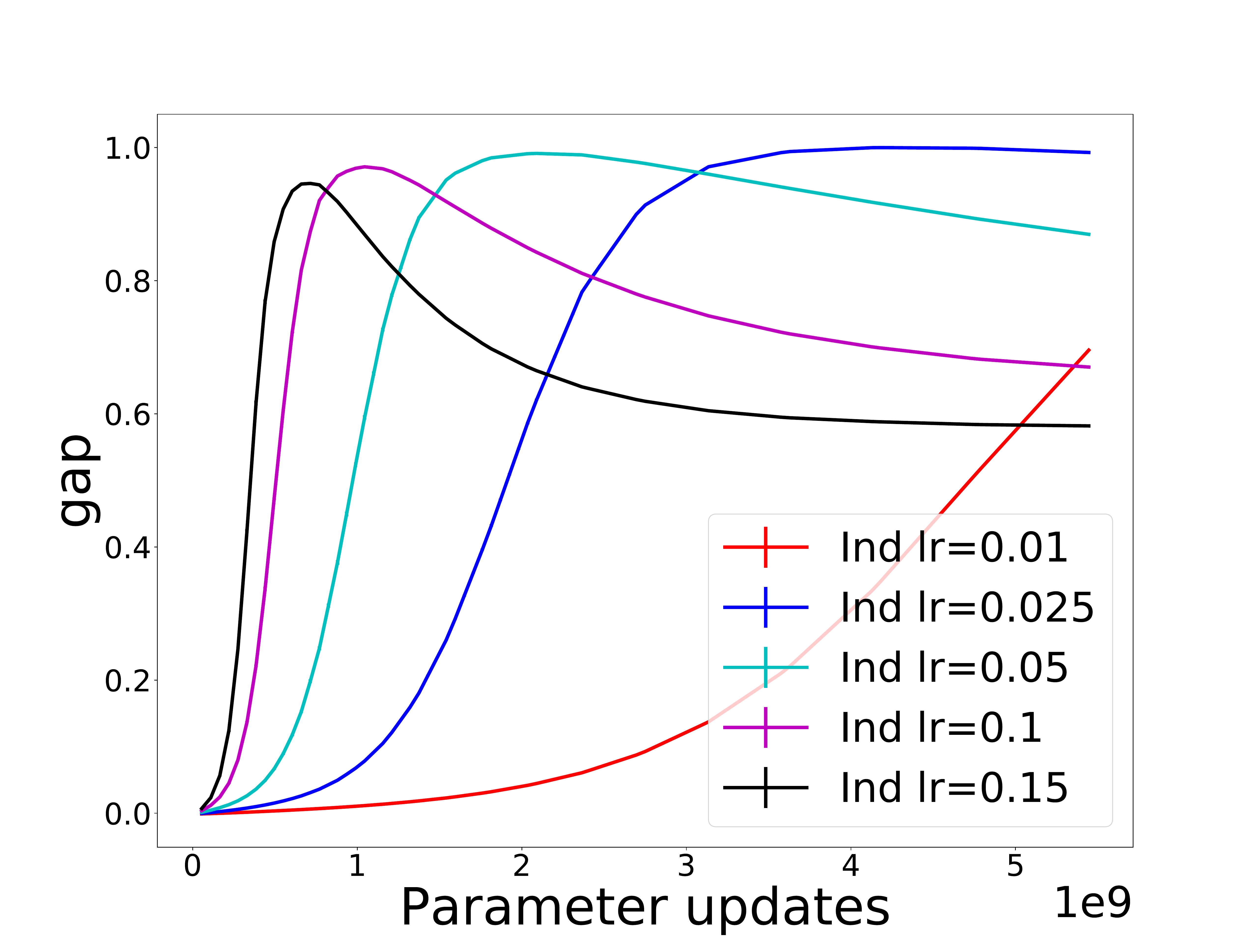}
{\small Precision at $k=10$}\\
\includegraphics[width=0.22\textwidth]{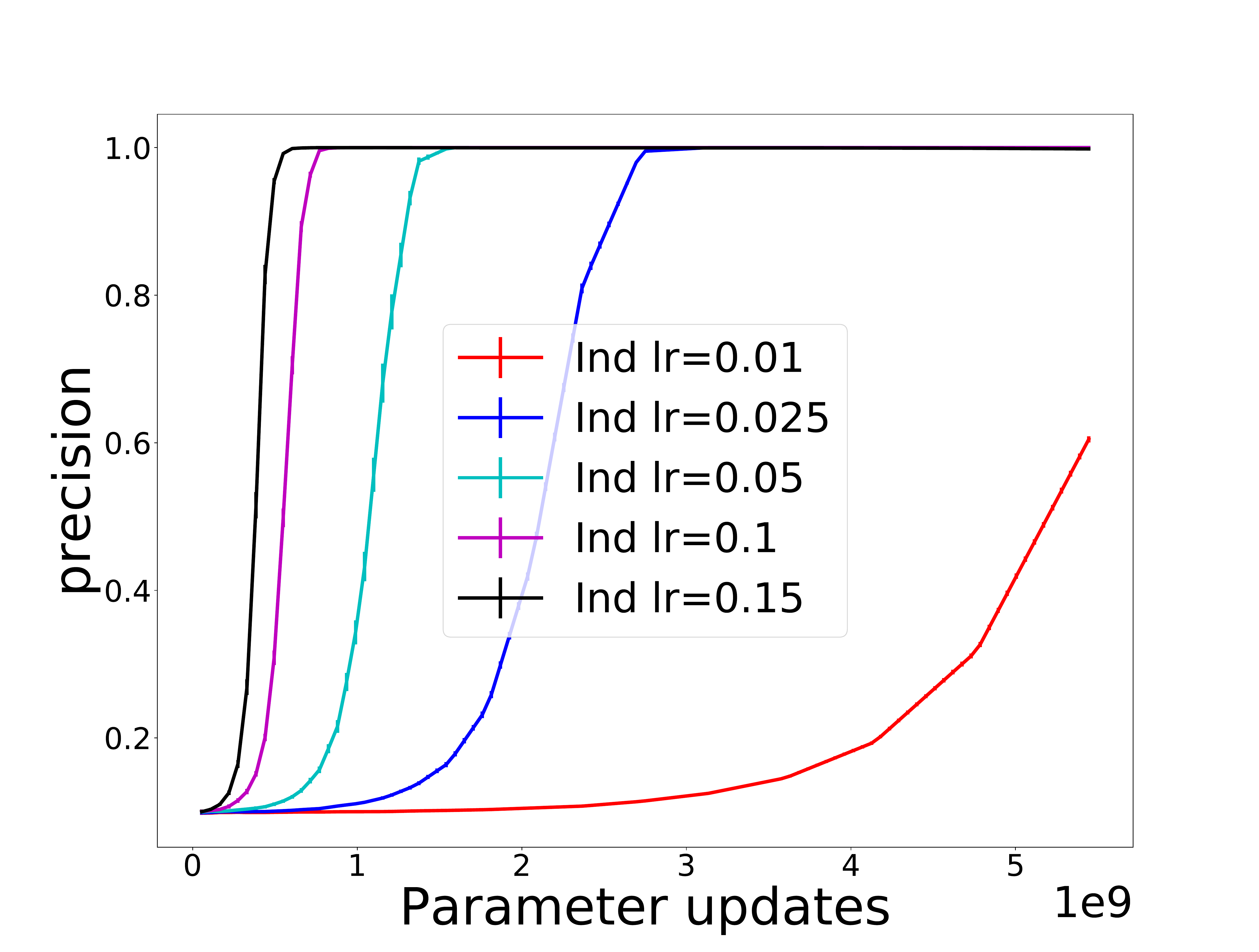}
\includegraphics[width=0.22\textwidth]{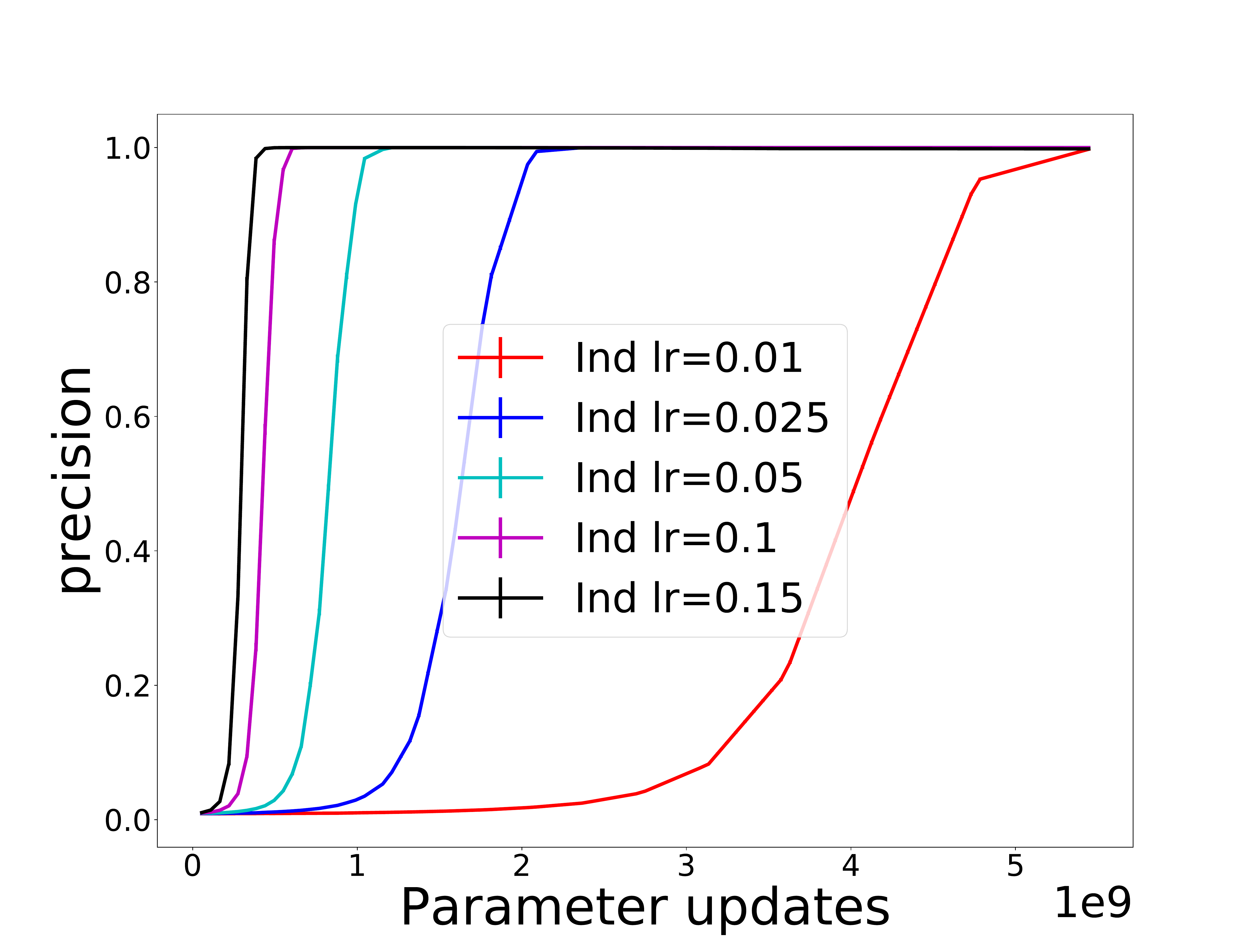}
\caption{Cosine gap and Precision for $k=10$  with learning rate sweep (\ind\ with $b=64$, $d=50$) for
  stochastic blocks with $B=10$ (left) and $B=100$ (right).}\label{lrsweep:fig}
\end{figure}

\begin{figure}[htb]
  \center
{\small Cosine gap}\\
\includegraphics[width=0.22\textwidth]{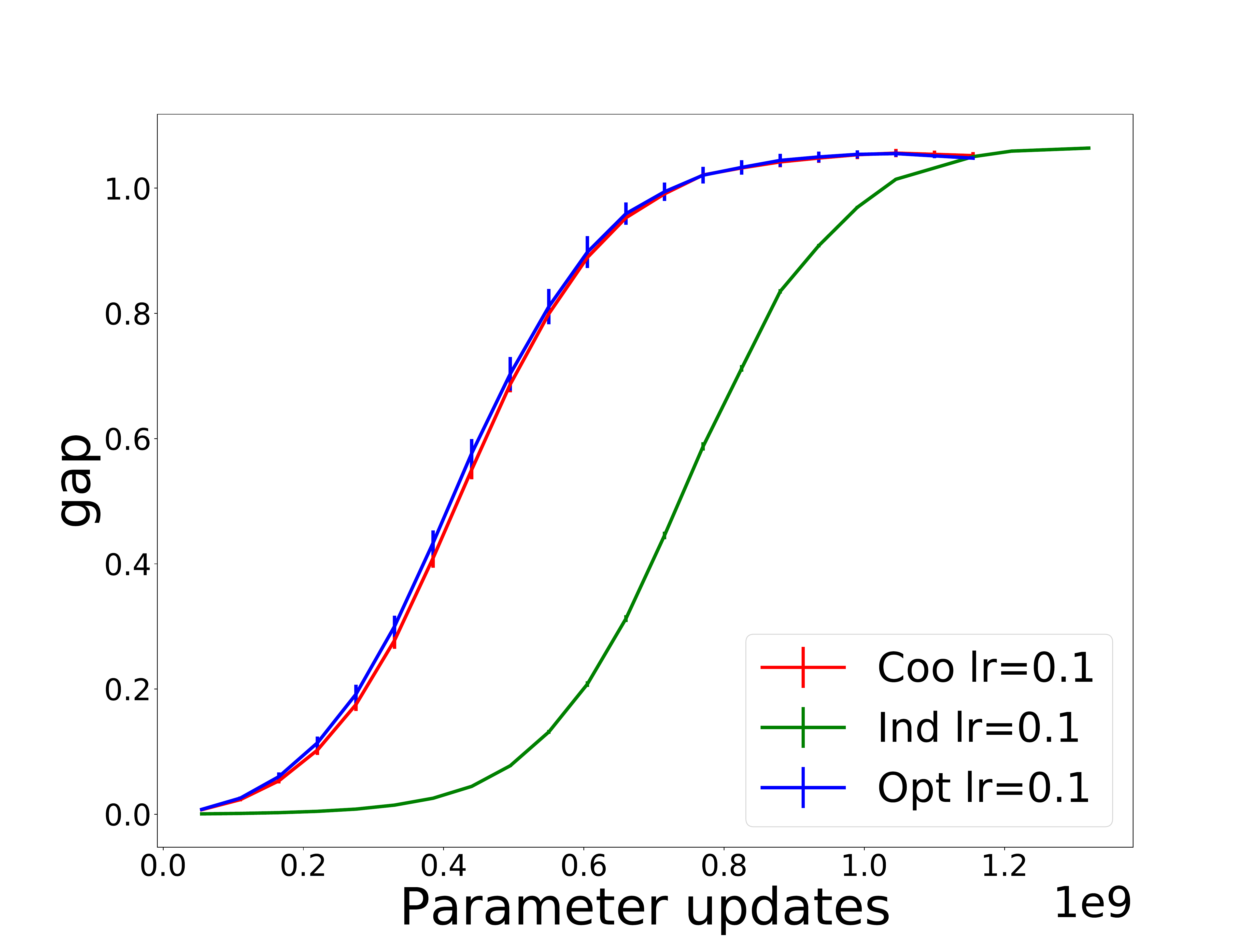}
\includegraphics[width=0.22\textwidth]{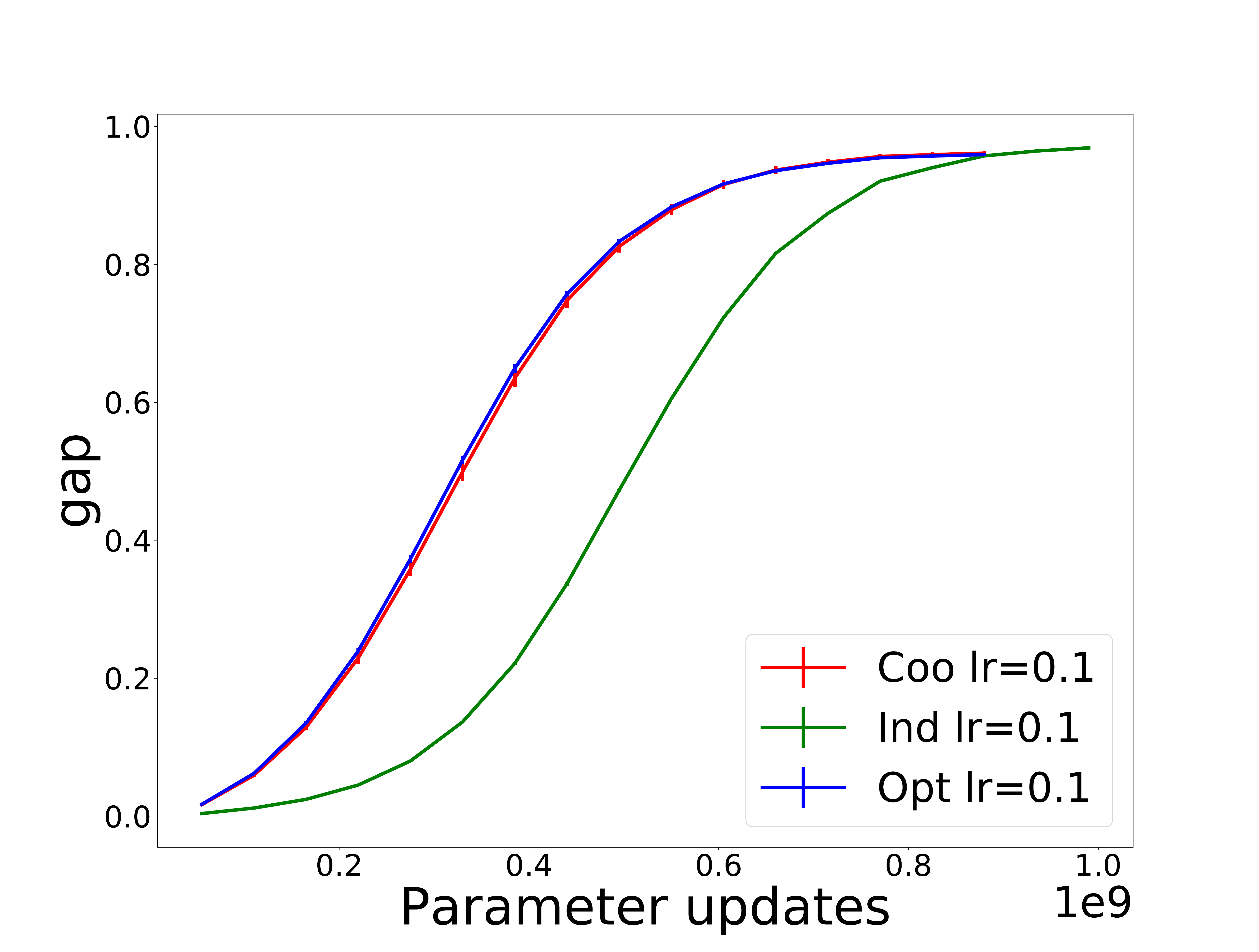}
{\small Precision at $k=10$}\\
\includegraphics[width=0.22\textwidth]{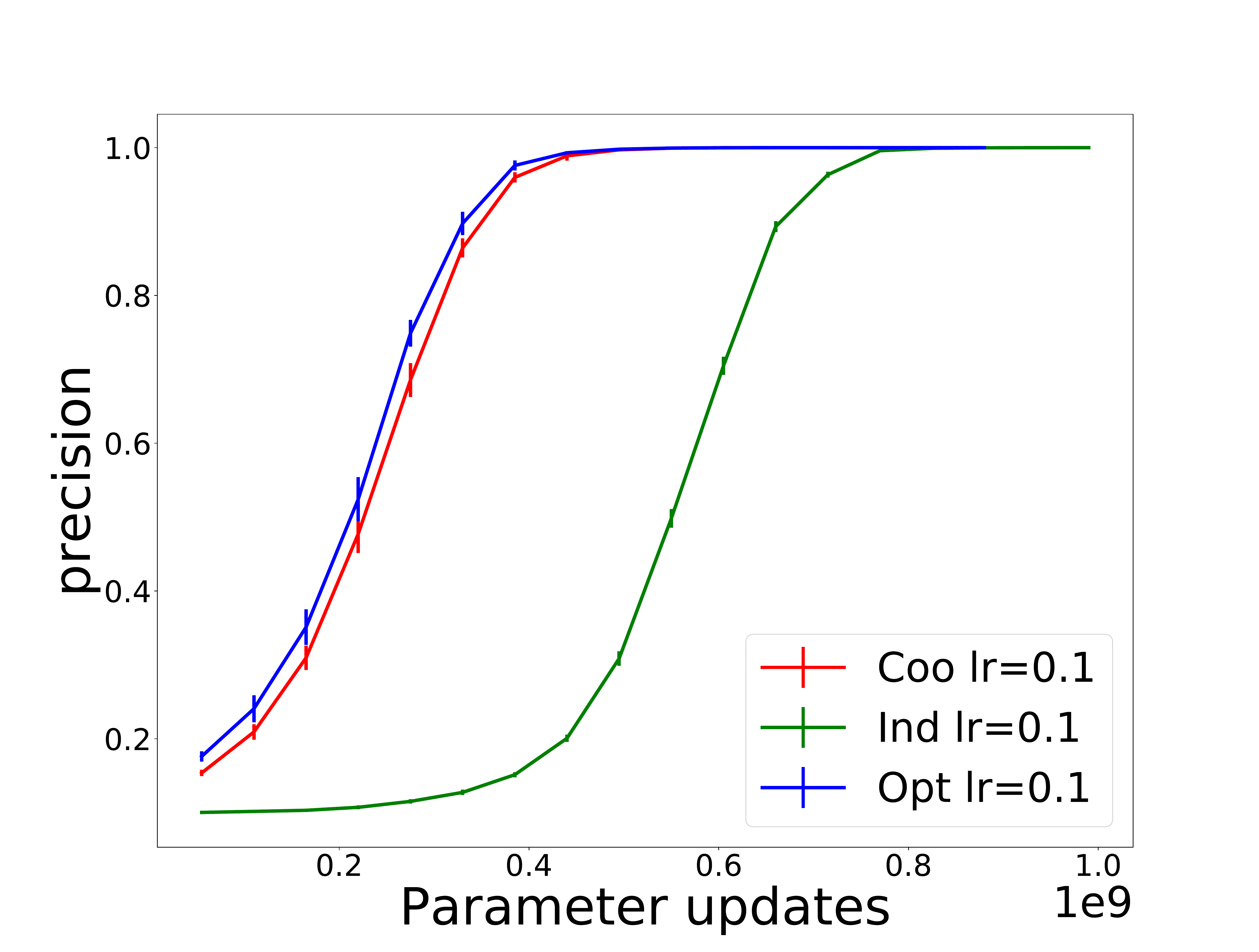}
\includegraphics[width=0.22\textwidth]{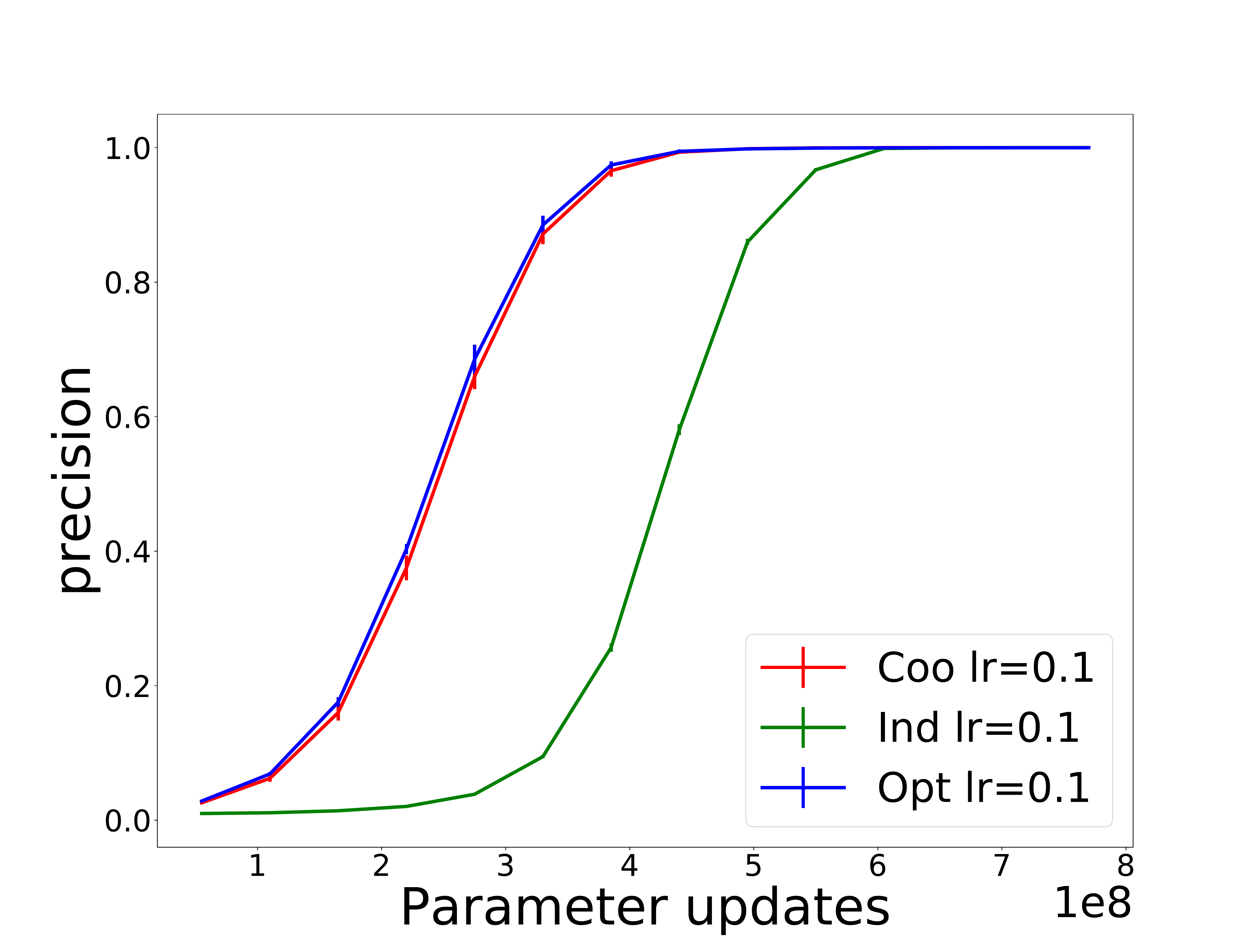}
\caption{Cosine gap and precision with \ind, \coo, and \opt\
  arrangements with learning rate $\eta=0.01$ ($b=64$, $d=50$) for
  stochastic blocks with $B=10$ (left) and $B=100$ (right).}\label{lr0_1:fig}
\end{figure}

  \newpage 

\section{Training with the Bias Parameter} \label{biasparam:sec}

The bias parameter can be viewed as an entry appended to embedding
vectors, effectively increasing the dimension to $d+1$.  The embedding $\vecf_i$ of focus entities is augmented by a
fixed valued entry of $1$ and context embedding vectors $\vecc$ are
augmented with a trainable bias parameter $b_j$.

When training with the bias parameter, we updated all parameters in
one-sided manner as described, except that the bias terms (that
can only be updated for context vectors) were updated in a two-sided
manner.
Specifically, in minibatches with context designations we update the
bias term as the full context vector is updated.  In minibatches
with focus designation (where focus embeddings are updated) we also
update the bias terms (only) of the context vectors.  We maintain the
way we match negative updates to positive updates on a per-parameter
basis.  When a bias term is updated (in a positive example) we balance
it with $\lambda$ ``antigravity'' negative updates (only performed on the bias
parameter) against a set of random focus vectors.  Therefore, focus
designated minibatches use a set of random context vectors for
negative updates of focus embeddings and a set of random focus vectors
for negative updates of bias terms of context embeddings.

When training we packed smaller same-designation microbatches into
minibatches or partitioned
large microbatches into consecutive minibatches.  Without bias terms,
with our one-sided updates the partitioning of a microbatches did not affect the end result as
effectively all updates were independent.  With
bias terms on focus designations the updates are applied after each
minibatch and were not independent.  We found that applying all
updates after each minibatch (instead of only applying them at the
effective end of a microbatch) was generally helpful.

Following practice, when using embedding vectors in our 
cosine gap and precision performance measures we use the vectors without the
bias terms. We confirmed that this practice yields smoother and better
results also on our datasets.  Moreover,  for the precision measure we
show results when seeking the closest context vector to a given focus
vector. This way, the result is the same whether we use embedding
vectors with the bias parameter or not.   We confirmed that working
this way (designating the context entity to be the one selected for a
query focus entity) yields smoother and more accurate results.

\section{Two-sided Training} \label{twosided:sec}

We used one-sided training to facilitate our coordinated arrangements
and negative pairing and reported results with all arrangement
methods, including the baseline \ind, using one-sided training.  For
completeness, we report here respective results also with respect to
the more standard baseline of independent
arrangements with two-sided training.

Results are reported in 
Figure~\ref{2sided_blocks:fig} for stochastic blocks data and in
Figure~\ref{2sided_realdata:fig} for the two recommendation datasets.
The training cost ($x$-axis) is in terms of the total
number of parameter updates performed, noting that
two-sided training performs double the updates per example than one-sided training.
We observe that two-sided \ind\ performs very similarly
and in some cases slightly worse than
one-sided \ind\ and therefore the training gains of our coordinated
arrangements hold also with respect to the two-sided  baseline.

\begin{figure}[t!] 
\center
\includegraphics[width=0.23\textwidth]{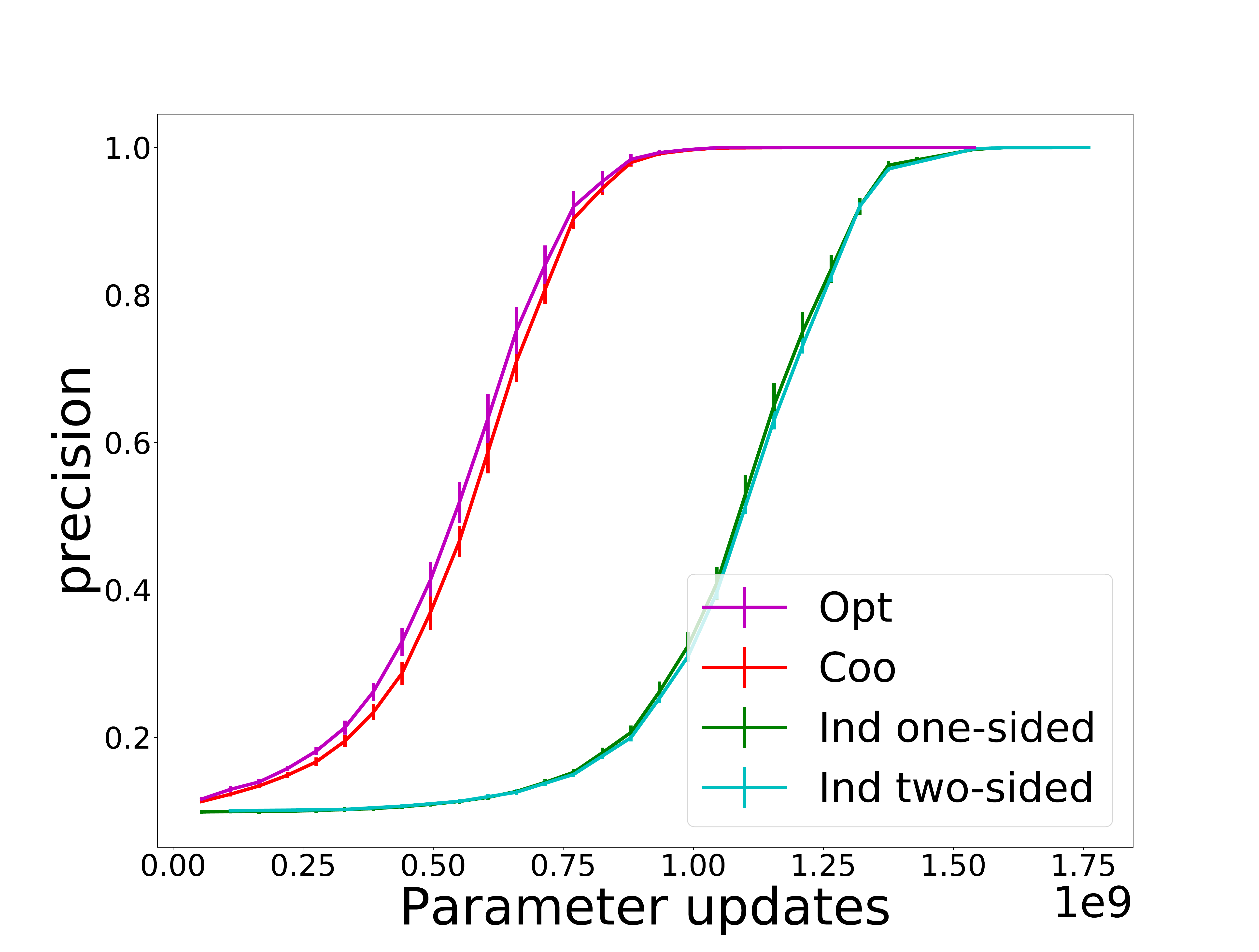}
\includegraphics[width=0.23\textwidth]{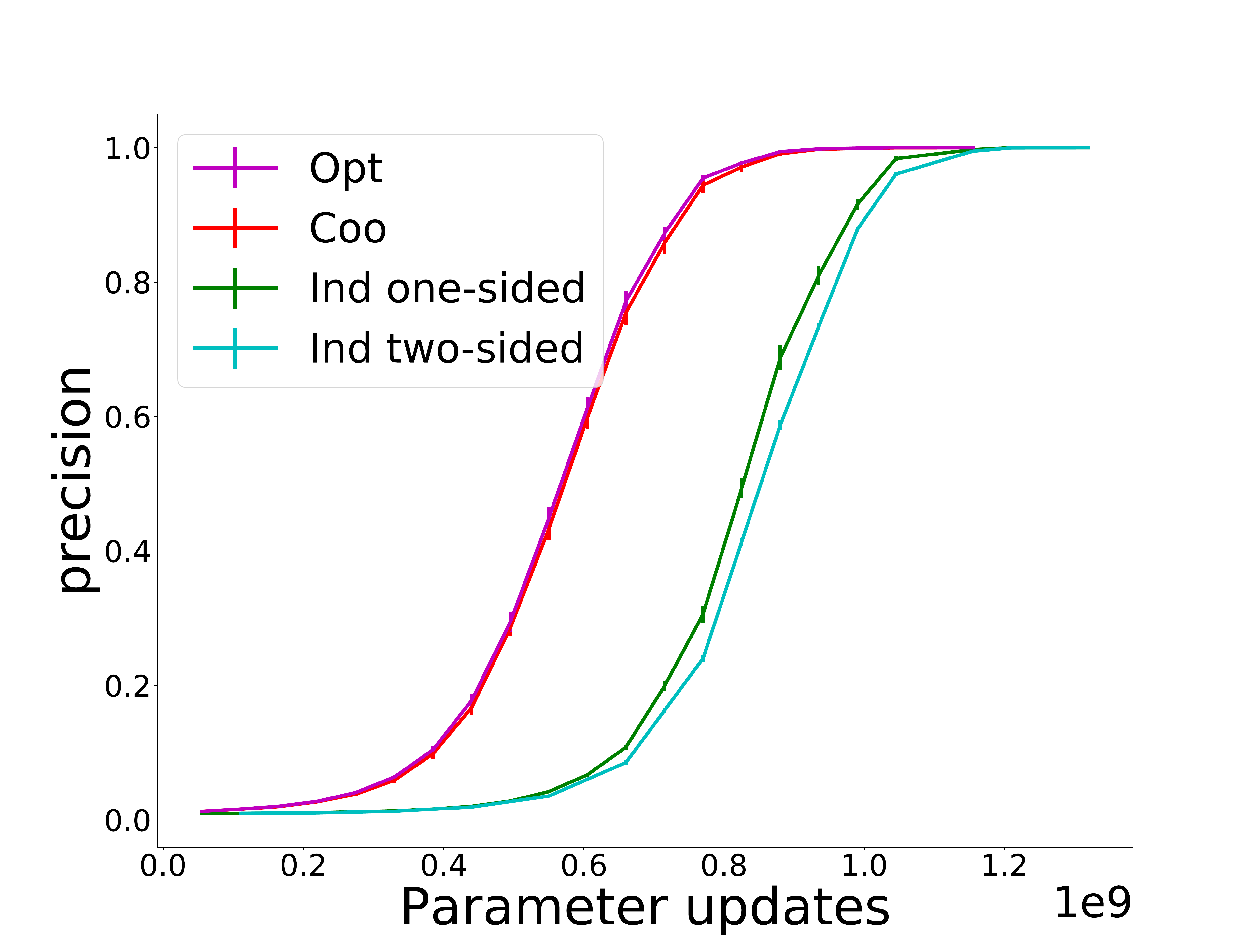}
\includegraphics[width=0.23\textwidth]{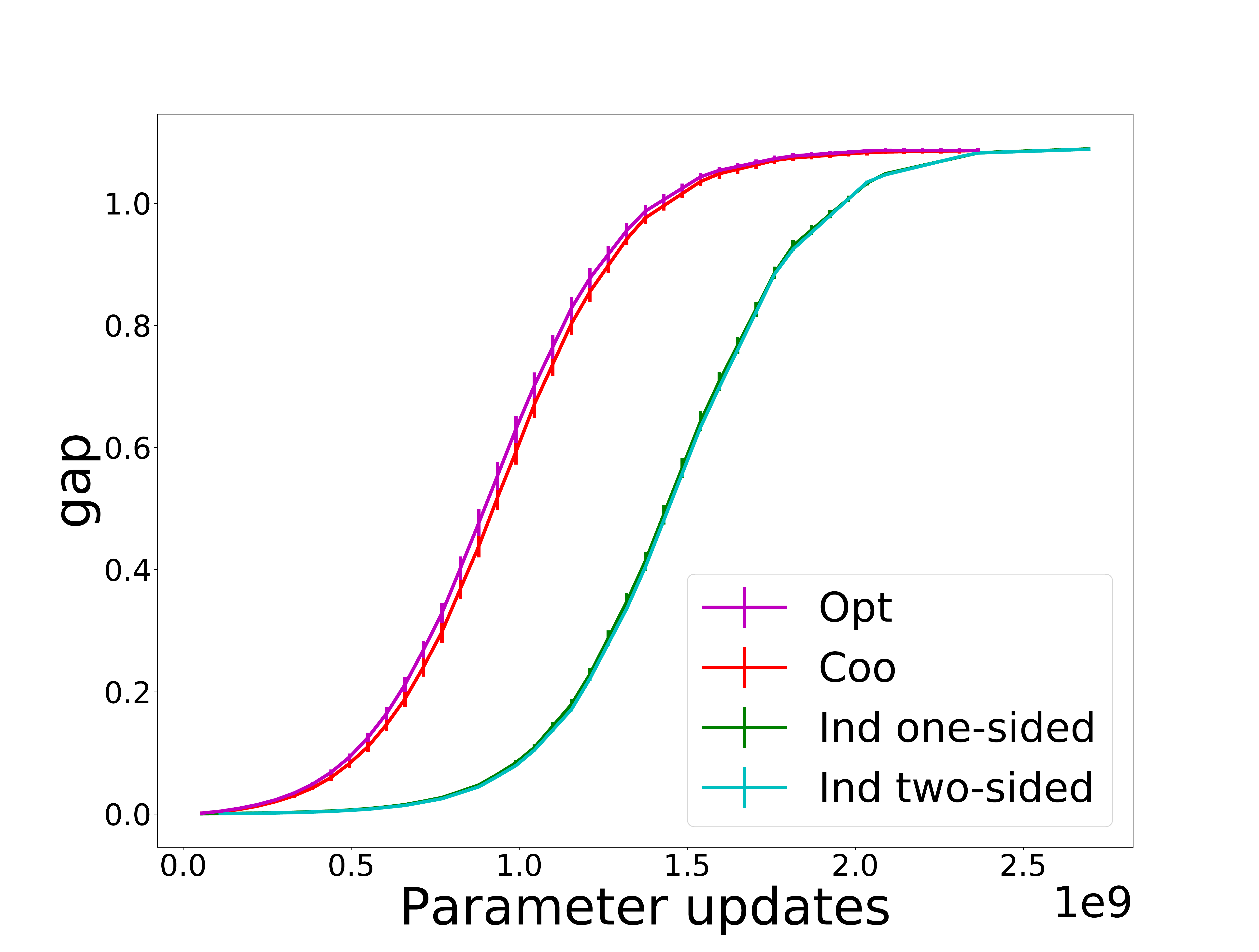}
\includegraphics[width=0.23\textwidth]{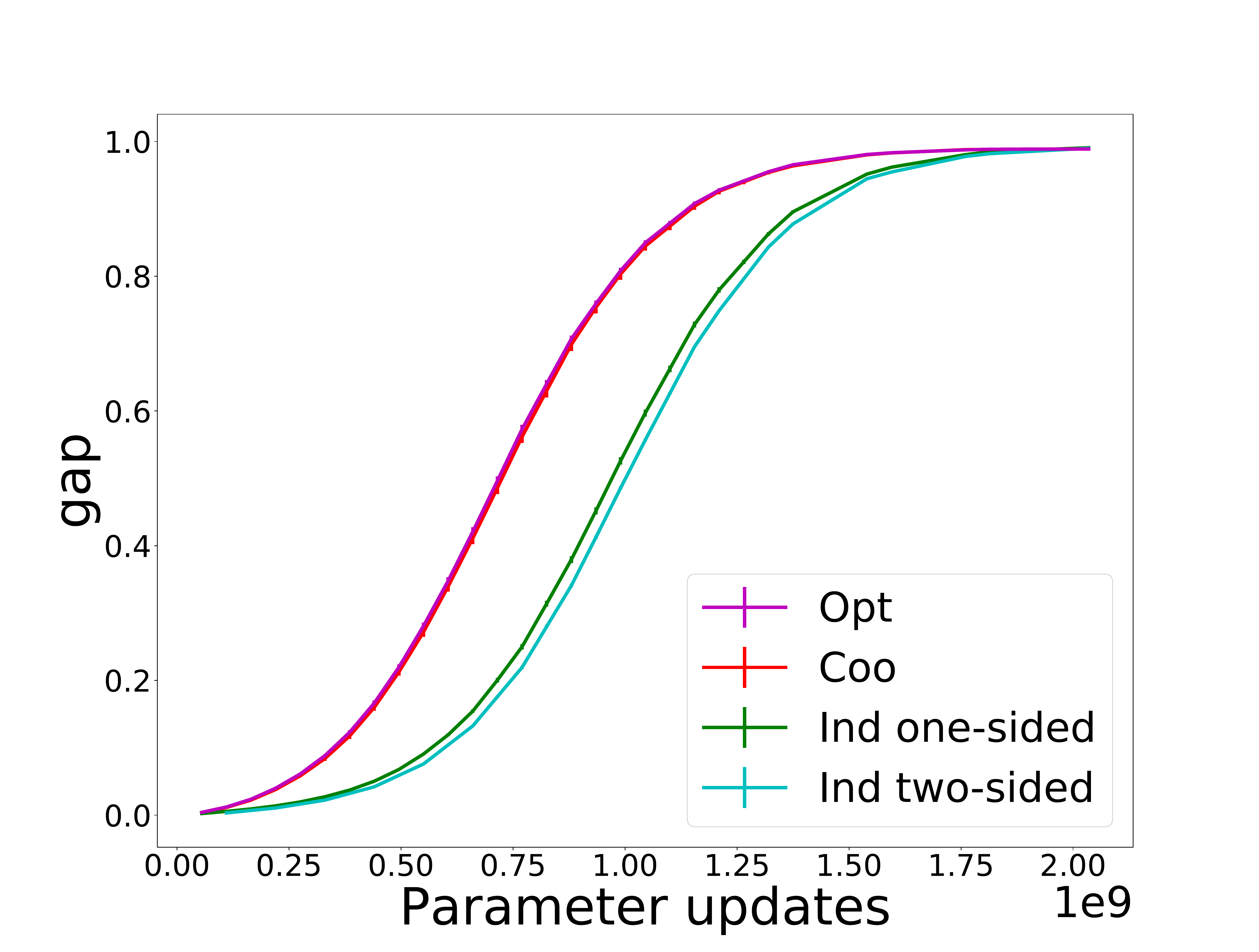}
\caption{Precision at $k=10$ (top) and Cosine gap (bottom)
 with \coo, \opt, \ind\ one-sided, and \ind\ two-sided training
($d=50$, $b=64$) for stochastic blocks 
  with $B\in\{10,100\}$. }\label{2sided_blocks:fig}
\end{figure}

\begin{figure}[H] 
  \center
\includegraphics[width=0.23\textwidth]{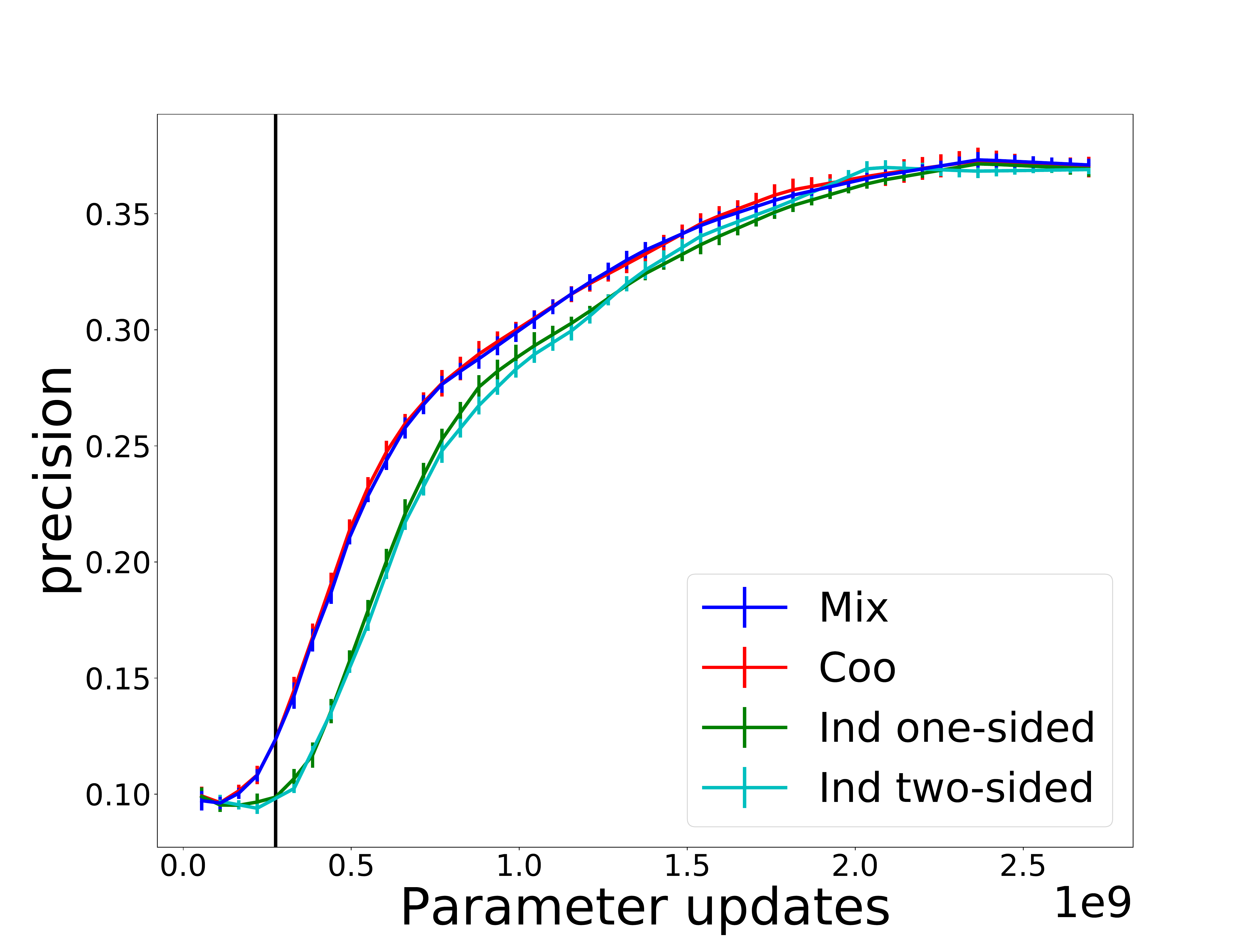}
\includegraphics[width=0.23\textwidth]{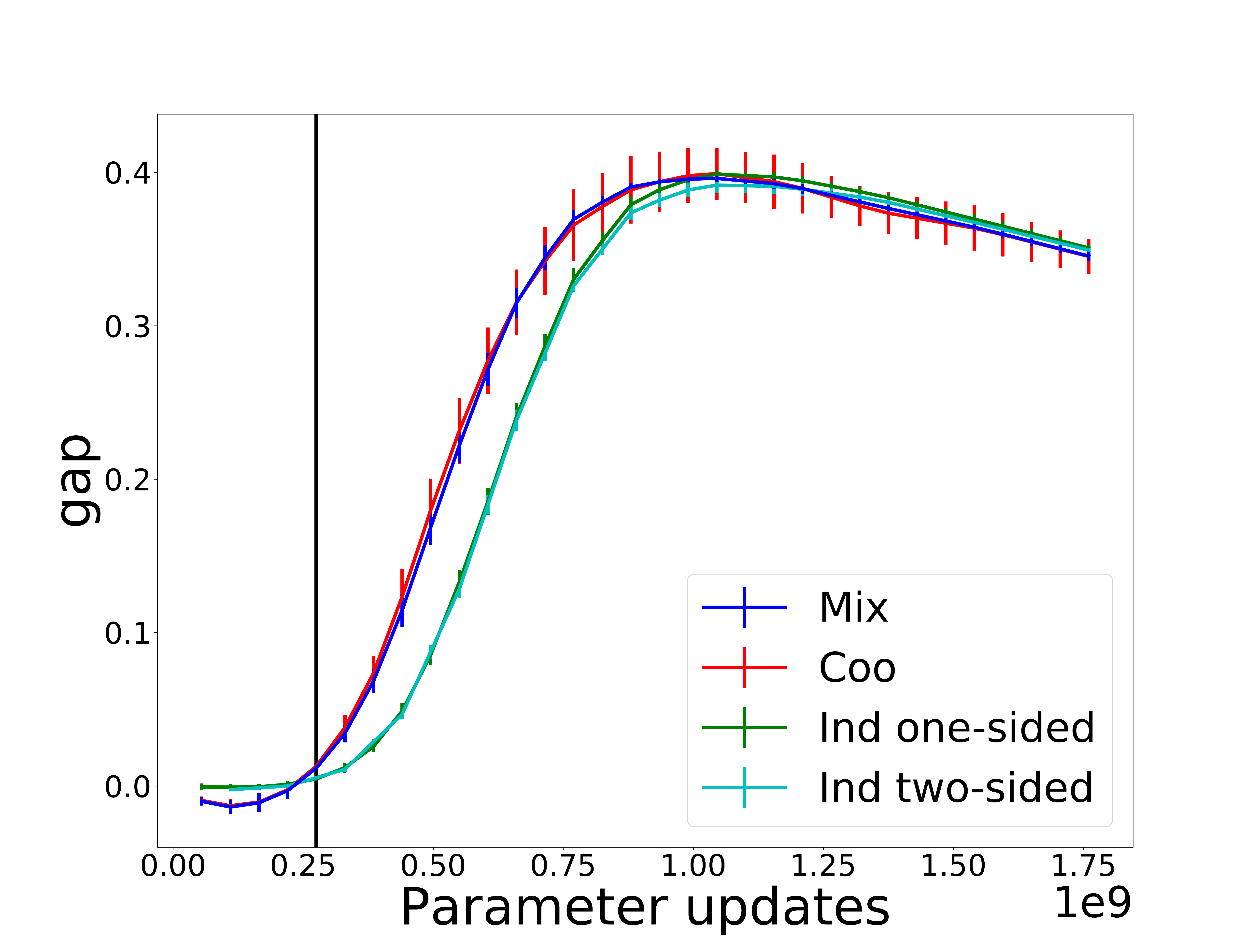}
\includegraphics[width=0.23\textwidth]{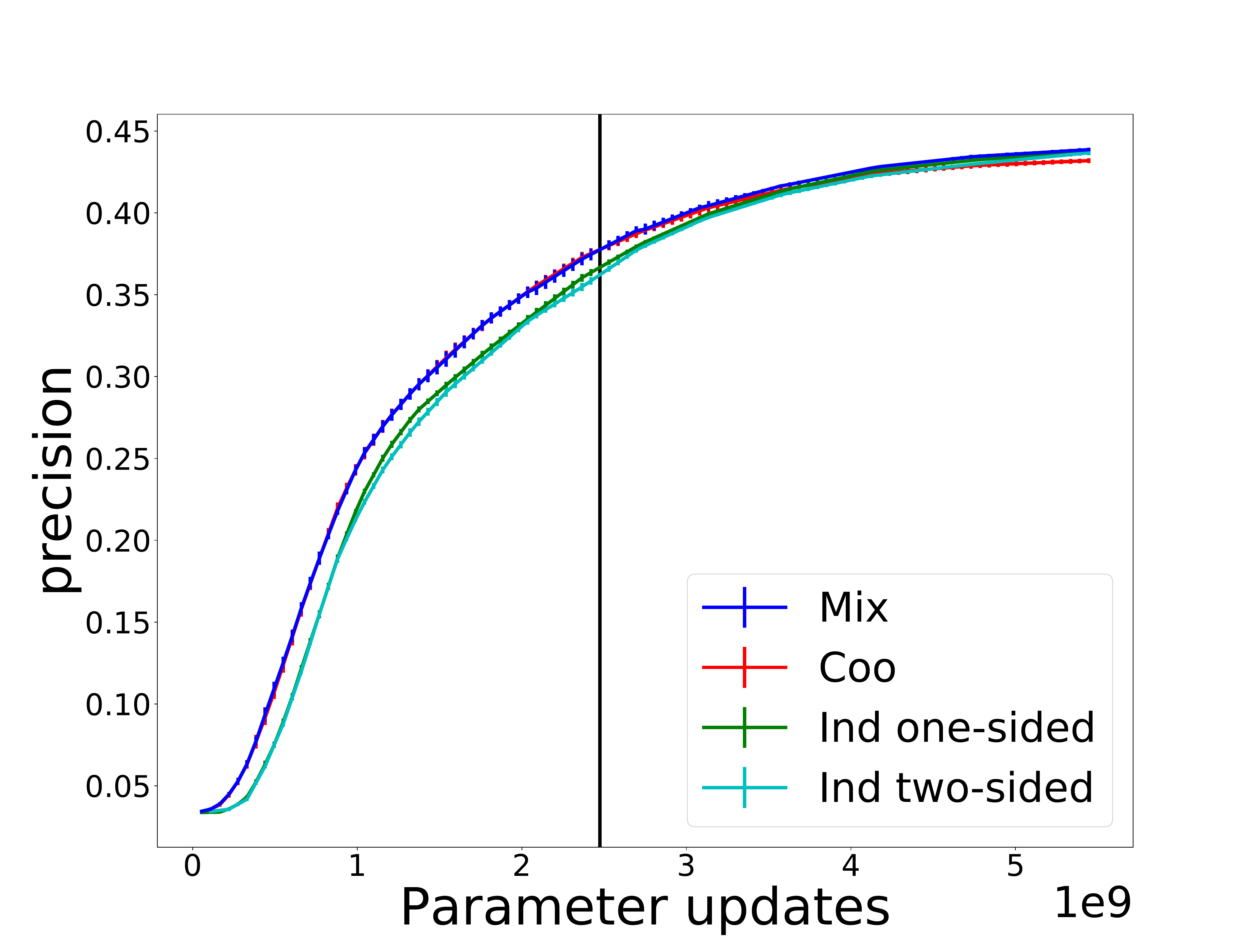}
\includegraphics[width=0.23\textwidth]{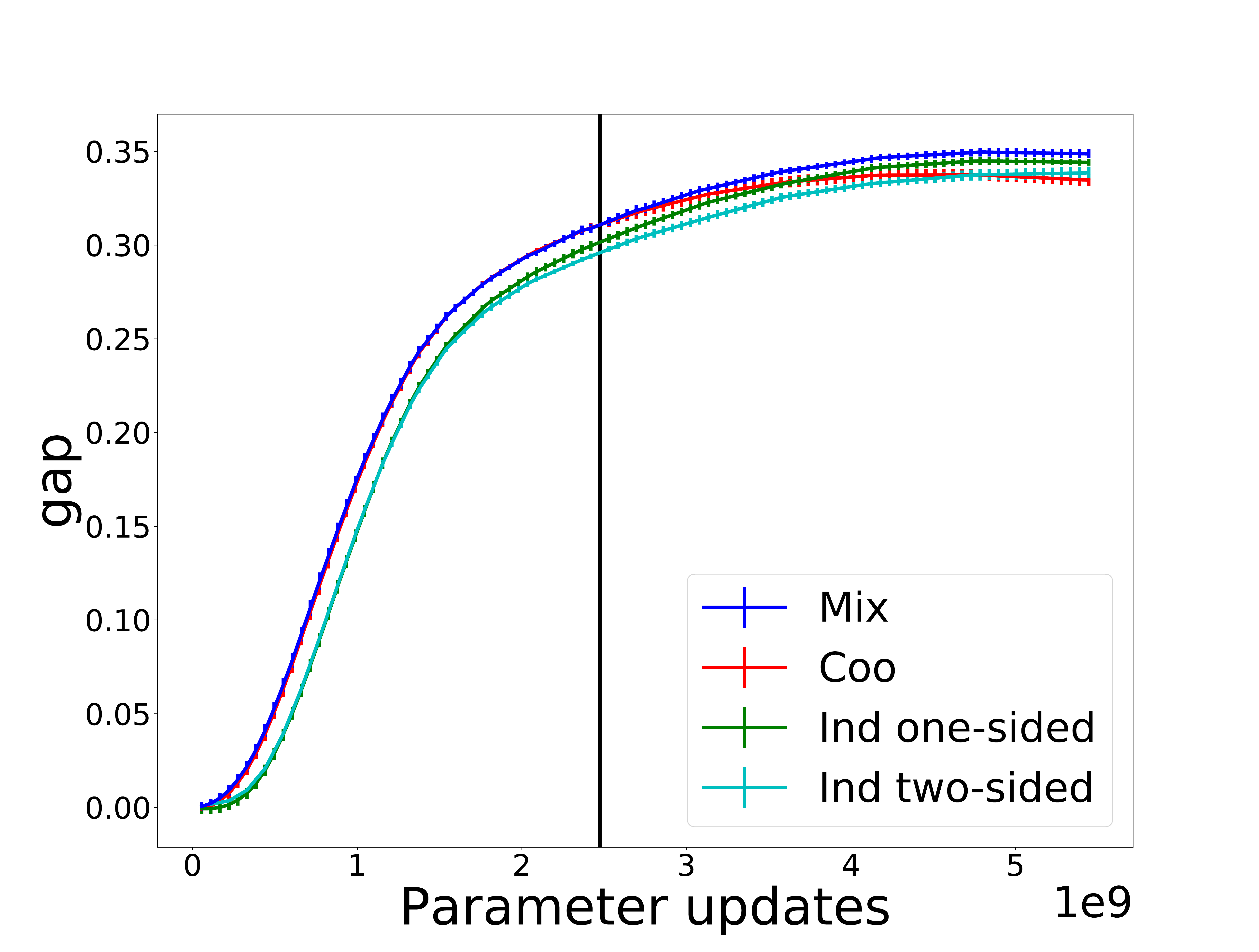}
\caption{Precision at $k=10$ (left) and cosine gap (right) 
  with \coo, \mix, and \ind\ (one-sided and two-sided) arrangements on \movielens (top) and \amazon
  (bottom) datasets.
($d=50$, $b=64$). The
  vertical lines indicate the switch point of \mix\ (from \coo\ to  (one-sided) \ind.}
\label{2sided_realdata:fig}
\end{figure}

  \end{document}